\documentclass{article}
\usepackage[nonatbib,preprint]{neurips_2026}

\usepackage[T1]{fontenc}
\usepackage{microtype}
\usepackage{hyperref}
\usepackage{url}
\usepackage{booktabs}
\usepackage[numbers]{natbib}
\usepackage{amsmath}
\usepackage{amssymb}
\usepackage{mathtools}
\usepackage{amsthm}
\usepackage{adjustbox}
\usepackage[table]{xcolor}
\usepackage{tikz} % For drawing the heatmap placeholder
\usepackage{graphicx}
\graphicspath{{./}{../}}
\usepackage{pifont}
\usepackage{enumitem}
\usepackage{cleveref}
\usepackage{placeins}

\newcommand{\cmark}{\textcolor{green!60!black}{\ding{51}}}
\newcommand{\xmark}{\textcolor{red!70!black}{\ding{55}}}
\usepackage{multicol}
\usepackage{multirow}
\usepackage{wrapfig}
\usepackage[breakable,skins]{tcolorbox}
\usepackage{arydshln}

\definecolor{ForestGreen}{RGB}{34, 139, 34}
\definecolor{BrickRed}{RGB}{178, 34, 34}

\definecolor{VSBlue}{HTML}{1F77B4}
\definecolor{CoGreen}{HTML}{2CA02C}
\definecolor{PopGreen}{HTML}{1B5E20}
\newcommand{\vs}{\textcolor{VSBlue}{\textbf{Verbalized Sampling}}}
\newcommand{\cotrain}{\textcolor{CoGreen}{\textbf{Co-Training}}}
\newcommand{\popcotrain}{\textcolor{PopGreen}{\textbf{Population Co-Training}}}

\theoremstyle{plain}
\newtheorem{theorem}{Theorem}[section]
\newtheorem{proposition}[theorem]{Proposition}
\newtheorem{lemma}[theorem]{Lemma}
\newtheorem{corollary}[theorem]{Corollary}
\newtheorem{remark}[theorem]{Remark}
\theoremstyle{definition}
\newtheorem{definition}[theorem]{Definition}
\newtheorem{assumption}[theorem]{Assumption}

\definecolor{darkblue}{rgb}{0, 0, 0.5}
\hypersetup{colorlinks=true, citecolor=darkblue, linkcolor=darkblue, urlcolor=darkblue}

\title{One Frozen Simulator Is Not Enough: \\ Simulator Collapse in Multi-Agent RL}

\author{%
  \textbf{Simon Yu}\textsuperscript{1} \quad
  \textbf{Nicholas Tomlin}\textsuperscript{2} \quad
  \textbf{Marwa Abdulhai}\textsuperscript{3} \quad
  \textbf{Ximing Lu}\textsuperscript{4} \quad
  \textbf{Derek Chong}\textsuperscript{5} \\
  \textbf{Abe Hou}\textsuperscript{5} \quad
  \textbf{Dilara Soylu}\textsuperscript{5} \quad
  \textbf{Sergey Levine}\textsuperscript{3} \quad
  \textbf{Christopher D. Manning}\textsuperscript{5} \quad
  \textbf{Weiyan Shi}\textsuperscript{1} \\
  \textsuperscript{1}Northeastern University \quad
  \textsuperscript{2}New York University \quad
  \textsuperscript{3}UC Berkeley \\
  \textsuperscript{4}University of Washington \quad
  \textsuperscript{5}Stanford University
}

\newcommand{\wyshi}[1]{\textcolor{orange}{[wyshi: #1]}}
\newcommand{\simon}[1]{\textcolor{blue}{[simon: #1]}}
\newcommand{\derek}[1]{\textcolor{violet}{[derek: #1]}}
\newcommand{\dilara}[1]{\textcolor{green}{[dilara: #1]}}
\newcommand{\abe}[1]{\textcolor{cyan}{[abe: #1]}}
\newcommand{\cm}[1]{\textcolor{teal}{[CM: #1]}}
\newcommand{\nt}[1]{\textcolor{olive}{[NT: #1]}}

\renewcommand{\wyshi}[1]{}
\renewcommand{\simon}[1]{}
\renewcommand{\derek}[1]{}
\renewcommand{\dilara}[1]{}
\renewcommand{\abe}[1]{}
\renewcommand{\cm}[1]{}
\renewcommand{\nt}[1]{}

\begin{document}

\maketitle

\begin{abstract}
% \wyshi{do we need to change the title? VS is "one simulator" now? I kinda like the title, but it's just not very precise given the new framing}
% \begin{abstract}
%But how well do policies trained against such simulated users generalize to real ones? 
Multi-agent reinforcement learning for human-AI interaction typically relies on a single large language model to simulate user behavior. We show that this approach systematically fails to generalize, and trace the failure to \textbf{simulator collapse}: because the simulator LLM is mode-collapsed, an LLM policy trained against it overfits to narrow strategies that exploit the simulator's dominant mode, and such a policy transfers poorly to unseen simulators and real users. We formalize this collapse theoretically and propose two complementary solutions, one at inference time and one at training time. The inference-time solution, \textit{Verbalized Sampling}, broadens the simulator's behavior by sampling from a verbalized response distribution, reducing mode collapse. The training-time solution, \textit{Co-Training}, jointly optimizes the policy against a population of trainable simulators, preventing it from overfitting to any single simulator's mode. We validate both solutions on three multi-turn benchmarks: Persuasion for Good, $\tau^2$-bench, and CooperBench. Verbalized Sampling improves held-out success by up to 9\% over single-simulator RL, and Co-Training pushes gains further to 14\%; the human study shows similar gain on real users. Both solutions preserve the policy diversity
% \wyshi{should we use policy diversity? my monkey brain just feels "entropy" is a bad thing and we want to lower it. but maybe RL reviewers will understand policy entropy is actually a good thing?}
that collapses under single-simulator RL. To support further work in this direction, we release \textbf{SCOPE}, an open-source framework for Population Co-Training multi-agent RL. More broadly, our results %challenge the common practice of a frozen simulator and
suggest that the diversity of the \textit{training environment}, not only the policy, is critical to the generalization of RL to real-world deployment\footnote{We released the code at \url{https://github.com/CHATS-lab/scope_usim}}. %More broadly, our results suggest that the diversity of the \textit{training environment}, not only the policy, is critical to whether multi-turn RL for human-AI interaction generalizes to real-world deployment.
%The inference-time solution is \textit{Verbalized Sampling}: sampling from a verbalized response distribution broadens the simulator's behavior and reduces its mode collapse. The training-time solution is \textit{Co-Training}: jointly training the policy against a population of trainable simulators prevents the policy from overfitting to any one simulator's mode. 
\vspace{-2em}
\end{abstract}
%%SL.4.19: Generally the abstract makes sense to me, but the technical bit in the middle (that discusses the root cause etc.) doesn't feel quite precise enough to me. My sense is that we should either make it more precise, e.g., by more fully explaining why using a single simulator is bad, or go the other way and just make it more an empirical observation, and leave the more formal justification to the main paper. Otherwise it has this awkward in-between where it's not precise enough to be fully formal yet implies that there is a clear conceptual reason (vs just an empirical reason) to expect this to be true.
% --> RESOLVED (2026-04-23): rewrote the root-cause sentence as an empirical observation ("In our experiments, prompt engineering, swapping simulators, and per-simulator diversity techniques do not close the gap; what does is composing multiple simulators..."), leaving formal justification for Section 3.3.

\begin{figure*}[ht!]
\centering
\includegraphics[width=\textwidth]{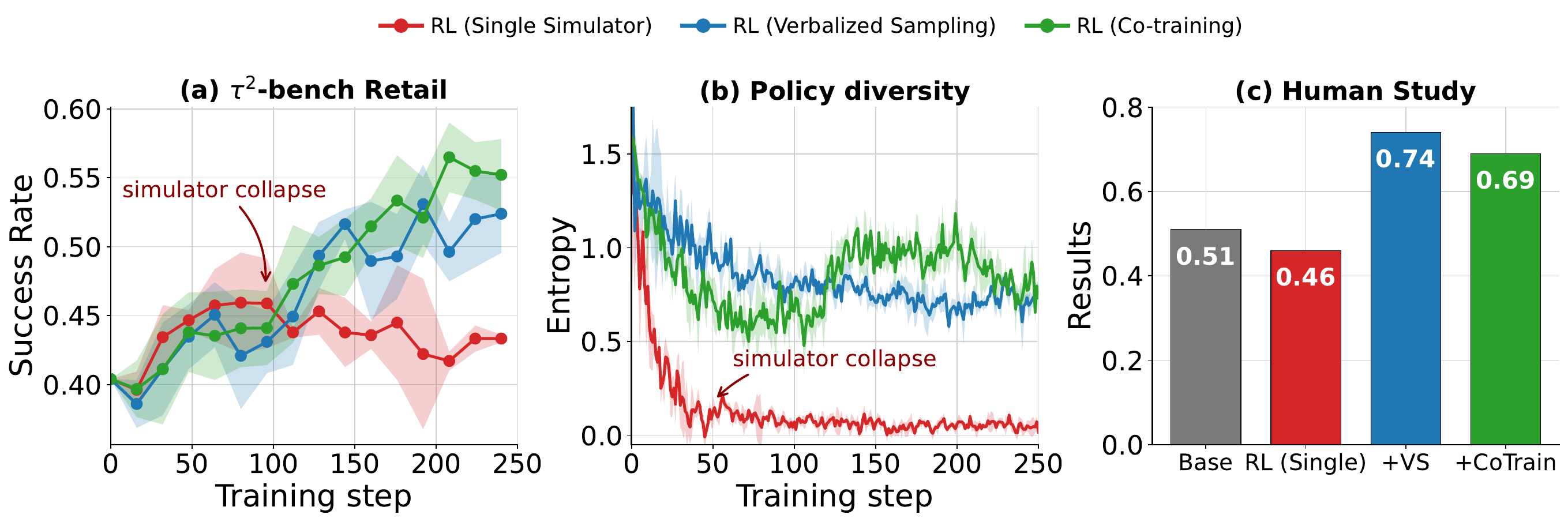}
\caption{\textbf{Single-simulator RL collapses; \vs{} and \cotrain{} recover.} $\tau^2$-bench (Qwen3-4B-Instruct).
\textbf{(a)} Held-out generalization: RL against a single frozen simulator peaks early, then starts collapsing.
\textbf{(b)} Policy entropy: the same recipe drives the policy's entropy to near zero.
\textbf{(c)} Human study: both fixes lift real-user performance over RL (Single), which drops below even the untrained baseline.
% Both Verbalized Sampling (inference-time) and Co-Training (training-time) close most of the held-out gap and preserve entropy.
\vspace{-1em}
}
\label{fig:overview}
\end{figure*}

\section{Introduction}

% [wyshi 4.27] "lean more into multi-turn — long-horizon interactions make the problem more severe" --> RESOLVED: added explicit multi-turn framing in P1 and P3.
% [wyshi] "repetitive sentence with the abstract" --> RESOLVED: differentiated intro phrasing from abstract
% [wyshi] "there is more related work on this" --> RESOLVED: added more citations
Reinforcement learning with verifiable rewards has driven rapid progress on single-turn LLM capabilities, from mathematical reasoning~\citep{guo2025deepseek, liu2025understandingr1zeroliketrainingcritical, yu2025dapoopensourcellmreinforcement} to tool use~\citep{feng2025retoolreinforcementlearningstrategic, jin2025searchr1trainingllmsreason} to software engineering~\citep{yang2025swesmithscalingdatasoftware, wei2025swerladvancingllmreasoning}. These settings share a loop: generate an answer, verify it against the environment, and update the policy. Toward more realistic tasks beyond single-agent verifiable rewards, a growing line of work studies multi-turn human-AI interaction~\citep{qian_userrl_2025, zhou_tom-swe_2025, wu_humanlm_nodate}: customer support~\citep{barres_2-bench_2025}, collaborative coding~\citep{khatua_cooperbench_2026, wang2026position}, persuasion~\citep{wang2019persuasion}, and tutoring~\citep{abdulhai_consistently_2025}. But training RL with real users at scale is prohibitively expensive and slow, so prior work has turned to LLM-based user simulators \cite{10.1145/3586183.3606763, park2026llmagentsgroundedselfreports}.

% \wyshi{"however, it is prohibitively expensive  to train RL with Real users at scale, so people have shift to employing user simulators. " should we move it here to make it more logical?} \wyshi{with the user simulator,} the loop changes shape.  The policy is no longer interacting with a deterministic environment, but another LLM agent, whose behavior shapes every state the policy visits. %Real users are prohibitively expensive to recruit at RL training scale \wyshi{should we move this sentence to}, so the field 
% \wyshi{But past work has mostly}
% has converged on a standard to build simulators with xxx: {a single frozen LLM,  prompted to simulate the user}~\citep{anthis2025llmsocialsimulationspromising,  zhao_mua-rl_2025, sun_training_2025}.

A common method in recent work is to prompt a single frozen LLM to play the user \cite{qian_userrl_2025,yu_sotopia-rl_2025}. In this work, we show this practice can systematically fail to generalize. In multi-turn RL \citep{abdulhai2023lmrlgymbenchmarksmultiturn}, the policy no longer interacts with a deterministic verifier; the simulated user \emph{becomes} part of the training environment, and its output distribution decides which states the policy visits and what gradient signals it receives. We identify a systematic failure mode of this recipe, which we call \textbf{simulator collapse}. Because many aligned LLM simulators are mode-collapsed~\citep{jiang_artificial_2025, gxchen2025klregularized, zhang2025verbalizedsamplingmitigatemode}, a policy trained against a single frozen simulator receives gradients dominated by that simulator's modal behavior, and overfits to narrow strategies that exploit that mode~\citep{macdiarmid2025natural}. The error compounds across dialogue turns, so a policy trained this way fails when transferred to unseen simulators, and is unlikely to transfer to real users.

{First, in Section~\ref{sec:theory}, we formalize \emph{simulator collapse}.} A mode-collapsed simulator does not necessarily make the policy gradient vanish; it biases the gradient toward the simulator's mode. Repeated policy updates then rank the policy's trajectories by how well they exploit that mode, concentrating probability on a narrow exploit set.
% [wyshi] "what's an agent in this context?" --> RESOLVED: "agent trajectories" -> "the policy's trajectories" (consistent with the prior paragraph's "policy" terminology).
% [wyshi] "if we don't use GRPO, then do we still have the problem?" --> RESOLVED: removed "group-relative policy updates" framing entirely. Theory: Theorem 1 (gradient bias) generalizes to any policy gradient method; the geometric concentration corollary is GRPO-specific but the broader bias is not. Intro now states the general mechanism; algorithm-specific details deferred to Section 3 (sec:theory). This also drops the "within-task reward contrast" jargon that was unclear at intro stage.
This explains the rapid policy-entropy drop we observe in training. The resulting policy performs well in-distribution but fails on unseen simulators whose responses contain behaviors the training simulator didn't produce.
% [wyshi] "what's in-simulator, can you find a more precise word" --> RESOLVED: "in-simulator" -> "in-distribution"; "held-out users" -> "other simulators"; "mode-collapsed simulator" -> "training simulator" (cleaner phrasing).

{Two solutions follow from the theory, at different phases of the training loop: inference-time and training-time (Section~\ref{sec:cotrain}).} First, \emph{Verbalized Sampling}~\citep{zhang2025verbalizedsamplingmitigatemode} is the inference-time solution. At each simulator turn during rollout, the simulator is queried for a verbalized response distribution and a response is sampled from it, restoring within-simulator diversity without retraining. Second, \emph{Co-Training} is the training-time solution. We update the user simulator alongside the policy on the same conversation~\citep{liu_selfredteam_2025, liu_spiral_2025}. The two sides then co-evolve: the simulator's mode at each history shifts as the policy improves, so the strategy that exploited the existing mode no longer wins against the future one. The policy therefore faces a partner that is \emph{evolving} across training.
% [wyshi] "what root cause?" --> RESOLVED: replaced "the same root cause" with "simulator collapse" (the named phenomenon from the prior paragraph).
% [wyshi] "phases" / ": inference-time, and training-time" / "First," / "second," --> RESOLVED inline.
% [wyshi] "slightly odd sentence" on the static-target line --> RESOLVED: dropped redundant trailing sentence; the "varied / evolving" line already conveys it.
% [wyshi] "better connection on why we release this framework" + "maybe move this sentence to the contribution" --> RESOLVED: dropped the SCOPE mention from this paragraph entirely. SCOPE was being introduced redundantly (the contributions list ~10 lines below already mentions it); removing the first mention resolves the "out of the blue" feeling and the missing-connector problem in one move.

{Section~\ref{sec:experiments} validates both solutions across three settings: Persuasion for Good~\citep{wang2019persuasion} (adversarial dialogue), $\tau^2$-bench~\citep{barres_2-bench_2025} (collaborative tool-calling), and CooperBench~\citep{khatua_cooperbench_2026} (collaborative software engineering). Across all three benchmarks, Population Co-Training reaches the highest held-out task success and preserves policy entropy.}
% [wyshi] "both solutions?" / "diverse" / "describe the settings in more detail?" --> RESOLVED: "the fix" -> "both solutions"; kept "diverse"; added a one-phrase category tag for each benchmark (adversarial dialogue / collaborative tool-calling / collaborative software engineering).

% [wyshi] "people may wonder about the cost..." (paragraph-length comment) --> RESOLVED: addressed in Appendix~\ref{appendix:limitations_only} as a fourth caveat with a "structural cost vs contingent inefficiency" framing, plus a forward-pointer at the end of Section 3.4. Theory's "cannot" softened to "is unlikely to" per Theorem 1's actual scope.

The main contributions can be summarized as follows:
\begin{enumerate}[topsep=1pt, itemsep=1pt, parsep=0pt, leftmargin=*]
    \item \textbf{Identifying simulator collapse.} We formalize how a mode-collapsed user simulator biases the policy gradient toward its mode and collapses the policy's entropy onto a narrow simulator-specific exploit. This defines a structural failure mode of RL against a single frozen simulator (\S\ref{sec:theory_analysis}).

    \item \textbf{Two solutions at different points in the training loop.} \emph{Verbalized Sampling}~\citep{zhang2025verbalizedsamplingmitigatemode} is the inference-time solution. At each simulator turn during rollout, the simulator is queried for a verbalized response distribution and one response is sampled from it, restoring within-simulator diversity without retraining either side. \emph{Co-Training} is the training-time solution. We jointly update the policy and a trainable user simulator on the same rollouts, so the partner adapts as the policy improves. We release \textbf{SCOPE}, an open-source framework that unifies multi-model rotation, self-play, and dual-model Co-Training behind a single pluggable interface (\S\ref{sec:cotrain}).

    \item \textbf{Empirical validation across three multi-agent RL settings.} On Persuasion for Good, $\tau^2$-bench, and CooperBench, single-simulator RL drops back toward the untrained baseline. Both Verbalized Sampling and Co-Training close most of the held-out gap, and Population Co-Training yields the strongest held-out task success rate (\S\ref{sec:experiments}). We additionally run a human study on $\tau^2$-bench and Persuasion for Good, where Co-Training improves task outcome and both methods improve P4G dialogue naturalness over single-simulator RL (Appendix~\ref{appendix:human_study}).
% [wyshi] "is VS single-simulator RL?" --> RESOLVED locally: rephrased to "RL against a single frozen simulator" in this bullet to disambiguate. Global term "single-simulator RL" (13 occurrences elsewhere) kept as paper-internal shorthand for the Table 1 "RL (Single)" baseline.
% [wyshi] insert "RL" in third bullet --> RESOLVED: "multi-turn settings" -> "multi-agent RL settings" (also matches the title's "Multi-Agent RL" subtitle per user direction).
% [wyshi] insert "both" + "rate" in third bullet --> RESOLVED inline.
\end{enumerate}

\section{Background}
\label{sec:background}

% [NT resolved] \nt{Should this be a Dec-POMDP?} and \nt{Shouldn't the two agents have different observations?} -- clarified per-setting taxonomy (POSG for P4G/tau^2, Dec-POMDP for CooperBench) and justified the unified POMDP abstraction via joint-trajectory invariance.
\paragraph{Multi-agent RL as a POMDP.} We model multi-turn dialogue as a two-player partially observable Markov decision process (POMDP) with a shared conversation-history state. Of the three settings we study, Persuasion for Good and $\tau^2$-bench are partially observable stochastic games (POSGs): the user simulator has private observations (goal, persona) the agent does not see, and only the agent receives task reward. CooperBench is the Dec-POMDP case 
% \wyshi{what's Dec?}\simon{this refers to Decentralized, meaning they share the goal but not the context}
: two cooperating coding agents share a task-success reward. We use the unified POMDP abstraction with shared history because the theory in \S\ref{sec:theory_analysis} only depends on the joint trajectory distribution, which is invariant to how private observations are partitioned. At each turn $t$, the state $s_t = (o_0, a_0^\pi, a_0^\phi, \ldots, o_{t-1}, a_{t-1}^\pi, a_{t-1}^\phi)$ is the full conversation history. The agent samples its utterance $a_t^\pi \sim \pi_\theta(\cdot \mid s_t)$; the user simulator then samples a response $a_t^\phi \sim \phi_\psi(\cdot \mid s_t, a_t^\pi)$. A trajectory $\tau = (s_0, a_0^\pi, a_0^\phi, \ldots, s_T)$ has terminal reward $R(\tau)$, and the agent maximizes
\begin{equation}
J(\theta; \psi) = \mathbb{E}_{\tau \sim (\pi_\theta, \phi_\psi)}[R(\tau)].
\end{equation}
The state-visitation distribution $d^{\pi_\theta}_\psi(s) = \sum_{t} \Pr(s_t = s \mid \pi_\theta, \phi_\psi)$ is jointly determined by the agent and the simulator: the simulator does not only score trajectories, it determines which histories the policy learns from. When the simulator is fixed we abbreviate $J_\phi(\theta) = J(\theta; \psi)$.

%%SL.4.19: I suspect it wouldn't be too bad to skip some of the details here if it gets to the novel parts more quickly
% --> RESOLVED (2026-04-23): compressed the z-score derivation to its load-bearing property (sigma_R > 0) and moved the full formula below to a displayed equation so the prose gets to the gradient-blackout consequence faster.
\paragraph{Policy update.} We apply REINFORCE~\citep{Williams2004SimpleSG} to full multi-turn trajectories using group-relative reward normalization. For each task we sample a group of $G$ trajectories, score them by terminal reward $R(\tau^n)$, and form z-scored advantages $\hat A^n = (R(\tau^n) - \bar R)/\sigma_R$ (Eq.~\ref{eq:reinforce}), assigned uniformly to all agent tokens in each trajectory. If all trajectories in a group receive the same terminal reward, $\sigma_R = 0$ and the update stalls; this is one boundary case. More generally, $\sigma_R$ can remain positive, but if the simulator keeps responding in the same way at the histories visited during training, the remaining contrast mostly ranks agent samples by how well they exploit that simulator. Section~\ref{sec:theory} formalizes this active but biased gradient.
\begin{equation}
\label{eq:reinforce}
\hat{A}^n = \frac{R(\tau^n) - \bar{R}}{\sigma_R}, \quad \bar{R} = \frac{1}{G}\sum_{n=1}^G R(\tau^n), \quad \sigma_R = \sqrt{\frac{1}{G}\sum_{n=1}^G (R(\tau^n) - \bar{R})^2}.
\end{equation}

\section{Simulator Collapse: Why One Simulator Is Not Enough}
\label{sec:preliminary}
\label{sec:theory}

\begin{figure*}[t]
\centering
\includegraphics[width=\textwidth]{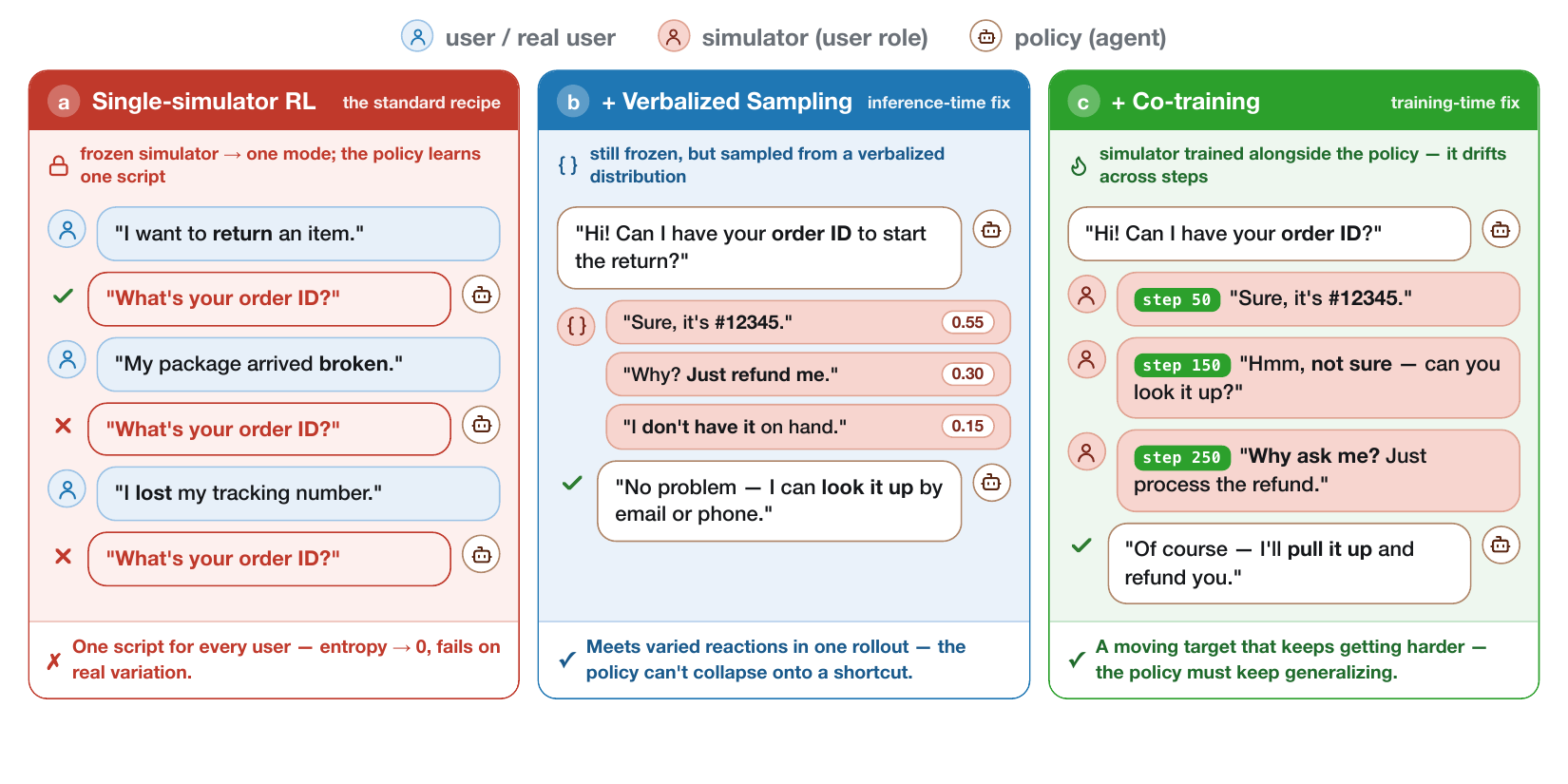}
\caption{\textbf{Simulator collapse and our two fixes.}
\textbf{(a) Problem:} the real-user distribution is broad, but a frozen LLM simulator covers only one mode; the RL policy locks onto that mode and gives the narrow reply on real users it cannot serve.
\textbf{(b) Verbalized Sampling (inference-time):} a single prompt asks the still-frozen simulator for several plausible user replies with likelihoods, so the policy sees varied reactions (acceptance, pushback, refusal) within a single rollout and cannot collapse onto a shortcut.
\textbf{(c) Co-Training (training-time):} the simulator is no longer frozen and keeps drifting across training; the policy must keep generalising because the target mode it would memorise has already moved. 
% \wyshi{color needs to be consistent, it seems VS is blue and cotraining is green? This figure looks a bit AI}
% \derek{+1, coloring the LHS repeat messages red is super important: added details in Overleaf note}
\vspace{-1.6em}}
\label{fig:structure}
\end{figure*}

Aligned LLMs favor typical responses under direct prompting~\citep{jiang_artificial_2025, zhang2025verbalizedsamplingmitigatemode, zhang2025noveltybenchevaluatinglanguagemodels, gxchen2025klregularized}; recent user-simulation studies confirm the same pattern, with LLM simulators reading as overly cooperative and stylistically uniform~\citep{naous_flipping_2025, zhou2026sim2real, mehri2026measuring}. We sharpen this into a definition tied to the policy's training rollouts: at the simulator turns the policy actually visits, the simulator's response distribution is mode-collapsed.

Collapse on the training rollouts has three consequences we trace step by step. The policy gradient ends up close to one against a deterministic mode-user simulator (\S\ref{sec:theory_analysis}). Group-relative updates then ladder policy entropy down onto the narrow strategy that wins against the mode. \S\ref{exp:fixed_policy} measures these predictions; \S\ref{sec:cotrain} shows how Co-Training breaks the chain by making the simulator a moving target.

\subsection{Definition and Hypothesis}
\label{sec:theory_definition}

\paragraph{Mode collapse.}
At a simulator turn, the user response is sampled from $\phi_\psi(\cdot \mid s_t, a_t^\pi)$. We call the most likely response at that turn the simulator's \emph{mode}:
\begin{equation}
    a_\phi^\star(s, a^\pi) \;\in\; \arg\max_{a^\phi} \phi_\psi(a^\phi \mid s, a^\pi),
\label{eq:dominant-response}
\end{equation}
and write $\epsilon_\phi(s, a^\pi) = 1 - \phi_\psi(a_\phi^\star(s, a^\pi) \mid s, a^\pi)$ for the probability that the simulator deviates from its mode. Several recent works document strong mode concentration in aligned LLMs, often called \emph{mode collapse}~\citep{jiang_artificial_2025, zhang2025verbalizedsamplingmitigatemode, zhang2025noveltybenchevaluatinglanguagemodels, gxchen2025klregularized}: small $\epsilon_\phi$ means the simulator keeps emitting its mode.

\begin{definition}[Simulator collapse on the training rollouts]
\label{def:conditional-collapse}
For a policy $\pi_\theta$ and a threshold $\epsilon^\star \in [0, 1]$, we say the simulator is \emph{$\epsilon^\star$-collapsed on the training rollouts} if
\begin{equation*}
\mathbb{E}_{(s_t, a_t^\pi) \sim (\pi_\theta, \phi_\psi)}\!\bigl[\epsilon_\phi(s_t, a_t^\pi)\bigr] \;\le\; \epsilon^\star
\end{equation*}
at the simulator turns visited by rollouts from $(\pi_\theta, \phi_\psi)$.
\end{definition}

This definition is deliberately tied to the training distribution: a simulator can produce many different trajectories across a dataset and still behave almost deterministically at the histories the current policy actually visits. Collapse means the simulator's per-turn distribution is narrow; prompts that change which behavior is modal don't broaden it. We state $\epsilon_\phi$ over exact token sequences for notation, but every result below also applies after measurable coarsening $\Pi: a^\phi \mapsto b^\phi$ onto a behavior-class space (dialogue acts, strategy clusters), since TV distance to the modal point mass is non-increasing under projection (Remark~\ref{rem:behavioral-coarsening}). The threshold $\epsilon^\star$ is the parameter the rest of the chain sharpens: Theorem~\ref{thm:modal-gradient} gives a gradient-bias bound that is tight when $\bar{\epsilon}_H(\theta) \le \epsilon^\star H$. The empirical proxy in Figure~\ref{fig:training_dynamics} (zero-variance batch fraction) is a one-sided diagnostic of training-signal degeneration, not a direct estimator of $\epsilon_\phi$: small $\epsilon_\phi$ implies small reward variance via Lemma~\ref{lem:user-variance}, but the converse can fail because sparse binary rewards or all-failure batches give zero variance too. A cleaner diagnostic would separate simulator-side from agent-side variance via Eq.~\ref{eq:variance-decomposition}; only the simulator-side term tracks $\epsilon_\phi$.

\paragraph{Hypothesis.}
A mode-collapsed simulator breaks multi-agent RL into a fixed-user setting: the RL policy learns the strategy that wins against the simulator's mode, and that strategy fails when other models or real users deviate from it.

\subsection{Theory: How Simulator Collapse Impacts the Policy}
\label{sec:theory_analysis}

Mode collapse biases the policy gradient toward a deterministic mode-user objective (Theorem~\ref{thm:modal-gradient}). It also kills simulator-side reward variance, so group-relative advantages rank samples by mode-exploit ability rather than user-robustness (Lemma~\ref{lem:user-variance}). The policy gradient learns this signal, and policy mass concentrates geometrically onto the mode-exploit set $A_x$ (Proposition~\ref{prop:log-odds}, Corollary~\ref{cor:entropy-collapse}). The resulting low-entropy policy underperforms on real users with behaviors outside $A_x$ (Proposition~\ref{prop:coverage-regret}). Proofs are deferred to Appendix~\ref{appendix:proof}.

\paragraph{Collapse turns the gradient into a mode-user gradient.}
% [NT resolved] \nt{POMDP?} -- changed dialogue MDP to dialogue POMDP and modified MDP to modified POMDP for consistency with \S\ref{sec:background}.
Let $M_\phi$ be the dialogue POMDP induced by $\phi$, and $M_{\mathrm{mode}}$ the modified POMDP in which every simulator turn deterministically emits the mode $a_\phi^\star(s_t, a_t^\pi)$. $M_{\mathrm{mode}}$ is determined by $\phi_\psi$ alone, not by $\pi_\theta$. Let $J_{\mathrm{mode}}(\theta)$ be the corresponding objective.

\begin{theorem}[Simulator collapse induces mode-user optimization]
\label{thm:modal-gradient}
Assume rewards are bounded in $[0, R_{\max}]$ and the trajectory-level policy score satisfies $\lVert \sum_t \nabla_\theta \log \pi_\theta(a_t^\pi \mid s_t) \rVert \le B$ (e.g., under finite-length truncation and gradient clipping; see Appendix~\ref{appendix:prelim}). Couple $\pi_\theta$ in $M_\phi$ and $M_{\mathrm{mode}}$ turn by turn (same task, same agent randomness, maximal coupling at each simulator turn). Define the accumulated collapse error along the rollout as
\begin{equation}
    \bar{\epsilon}_H(\theta) \;=\; \mathbb{E}\!\left[\,\sum_{t=1}^{H} \epsilon_\phi(s_t, a_t^\pi)\right].
\label{eq:on-policy-collapse-error}
\end{equation}
% [NT resolved] \nt{Define $P^\theta$ in the main body} -- added explicit definition sentence directly below.
Write $P_\phi^\theta$ and $P_{\mathrm{mode}}^\theta$ for the joint trajectory distributions under $\pi_\theta$ in $M_\phi$ and $M_{\mathrm{mode}}$ respectively. Then $D_{\mathrm{TV}}\!\bigl(P_\phi^\theta,\, P_{\mathrm{mode}}^\theta\bigr) \le \bar{\epsilon}_H(\theta)$, and
\begin{equation}
    \bigl\lVert \nabla_\theta J_\phi(\theta) - \nabla_\theta J_{\mathrm{mode}}(\theta) \bigr\rVert \;\le\; 2BR_{\max}\,\bar{\epsilon}_H(\theta).
\label{eq:modal-gradient-bound}
\end{equation}
\end{theorem}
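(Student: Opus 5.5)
The argument has two parts. First we bound the TV distance with a coupling and a union bound over simulator turns. Then we turn that into a gradient bound using the likelihood-ratio form of the policy gradient. The one delicate point is that $\pi_\theta$ appears in both trajectory distributions.

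\textbf{TV bound.} Build the coupling the statement describes. Run the two rollouts on the same task with identical agent randomness, so the agent utterances agree as long as the histories agree. At each simulator turn $t$ where the histories still coincide, draw $(a_t^\phi, \tilde a_t^\phi)$ from a maximal coupling of $\phi_\psi(\cdot\mid s_t,a_t^\pi)$ and $\delta_{a_\phi^\star(s_t,a_t^\pi)}$. The probability that they differ is exactly $D_{\mathrm{TV}}(\phi_\psi(\cdot\mid s_t,a_t^\pi),\delta_{a^\star}) = \epsilon_\phi(s_t,a_t^\pi)$. Once they differ, let the two rollouts continue independently.

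The trajectories differ only if some first disagreement occurs. Before the first disagreement, the history $s_t$ has the law of the $M_\phi$ rollout. By the tower property and a union bound,
\[
\Pr(\tau\neq\tilde\tau)\le \sum_{t=1}^H \mathbb{E}\bigl[\mathbf{1}\{\text{no split before } t\}\,\epsilon_\phi(s_t,a_t^\pi)\bigr]\le \mathbb{E}\Bigl[\sum_{t=1}^H \epsilon_\phi(s_t,a_t^\pi)\Bigr]=\bar\epsilon_H(\theta).
\]
The coupling inequality then gives $D_{\mathrm{TV}}(P_\phi^\theta,P_{\mathrm{mode}}^\theta)\le\bar\epsilon_H(\theta)$.

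\textbf{Gradient bound.} In both POMDPs the simulator kernel does not depend on $\theta$. We need this for $M_{\mathrm{mode}}$ too; it holds because $a_\phi^\star$ is determined by $\phi_\psi$ alone, as the text notes. The likelihood-ratio trick therefore gives, for $M\in\{\phi,\mathrm{mode}\}$,
\[
\nabla_\theta J_M(\theta)=\mathbb{E}_{\tau\sim P_M^\theta}[g(\tau)], \qquad g(\tau)=R(\tau)\sum_t\nabla_\theta\log\pi_\theta(a_t^\pi\mid s_t).
\]
The only technical issue here is interchanging the gradient and the expectation. The finite-horizon and bounded-score assumptions of Appendix~\ref{appendix:prelim} cover this, so I will cite that appendix rather than redo it.

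Under the assumptions, $\lVert g\rVert\le BR_{\max}$. For any $g$ with $\lVert g\rVert\le C$,
\[
\lVert\mathbb{E}_P g-\mathbb{E}_Q g\rVert\le 2C\,D_{\mathrm{TV}}(P,Q).
\]
To see this cleanly for vector-valued $g$, apply the coupling directly:
\[
\lVert\mathbb{E}[g(\tau)-g(\tilde\tau)]\rVert\le \mathbb{E}\bigl[\lVert g(\tau)-g(\tilde\tau)\rVert\,\mathbf{1}\{\tau\ne\tilde\tau\}\bigr]\le 2BR_{\max}\Pr(\tau\neq\tilde\tau).
\]
Combining with the TV step yields $\lVert \nabla_\theta J_\phi(\theta) - \nabla_\theta J_{\mathrm{mode}}(\theta) \rVert \le 2BR_{\max}\,\bar{\epsilon}_H(\theta)$, which is Eq.~\eqref{eq:modal-gradient-bound}.

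\textbf{Main obstacle.} The subtle step is the union bound in the TV part. The expectation in $\bar\epsilon_H$ is taken under the $M_\phi$ rollout, while the split probabilities are conditioned on the two rollouts agreeing up to time $t$. The bound is valid because, on the event of no prior split, the history law is the $M_\phi$ law, and dropping the indicator only enlarges the sum. I also need to check that the argmax tie-breaking in $a_\phi^\star$ is a measurable, $\theta$-independent selection, so that $M_{\mathrm{mode}}$ is a well-defined $\theta$-free kernel.
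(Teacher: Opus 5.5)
Your proof is correct and follows the paper's argument. The TV bound comes from a turn-by-turn maximal coupling with the modal point mass plus a union bound over simulator turns, and the gradient bound from applying the REINFORCE identity to the bounded integrand $R(\tau)S_\theta(\tau)$. The differences are minor: you make the first-split decomposition explicit, and you bound the vector-valued difference directly through the coupling rather than via the coordinate-wise variational form of TV, which if anything is slightly cleaner for the Euclidean norm.
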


Theorem~\ref{thm:modal-gradient} shows that the gradient does not vanish; it is biased, up to $\bar{\epsilon}_H(\theta)$, toward the objective in which every user emits the mode $a_\phi^\star$. The bound applies to the idealized REINFORCE gradient under a bounded trajectory-score assumption; the implemented update uses GRPO-style clipping, group-relative normalization, and gradient clipping (Appendix~\ref{appendix:cotrain_framework}), so the theorem is an analytic guide to the bias direction rather than a tight bound on the actual surrogate loss. The bound is informative when $\bar{\epsilon}_H \ll 1$; Figure~\ref{fig:training_dynamics}'s zero-variance batch fraction climbing past $85\%$ is consistent with this regime at $H{=}30, 50$, with the proxy caveats above (Appendix~\ref{appendix:proof-modal-gradient}).

\paragraph{What group-relative advantages measure.}
The same point shows up in the within-task reward variance that group-normalized RL z-scores. Let $\xi_\pi$ be agent-side randomness, $\xi_U$ simulator-side randomness, and $R_x = R(x, \xi_\pi, \xi_U)$. The law of total variance gives
\begin{equation}
    \mathrm{Var}[R_x \mid x] = \underbrace{\mathbb{E}_{\xi_\pi}\!\bigl[\mathrm{Var}_{\xi_U}(R_x \mid x, \xi_\pi)\bigr]}_{\text{simulator-side contrast}} + \underbrace{\mathrm{Var}_{\xi_\pi}\!\bigl[\mathbb{E}_{\xi_U}(R_x \mid x, \xi_\pi)\bigr]}_{\text{agent-side contrast}}.
\label{eq:variance-decomposition}
\end{equation}

\begin{lemma}[Collapse removes simulator-side reward contrast]
\label{lem:user-variance}
If the simulator's trajectory is $\epsilon_H(x, \xi_\pi)$-close in TV to the mode trajectory at $(x, \xi_\pi)$, then $\mathrm{Var}_{\xi_U}(R_x \mid x, \xi_\pi) \le R_{\max}^2\,\epsilon_H(x, \xi_\pi)$.
\end{lemma}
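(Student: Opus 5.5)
Fix $(x,\xi_\pi)$ and write $P$ for the law of the full trajectory over $\xi_U$. Write $Q=\delta_{\tau^\star}$ for the mode trajectory: the (agent-randomness-coupled) trajectory in $M_{\mathrm{mode}}$, a point mass once $x$ and $\xi_\pi$ are fixed. By hypothesis, $D_{\mathrm{TV}}(P,Q)\le \epsilon:=\epsilon_H(x,\xi_\pi)$. The plan is to compare the variance of $R_x$ with its mean squared deviation from the single deterministic value $r^\star = R(\tau^\star)\in[0,R_{\max}]$.

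The first step uses the fact that variance is the minimum mean-squared deviation over constants:
\[
\mathrm{Var}_{\xi_U}(R_x\mid x,\xi_\pi)\;\le\;\mathbb{E}_{\tau\sim P}\bigl[(R(\tau)-r^\star)^2\bigr].
\]

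The second step uses the point-mass structure of $Q$. For a point mass, $D_{\mathrm{TV}}(P,\delta_{\tau^\star}) = 1-P(\{\tau^\star\}) = P(\tau\neq\tau^\star)$. The integrand $(R(\tau)-r^\star)^2$ vanishes on $\{\tau=\tau^\star\}$, and it is at most $R_{\max}^2$ everywhere because both rewards lie in $[0,R_{\max}]$. Hence
\[
\mathbb{E}_{P}\bigl[(R-r^\star)^2\bigr]\le R_{\max}^2\,P(\tau\ne\tau^\star)\le R_{\max}^2\,\epsilon,
\]
which is the claim. The same argument also works via the maximal coupling used in Theorem~\ref{thm:modal-gradient}: couple $\tau\sim P$ with $\tau^\star$ so that $\Pr(\tau\neq\tau^\star)\le\epsilon$, then bound the squared reward gap by $R_{\max}^2$ on the mismatch event. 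If one prefers to start from per-turn quantities, a union bound over turns gives $P(\tau\neq\tau^\star)\le\sum_t\epsilon_\phi(s_t,a_t^\pi)$, so one may take $\epsilon_H(x,\xi_\pi)$ to be this on-path sum, matching $\bar\epsilon_H$.

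The main thing to check is the interpretation of ``$\epsilon$-close in TV to the mode trajectory.'' Conditional on $(x,\xi_\pi)$ the mode trajectory is deterministic, so closeness reduces to a mismatch probability and the bound is linear in $\epsilon$ rather than quadratic. If instead $\xi_\pi$ only fixed the agent's noise and the agent re-sampled along divergent histories, I would define the comparison law by coupling, so that it remains a point mass given $(x,\xi_\pi)$. A reference law that is not a point mass would not give a variance bound from TV closeness alone. A sharper constant, $R_{\max}^2\epsilon(1-\epsilon)$ via a Bernoulli-type bound, is possible, but the stated bound needs only the crude estimate above.
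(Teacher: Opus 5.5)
Your proof is correct and is essentially the paper's argument. The paper likewise bounds the variance by the mean-squared deviation from the constant modal reward $R(\tau^\star)$, which vanishes off a disagreement event of probability at most $\epsilon_H(x,\xi_\pi)$ and is at most $R_{\max}^2$ on it; the only difference is that the paper reaches that event through the turn-by-turn coupling of Theorem~\ref{thm:modal-gradient}, while your identity $D_{\mathrm{TV}}(P,\delta_{\tau^\star}) = P(\tau\neq\tau^\star)$ gets there directly.

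One caveat on your closing aside, which the lemma does not need: the sharper bound $R_{\max}^2\epsilon(1-\epsilon)$ holds only for $\epsilon\le 1/2$. For larger $\epsilon$, a reward that is Bernoulli$(1/2)$ on $\{0,R_{\max}\}$ with modal reward $0$ satisfies the hypothesis and attains variance $R_{\max}^2/4$.
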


When simulator-side variance vanishes, the z-scored advantage measures only agent-side variation. Group-relative RL then ranks samples by how well they exploit the simulator's mode; user-robustness drops out of the comparison.

\paragraph{Policy entropy collapses under a persistent mode advantage.}
% [NT resolved] \nt{This part feels too dense (hard for me to follow the general idea without referencing the appendix). In particular, I feel like I need a clearer definition of this strategy abstraction to understand what's going on, and maybe also a defn of ``exploit".} -- replaced with 8-sentence rewrite: defines the strategy abstraction with intuition, inlines exploit definition with Q_mode, bridges Delta_x to Theorem 3.2, absorbs the next paragraph's softmax-step framing.
Let $Y$ be an agent-strategy abstraction at the level above tokens, with distribution $q_k(y \mid x)$ at update $k$. Intuitively, $Y$ coarsens semantically-equivalent agent responses into a small set of strategies (e.g., ``open with empathy'', ``cite charity statistics''). This is the agent-side analog of the user-side coarsening in Definition~\ref{def:conditional-collapse}. A strategy $y$ \emph{exploits} the simulator's mode $a_\phi^\star$ if its mode-user value $Q_{\mathrm{mode}}(x, y)$ is high, i.e., $y$ accrues reward in the counterfactual MDP where the simulator deterministically emits $a_\phi^\star$ at every turn. Let $A_x$ denote the set of these mode-exploit strategies, and let $\Delta_x > 0$ be the mode-exploit gap: the smallest $Q_{\mathrm{mode}}$ advantage of any $y \in A_x$ over any $y' \notin A_x$ (Appendix~\ref{appendix:proof-log-odds}). By Theorem~\ref{thm:modal-gradient} the realized rollouts are dominated by trajectories in which the simulator emits $a_\phi^\star$, so $\Delta_x$ quantifies the gap between the best mode-exploit strategy and the best non-exploit strategy on exactly the trajectories the agent sees during training. We analyze an idealized KL-regularized softmax update on $q_k$ as a stylized model of the token-level GRPO step we run in practice; the proposition is a sufficient mechanism for the observed entropy collapse, not a direct theorem about the implemented optimizer.

\begin{proposition}[Mode advantage concentrates policy mass]
\label{prop:log-odds}
Under a KL-regularized softmax policy-gradient step on $q_k(y \mid x)$ with learning rate $\eta$ and bounded $Q$-estimation errors, the log-odds of $A_x$ satisfy
\begin{equation}
    \log\!\tfrac{q_{k+1}(A_x \mid x)}{q_{k+1}(A_x^c \mid x)} \;\ge\; \log\!\tfrac{q_k(A_x \mid x)}{q_k(A_x^c \mid x)} + g_x,
\label{eq:log-odds-growth}
\end{equation}
where $g_x > 0$ depends on $\eta$, the gap $\Delta_x$, and the simulator-collapse and estimation errors (Appendix~\ref{appendix:proof-log-odds} gives exact constants).
\end{proposition}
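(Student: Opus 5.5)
The plan is to write the KL-regularized softmax step in closed form and then bound the ratio of the two masses $A_x$ and $A_x^c$. Take the idealized update to be the mirror-descent / KL-proximal step
\[
q_{k+1}(y\mid x) \;\propto\; q_k(y\mid x)\,\exp\!\bigl(\eta\,\hat Q_k(x,y)\bigr),
\]
which solves $\max_q \eta\,\mathbb{E}_q[\hat Q_k] - \mathrm{KL}(q\,\Vert\,q_k)$. If a reference-policy KL term with coefficient $\beta$ is also present, the exponent becomes $\eta\hat Q_k/(1+\eta\beta)$ plus a term $\frac{\eta\beta}{1+\eta\beta}\log(q_{\mathrm{ref}}/q_k)$. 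I would first treat the pure proximal case and then carry the $\beta$ term through as an additive error.

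Here $\hat Q_k$ is the estimate of the true simulator value $Q_\phi(x,y)$, and $Q_\phi$ is the quantity the realized rollouts estimate.

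Since the normalizer cancels in the log-odds,
\[
\log\frac{q_{k+1}(A_x)}{q_{k+1}(A_x^c)} - \log\frac{q_k(A_x)}{q_k(A_x^c)}
= \log \mathbb{E}_{q_k(\cdot\mid A_x)}\!\bigl[e^{\eta\hat Q_k}\bigr] - \log \mathbb{E}_{q_k(\cdot\mid A_x^c)}\!\bigl[e^{\eta\hat Q_k}\bigr].
\]
I would bound the first term below by $\eta\min_{y\in A_x}\hat Q_k(x,y)$ and the second above by $\eta\max_{y'\notin A_x}\hat Q_k(x,y')$. This reduces the increment to $\eta$ times the worst-case $\hat Q_k$ gap between $A_x$ and its complement. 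It is crude but it does not depend on $q_k$, which is what a uniform per-step $g_x$ needs.

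Next, decompose $\hat Q_k - Q_{\mathrm{mode}} = (\hat Q_k - Q_\phi) + (Q_\phi - Q_{\mathrm{mode}})$.

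\emph{Estimation term.} By assumption $|\hat Q_k - Q_\phi| \le \delta$.

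\emph{Collapse term.} I would apply the TV part of Theorem~\ref{thm:modal-gradient} at the strategy level. Condition on $(x,y)$ and couple the rollouts in $M_\phi$ and $M_{\mathrm{mode}}$ as in that theorem. Because rewards lie in $[0,R_{\max}]$, this gives $|Q_\phi(x,y) - Q_{\mathrm{mode}}(x,y)| \le R_{\max}\,\bar\epsilon_H(x,y)$, where $\bar\epsilon_H(x,y)$ is the accumulated collapse error conditioned on strategy $y$. Let $\bar\epsilon_x = \max_y \bar\epsilon_H(x,y)$.

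Combining these with the definition of $\Delta_x$, the minimal $Q_{\mathrm{mode}}$ advantage of $A_x$ over $A_x^c$, gives
\[
g_x \;=\; \eta\bigl(\Delta_x - 2R_{\max}\bar\epsilon_x - 2\delta\bigr),
\]
which is positive whenever $\Delta_x > 2R_{\max}\bar\epsilon_x + 2\delta$. That condition is exactly the regime the statement describes. If a reference-KL term is included, subtract $\frac{\eta\beta}{1+\eta\beta}\cdot 2\max_y|\log(q_{\mathrm{ref}}/q_k)|$, or assume $q_{\mathrm{ref}}$ is bounded away from zero on the support.

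The main obstacle is the collapse term. Theorem~\ref{thm:modal-gradient} is stated in expectation over the policy's own rollouts. The per-strategy bound instead needs the collapse error conditioned on each $y$, and in particular on rarely chosen non-exploit strategies $y'\notin A_x$. There the simulator may be far less collapsed, because Definition~\ref{def:conditional-collapse} controls only the on-policy average.

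My first attempt is to make the uniform bound over $y$ an explicit assumption, absorbed into the constant $\bar\epsilon_x$ that $g_x$ depends on. This is consistent with the statement's phrasing that $g_x$ depends on the collapse errors. The fallback, if only on-policy control is available, is a weaker bound via Markov's inequality. Strategies that carry $q_k$-mass at least $\rho$ have $\bar\epsilon_H(x,y)\le \bar\epsilon_H/\rho$, and the remaining mass can be handled separately, giving a $q_k$-dependent but still positive $g_x$ on the relevant set.
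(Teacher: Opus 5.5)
Your argument is correct and is essentially the paper's proof: the paper likewise simply assumes the uniform bounds $|Q_\phi - Q_{\mathrm{mode}}| \le R_{\max}\epsilon_x$ and $|\widehat{Q}_\phi - Q_\phi| \le \zeta_x$ (your ``first attempt''), chains them with the gap to get the pairwise increment $\eta(\Delta_x - 2R_{\max}\epsilon_x - 2\zeta_x)$, and lifts it to $A_x$ versus $A_x^c$ by summing cross-multiplied pairwise ratios, which is equivalent to your min/max bound on the two conditional expectations; the only difference is that the paper posits the pairwise log-odds step with an optimization slack $\rho_x$ rather than deriving it from the exact proximal update, so its $g_x$ carries an extra $-\rho_x$. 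One caution on your reference-KL aside: bounding $q_{\mathrm{ref}}$ away from zero does not control $\log(q_{\mathrm{ref}}/q_k)$ on $A_x^c$, which grows as $q_k$ concentrates (with $\widehat{Q}$ fixed the iteration converges to $q^\star \propto q_{\mathrm{ref}}\,e^{\widehat{Q}/\beta}$ and the log-odds saturate), so that term cannot be absorbed into a $k$-independent $g_x$; the paper sidesteps this only by assuming $\rho_x$ stays bounded.
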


\begin{corollary}[Entropy concentration]
\label{cor:entropy-collapse}
Iterating Eq.~\ref{eq:log-odds-growth} gives
\begin{equation}
    q_k(A_x \mid x) \;\ge\; \frac{1}{1 + \frac{1-q_0(A_x \mid x)}{q_0(A_x \mid x)}\,e^{-k g_x}},
\label{eq:mass-concentration}
\end{equation}
so the strategy distribution concentrates onto $A_x$ geometrically fast in $k$.
\end{corollary}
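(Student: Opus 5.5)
The plan is to work entirely in log-odds coordinates. Set $L_k = \log\bigl(q_k(A_x\mid x)/q_k(A_x^c\mid x)\bigr)$ and write $p_k = q_k(A_x\mid x)$. Proposition~\ref{prop:log-odds} says that each update raises $L_k$ by at least $g_x>0$. The bound in Eq.~\ref{eq:mass-concentration} should then follow by telescoping this increment and inverting the logistic map.

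\textbf{Step 1: iterate the increment.} Corollary~\ref{cor:entropy-collapse} is stated with a single constant $g_x$, so I take $g_x$ to be uniform across updates. Concretely, I assume the hypotheses of Proposition~\ref{prop:log-odds} hold at every step $j=0,\dots,k-1$: the gap $\Delta_x$, the learning rate $\eta$, and the collapse and estimation error bounds do not change. Applying Eq.~\ref{eq:log-odds-growth} at each step and summing gives $L_k - L_0 = \sum_{j=0}^{k-1}(L_{j+1}-L_j) \ge k g_x$. I would add a short remark that uniformity is a genuine requirement, not a formality. The collapse level $\bar{\epsilon}_H(\theta_j)$ depends on the current policy, so if it varies, $g_x$ has to be read as $\min_j g_x^{(j)}$ over the trajectory. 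Equivalently, one can assume $\epsilon^\star$-collapse in the sense of Definition~\ref{def:conditional-collapse} at every iterate.

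\textbf{Step 2: convert log-odds back to mass.} The relation is $p_k = \sigma(L_k) = 1/(1+e^{-L_k})$, and $\sigma$ is monotone increasing. Therefore $p_k \ge 1/(1+e^{-L_0-kg_x})$. Substituting $e^{-L_0} = (1-q_0(A_x\mid x))/q_0(A_x\mid x)$ gives exactly Eq.~\ref{eq:mass-concentration}. The edge cases are simple. I need $q_0(A_x\mid x)>0$ so that $L_0$ is finite, which holds for a softmax with full support. If $q_0(A_x\mid x)=1$, the bound is trivial.

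\textbf{Step 3: read off geometric concentration and the entropy claim.} Rearranging Eq.~\ref{eq:mass-concentration} gives $1-q_k(A_x\mid x) \le \frac{1-q_0}{q_0}e^{-kg_x}$, so the mass outside $A_x$ decays geometrically at rate $e^{-g_x}$. To justify the word \emph{entropy}, I would write $u=1-q_k(A_x\mid x)$ and use the grouping identity: $H(q_k) \le h(u) + (1-u)\log|A_x| + u\log|A_x^c|$, where $h$ is the binary entropy. As $u\to0$, this upper bound tends to $\log|A_x|$. So the strategy entropy is squeezed toward the entropy of the narrow exploit set.

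The main obstacle is Step 1. The algebra is routine, so the substantive content is that $g_x$ stays bounded below along the whole optimization path. The difficulty is that both the estimation errors and the collapse error evolve as $\theta$ moves. I would handle this by stating explicitly that the per-step error bounds from Appendix~\ref{appendix:proof-log-odds} hold uniformly in $k$, and then taking $g_x$ as the resulting infimum.
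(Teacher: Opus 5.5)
Your proof is correct and follows the paper's argument: you telescope $L_{k+1} \ge L_k + g_x$ to get $L_k \ge L_0 + k g_x$, apply the monotone logistic map, and substitute $e^{-L_0} = (1-q_0(A_x\mid x))/q_0(A_x\mid x)$. Your explicit requirement that $g_x$ be bounded below uniformly in $k$, and your grouping bound on strategy entropy (valid for a finite strategy space), are sound additions that the paper leaves implicit.
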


Token-level entropy, which we measure in \S\ref{exp:fixed_policy}, is the empirical counterpart of this strategy-level concentration only to the extent that the exploit strategies in $A_x$ are themselves low-entropy text (fixed scripts, formulaic appeals); in that regime, strategy concentration shows up as a token-entropy collapse. We support this regime qualitatively by inspecting late-training within-batch rollouts, where the three transcripts from a single context become nearly word-for-word the same (Appendix~\ref{appendix:collapse_examples}).

\paragraph{From entropy collapse to transfer failure.}
By Corollary~\ref{cor:entropy-collapse}, the trained policy concentrates on $A_x$. This fails on real users whose behaviors are not handled by the exploit strategies in $A_x$.

\begin{proposition}[Deployment regret from missing user behaviors]
\label{prop:coverage-regret}
Let $B_x$ be a set of real-user behaviors requiring a strategy outside $A_x$, with real-user mass $q_\star(x) = P_\star(B_x \mid x)$. Suppose every $y_m \in A_x$ is worse than an adaptive strategy $y_b$ by at least $\Delta_x^{\mathrm{real}}$ on $B_x$ and by at most $\nu_x$ outside $B_x$, and the trained policy places probability at least $1 - \alpha_x$ on $A_x$. Then
\begin{equation}
    J_\star(y_b \mid x) - J_\star(\hat{\pi} \mid x) \;\ge\; (1 - \alpha_x)\bigl[\Delta_x^{\mathrm{real}}\,q_\star(x) - \nu_x(1 - q_\star(x))\bigr] - \alpha_x R_{\max}.
\label{eq:deployment-regret}
\end{equation}
\end{proposition}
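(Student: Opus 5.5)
We decompose the trained policy's strategy distribution according to whether it uses an exploit strategy, and bound the two pieces separately. Write $\hat{\pi}(\cdot \mid x)$ for the strategy distribution of the trained policy, and let $J_\star(y \mid x)$ denote the expected real-user return of strategy $y$. By linearity of expectation over strategies,
\begin{equation*}
J_\star(y_b \mid x) - J_\star(\hat{\pi} \mid x) \;=\; \sum_{y} \hat{\pi}(y \mid x)\bigl[J_\star(y_b \mid x) - J_\star(y \mid x)\bigr].
\end{equation*}
We split the sum into $y \in A_x$ and $y \notin A_x$.

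\textbf{Terms with $y \in A_x$.} We further condition the real-user expectation on whether the real-user behavior lies in $B_x$ (mass $q_\star(x)$) or in $B_x^c$ (mass $1-q_\star(x)$). This gives
\begin{equation*}
J_\star(y_b \mid x) - J_\star(y_m \mid x) = q_\star(x)\,\mathbb{E}_\star[\,\cdot \mid B_x] + (1-q_\star(x))\,\mathbb{E}_\star[\,\cdot \mid B_x^c].
\end{equation*}
By hypothesis, the first conditional gap is at least $\Delta_x^{\mathrm{real}}$. The second is at least $-\nu_x$, since $y_m$ can beat $y_b$ by at most $\nu_x$ outside $B_x$. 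So each exploit strategy contributes a gap of at least $\Delta_x^{\mathrm{real}} q_\star(x) - \nu_x(1-q_\star(x))$.

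We then need to weight this by the mass on $A_x$, which is at least $1-\alpha_x$. This requires care if the bracket is negative. In that case the displayed inequality is weakest when the weight is smallest, so we handle it by the case split below.
\begin{itemize}
\item If the bracket is nonnegative, the $A_x$ contribution is at least $(1-\alpha_x)[\cdot]$, because the actual mass $\hat{\pi}(A_x \mid x) \ge 1-\alpha_x$.
\item If the bracket is negative, write the $A_x$ contribution as $\hat{\pi}(A_x \mid x)[\cdot]$. The excess mass $\hat{\pi}(A_x \mid x) - (1-\alpha_x) \le \alpha_x$ can cost at most $\alpha_x$ times $|[\cdot]| \le R_{\max}$. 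We absorb this cost into the $\alpha_x R_{\max}$ slack.
\end{itemize}

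\textbf{Terms with $y \notin A_x$.} These strategies carry total mass $\hat{\pi}(A_x^c \mid x) \le \alpha_x$. Since rewards lie in $[0,R_{\max}]$, each gap $J_\star(y_b \mid x) - J_\star(y \mid x)$ is at least $-R_{\max}$. The non-exploit part of the sum is therefore at least $-\alpha_x R_{\max}$.

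Adding the two contributions gives Eq.~\ref{eq:deployment-regret}. No earlier results from the paper are needed; the statement is purely about deployment.

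\textbf{Main obstacle.} The delicate step is the mass-weighting when the bracket can be negative. There the "at least $1-\alpha_x$" bound points the wrong way, and both the excess mass on $A_x$ and the non-exploit mass have to share a single $\alpha_x R_{\max}$ slack. If the bookkeeping produces $2\alpha_x R_{\max}$, we restrict to the informative case where the bracket is nonnegative, since otherwise the bound is vacuous anyway. We would state this restriction as the implicit regime of the proposition.
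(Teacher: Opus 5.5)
Your decomposition is the same as the paper's: split $\hat\pi$'s mass into $A_x$ and $A_x^c$, bound each exploit strategy's gap below by $c_x := \Delta_x^{\mathrm{real}} q_\star(x) - \nu_x(1-q_\star(x))$ via the $B_x/B_x^c$ split, and charge the off-exploit mass at most $R_{\max}$. Your case $c_x \ge 0$ is correct.

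\textbf{The gap is the case $c_x < 0$.} There you charge the excess mass on $A_x$ and the non-exploit mass separately against $\alpha_x R_{\max}$. You then propose to drop this case as ``vacuous,'' and that is not valid. The left-hand side can itself be negative, so a negative right-hand side is still a real claim. For example, with $\alpha_x = 0$ and $-R_{\max} < c_x < 0$, the proposition asserts $J_\star(y_b\mid x) - J_\star(\hat\pi\mid x) \ge c_x$. This does not follow from the trivial bound $J_\star(y_b\mid x) - J_\star(\hat\pi\mid x) \ge -R_{\max}$. Restricting to $c_x \ge 0$ therefore proves a strictly weaker statement than the one given. Separately, your bound $|c_x| \le R_{\max}$ is not automatic, because $\nu_x$ is only an assumed bound and may exceed $R_{\max}$.

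\textbf{The fix is short.} Since rewards lie in $[0,R_{\max}]$, every gap $J_\star(y_b\mid x) - J_\star(y\mid x)$ is at least $-R_{\max}$. So each exploit strategy's gap is at least $\tilde c_x := \max(c_x, -R_{\max})$. Writing $p = \hat\pi(A_x\mid x)$,
\[
J_\star(y_b\mid x) - J_\star(\hat\pi\mid x) \;\ge\; p\,\tilde c_x - (1-p)R_{\max} \;=\; p\,(\tilde c_x + R_{\max}) - R_{\max}.
\]
This is nondecreasing in $p$ because $\tilde c_x \ge -R_{\max}$. Evaluating at $p = 1-\alpha_x$ gives $(1-\alpha_x)\tilde c_x - \alpha_x R_{\max} \ge (1-\alpha_x)c_x - \alpha_x R_{\max}$, with no case split.

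In your bookkeeping terms, the excess mass $p-(1-\alpha_x)$ and the non-exploit mass $1-p$ sum to exactly $\alpha_x$. Each unit of either costs at most $R_{\max}$, so the slack is used once, not twice.

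The paper's proof replaces $\Pr[\hat\pi\in A_x]$ by $1-\alpha_x$ in its upper bound on $J_\star(\hat\pi\mid x)$ without comment. That substitution rests on the same monotonicity, with the analogous cap $\min(J_\star(y_b\mid x) - c_x, R_{\max})$ on the exploit component.
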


The bound is positive when $\Delta_x^{\mathrm{real}}\,q_\star(x) > \nu_x(1 - q_\star(x))$, i.e., missing behaviors are common enough that the adaptive strategy's gain exceeds its off-behavior penalty. \S\ref{exp:fixed_policy} tests this regime empirically: the collapsed policy fails when real users push back or apply constraints the training simulator rarely produced.

\subsection{Empirical Evidence}
\label{exp:fixed_policy}

We examine these results empirically on Persuasion for Good~\citep{wang2019persuasion} and $\tau^2$-bench~\citep{barres_2-bench_2025}. For each single-simulator run we track training reward, policy entropy, and eval reward on a held-out panel $\Phi_{\mathrm{eval}}$ of six simulators spanning seen and unseen families. Setup details are in Appendix~\ref{appendix:hyperparameters}.

\paragraph{Single-simulator RL exhibits simulator collapse in practice.}
% [wyshi] "the bold text is very AI" --> RESOLVED: replaced the triple comma-parallel header ("Training rises, X collapses, Y turns over") with a single-claim declarative naming the phenomenon shown. Header drafted by a fresh agent and picked by user.
We train against three frozen simulators of varying modal concentration (GPT-5-mini, Haiku-4.5, Gemini-3-Flash). Figure~\ref{fig:entropy_collapse_sim} shows the training results. Training reward climbs in every run, fastest for the most modal simulator. OOD eval peaks early and turns over, but the magnitudes differ sharply: the most modal simulator (Gemini-3-Flash) crashes below the untrained baseline, while the least modal (GPT-5-mini, the one we use in our main experiments) sees a gentler but still clear decline. Policy entropy crashes toward zero across all three runs. The failure is on the policy side: the policy learns the narrow exploit and fails to transfer to unseen simulators. The token-entropy pattern is consistent with the strategy-level concentration that Corollary~\ref{cor:entropy-collapse} predicts, under the condition that the exploit strategies are themselves low-entropy (Remark~\ref{rem:strategy-entropy}).

\begin{figure}[t]
\centering
\includegraphics[width=\linewidth]{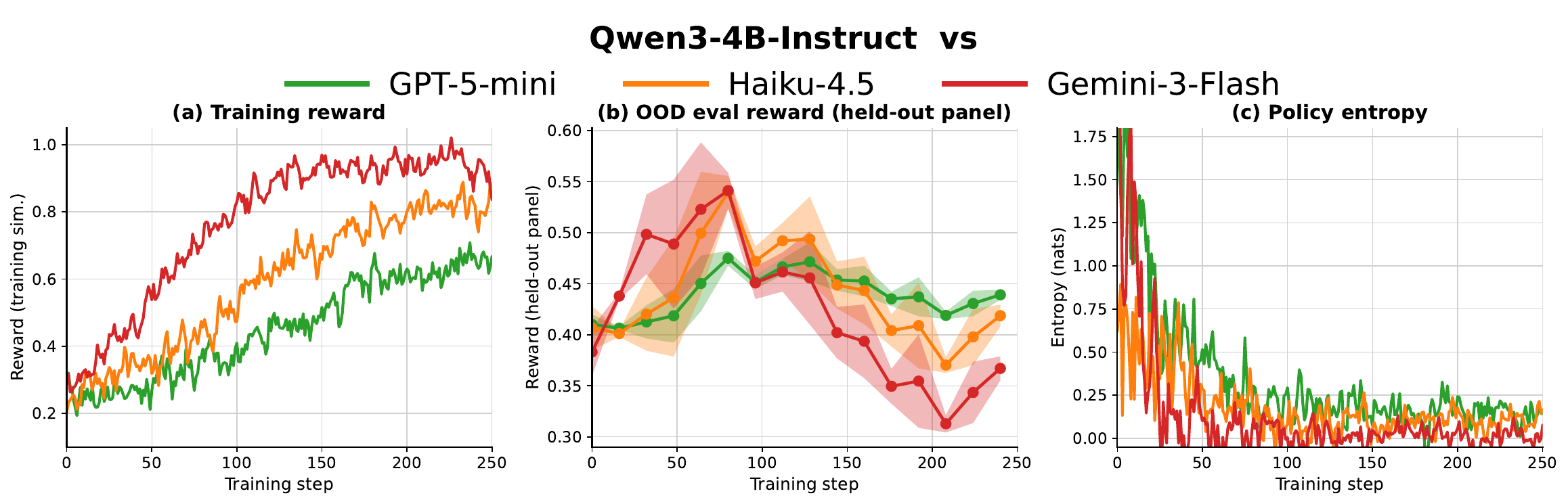}
\caption{\textbf{Single-simulator RL exhibits simulator collapse.} Three single-simulator REINFORCE runs, three seeds each, $\pm1\sigma$ shading on OOD. Training reward climbs in every run (a), OOD eval peaks early and declines (b), and policy entropy collapses (c). The decoupling between (a) and (b) is simulator collapse passing through into policy collapse; (c) is the mechanism.}
\label{fig:entropy_collapse_sim}
\end{figure}

% This is a multi-turn instance of \textit{reward hacking}~\citep{macdiarmid2025natural}: the policy finds a behavioral shortcut against a non-adapting opponent, and the shortcut keeps paying out. Production RL for coding agents shows the same dynamic. Here the ``environment'' is the simulated user, and its quirks are modal behaviors.

\medskip
\noindent\textit{Takeaway.} The failure comes from the training environment, instead of the algorithm. Two solutions follow, at different points in the training loop. \vs{}~\citep{zhang2025verbalizedsamplingmitigatemode} is the inference-time solution: we draw from a verbalized response distribution at each turn during rollout. \cotrain{} is the training-time solution: we update the simulator alongside the policy, so the mode the policy could lock onto shifts as training proceeds. The next section develops both solutions; their empirical comparison is in \S\ref{sec:results}.
% [NT resolved] \nt{Update this? Seems like Section 3.4 covers both.} -- replaced the outdated split-pointer with a single forward pointer to \S\ref{sec:results}.

\subsection{Breaking the Chain: Verbalized Sampling and Co-Training}
\label{sec:theory_fix}
\label{sec:cotrain}

The collapse chain in \S\ref{sec:theory_analysis} rests on two load-bearing assumptions. First, the per-turn collapse error $\epsilon_\phi(s_t, a_t^\pi)$ stays close to zero across the horizon, so $\bar{\epsilon}_H(\theta)$ in Theorem~\ref{thm:modal-gradient} shrinks toward zero and $\nabla_\theta J_\phi$ coincides with $\nabla_\theta J_{\mathrm{mode}}$. Second, the modal-exploit set $A_x$ stays fixed across training, so log-odds for $A_x$ accumulate over $k$ updates and Corollary~\ref{cor:entropy-collapse} concentrates $q_k$ on $A_x$ at geometric rate $g_x$. The two fixes in this paper attack one assumption each. Under the reference-recovery assumption ($D_{\mathrm{TV}}(p^{\mathrm{VS}}_\phi, P) \le \eta$; Proposition~\ref{prop:vs-lower-bound}), Verbalized Sampling makes the policy gradient approximate the reference-user gradient instead of the mode-user gradient that Corollary~\ref{cor:entropy-collapse} requires. Co-Training moves $A_x$ at every step, so no policy iterate can stack log-odds toward a fixed target.

\paragraph{\vs{} recovers the reference simulator distribution.}
A greedy query drives $\epsilon_\phi(s)$ toward zero. A verbalized query returns $K$ candidate responses with verbalized probabilities; the resulting distribution $p^{\mathrm{VS}}_\phi(\cdot \mid s)$ approximates the simulator's pre-RLHF reference distribution $P(\cdot \mid s)$~\citep{zhang2025verbalizedsamplingmitigatemode}, since the distribution-level prompt verbalizes the response distribution that direct aligned prompting sharpens away. The closeness $D_{\mathrm{TV}}(p^{\mathrm{VS}}_\phi, P) \le \eta$ is an empirical assumption; Appendix~\ref{appendix:vs_theory} discusses when it holds and when it fails. $P$ comes from the simulator's pretraining and is distinct from the real user population $P_{\mathrm{real}}$; the proposition below makes no claim about distance from real users, which the human study in Appendix~\ref{appendix:human_study} tests separately.

\begin{proposition}[Reference-gradient recovery under Verbalized Sampling]
\label{prop:vs-lower-bound}
If $D_{\mathrm{TV}}(p^{\mathrm{VS}}_\phi(\cdot \mid s, a^\pi),\, P(\cdot \mid s, a^\pi)) \le \eta(s, a^\pi)$ at every visited state along an $H$-turn rollout, then the trajectory distributions and policy gradients in $M_{\mathrm{VS}}$ vs.\ the reference-user environment $M_{\mathrm{ref}}$ satisfy $D_{\mathrm{TV}}(P^\theta_{\mathrm{VS}}, P^\theta_{\mathrm{ref}}) \le \bar{\eta}_H(\theta)$ and
\[
  \bigl\lVert \nabla_\theta J_{\mathrm{VS}}(\theta) - \nabla_\theta J_{\mathrm{ref}}(\theta) \bigr\rVert \;\le\; 2 B R_{\max}\, \bar{\eta}_H(\theta),
\]
where $\bar{\eta}_H(\theta) = \mathbb{E}\bigl[\sum_{t=1}^H \eta(s_t, a_t^\pi)\bigr]$.
\end{proposition}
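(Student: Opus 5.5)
The proof follows the template of Theorem~\ref{thm:modal-gradient}, with the mode-user kernel replaced by the reference kernel $P$ and the per-turn collapse error $\epsilon_\phi$ replaced by $\eta$. The argument has two steps: a trajectory-level TV bound obtained by coupling, then a gradient bound that uses the bounded score and bounded reward.

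\textbf{Step 1 (TV bound).} Run $\pi_\theta$ in $M_{\mathrm{VS}}$ and $M_{\mathrm{ref}}$ on a common probability space. Both copies use the same task draw and the same agent randomness. At each simulator turn $t$, if the two histories still agree, we draw the simulator responses from a maximal coupling of $p^{\mathrm{VS}}_\phi(\cdot\mid s_t,a_t^\pi)$ and $P(\cdot\mid s_t,a_t^\pi)$. Once the histories differ, we let them evolve independently.

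Agent turns never cause a split, because they share randomness and condition on identical histories. So the trajectories first disagree at some simulator turn $t$. Given agreement up to that turn, the probability of disagreeing there is $D_{\mathrm{TV}}(p^{\mathrm{VS}}_\phi,P)(s_t,a_t^\pi)\le\eta(s_t,a_t^\pi)$.

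A union bound over $t$ then gives
\[
\Pr(\tau_{\mathrm{VS}}\neq\tau_{\mathrm{ref}})\le \mathbb{E}\Bigl[\sum_{t=1}^H \eta(s_t,a_t^\pi)\Bigr],
\]
where the expectation is along the coupled path. Before the split the two paths coincide, so we may take the expectation under $P^\theta_{\mathrm{VS}}$ (or under $P^\theta_{\mathrm{ref}}$); either way it equals $\bar\eta_H(\theta)$. By the coupling inequality, $D_{\mathrm{TV}}(P^\theta_{\mathrm{VS}},P^\theta_{\mathrm{ref}})\le \bar\eta_H(\theta)$.

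The one subtle point is that $\eta$ is evaluated only at states the coupled process actually reaches. We handle this by writing the failure event as a disjoint union of the events "first split at turn $t$". The probability of "first split at $t$" is at most $\mathbb{E}[\eta(s_t,a_t^\pi)\mathbf{1}\{\text{agree before }t\}]$, and this is at most $\mathbb{E}_{\mathrm{VS}}[\eta(s_t,a_t^\pi)]$.

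\textbf{Step 2 (gradient bound).} Write $g(\tau)=R(\tau)\sum_t\nabla_\theta\log\pi_\theta(a^\pi_t\mid s_t)$. By the likelihood-ratio identity,
\[
\nabla_\theta J_{\mathrm{VS}}=\mathbb{E}_{P^\theta_{\mathrm{VS}}}[g], \qquad \nabla_\theta J_{\mathrm{ref}}=\mathbb{E}_{P^\theta_{\mathrm{ref}}}[g].
\]
This identity holds because neither simulator kernel depends on $\theta$, so their log-probabilities drop out of the score. The assumptions give $\|g(\tau)\|\le BR_{\max}$ pointwise.

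For any vector-valued $g$ with $\sup\|g\|\le M$, we have $\|\mathbb{E}_P g-\mathbb{E}_Q g\|\le 2M\,D_{\mathrm{TV}}(P,Q)$. To see this, either test against a unit vector $u$ and apply the scalar bound to $\langle u,g\rangle$, or use the coupling directly:
\[
\|\mathbb{E}[g(\tau_{\mathrm{VS}})-g(\tau_{\mathrm{ref}})]\|\le 2M\Pr(\tau_{\mathrm{VS}}\ne\tau_{\mathrm{ref}}).
\]
Setting $M=BR_{\max}$ and combining with Step 1 gives the stated bound $2BR_{\max}\bar\eta_H(\theta)$.

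The main obstacle is the bookkeeping in Step 1. We have to make sure that the expectation defining $\bar\eta_H$ is taken under the right law and that agent turns contribute no disagreement. Both points are identical to the argument for Theorem~\ref{thm:modal-gradient}, so we can cite that proof in Appendix~\ref{appendix:proof} essentially verbatim, with $a_\phi^\star$ replaced by $P$.
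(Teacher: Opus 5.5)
Your proposal is correct and follows the paper's own proof: you use the same turn-by-turn maximal coupling with shared agent randomness, take a union bound over simulator turns to get $D_{\mathrm{TV}}(P^\theta_{\mathrm{VS}},P^\theta_{\mathrm{ref}})\le\bar\eta_H(\theta)$, and then apply the bounded-integrand TV inequality to $R(\tau)S_\theta(\tau)$ with $M=BR_{\max}$. Your first-split decomposition is a slightly more careful version of the paper's union bound, with one correction: $\mathbb{E}_{\mathrm{VS}}[\sum_t\eta(s_t,a_t^\pi)]$ and $\mathbb{E}_{\mathrm{ref}}[\sum_t\eta(s_t,a_t^\pi)]$ need not be equal, so ``either way it equals'' overstates things, but each one bounds the disagreement probability, which is all the argument needs.
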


This is the positive counterpart of Theorem~\ref{thm:modal-gradient}: collapse induces a mode-user gradient; VS recovery induces a reference-user gradient. When $P$ has non-trivial mass away from its mode and $\eta$ is small, VS moves training out of the mode-oracle regime that Corollary~\ref{cor:entropy-collapse} requires. Appendix~\ref{appendix:vs_theory} proves the bound and discusses what the reference-recovery assumption does and does not buy; a separate $\gamma$-sharpening result there shows that direct aligned prompting exponentially suppresses tail behaviors while VS preserves them.

\paragraph{\cotrain{} lets the simulator and policy co-evolve.}
We update the user simulator on its own turns of the same conversation, so both sides receive gradients from a single rollout. The simulator's mode at each history shifts as training proceeds, and a strategy that exploited yesterday's mode no longer wins against today's: $A_x$ is no longer constant, and the geometric concentration in Corollary~\ref{cor:entropy-collapse} no longer applies (Appendix~\ref{appendix:proof-cotrain} formalizes this via exclusive-lead counters). For the moving target to remain useful, the simulator must be trained with a reward that keeps it in the informative-variation regime (Remark~\ref{rem:informative-variation}) rather than re-collapsing onto a different mode; task-specific reward choices are in \S\ref{sec:setup} and an ablation showing both reward extremes underperform is in Appendix~\ref{appendix:reward_ablation}. For binary rewards the curriculum targets success rate $\approx 0.5$, where within-batch variance $p(1-p)$ peaks at $\sigma^2 = 0.25$ and group-relative advantages have the widest spread.

% Population Co-Training paragraph commented out via Overleaf to defer it
% to the appendix; HEAD's L9-naming clarification ("historical-self-play
% buffer") is preserved here in the comment for future reactivation.
% \paragraph{Population Co-Training as an extension.}
% At any given step, the policy still sees a single simulator. We widen the training partner further by sampling the active simulator at each rollout from a pool of recent checkpoints rather than the current one alone, so the simulator the policy meets is both nonstationary and broader at every step. We call this \emph{Population Co-Training}; the buffer holds historical checkpoints of the same simulator, closer in spirit to historical self-play than to heterogeneous-agent populations. The framework \textsc{SCOPE} realizes it with colocated agent and opponent stacks (Appendix~\ref{appendix:cotrain_framework}). The pool is a FIFO buffer: every four training steps a fresh simulator checkpoint enters and the oldest is dropped. We use $K{=}5$ by default, following~\citet{vinyals2017starcraft}, and ablate $K$ in Appendix~\ref{sec:ablations}. The buffer specification, a formal mixture-gradient bound (Proposition~\ref{prop:population-gradient}), the coverage interpretation of why the mixture helps, and the informative-variation criterion the simulator reward must satisfy (Remark~\ref{rem:informative-variation}) are in Appendix~\ref{appendix:proof-population}. Both interventions add compute over single-simulator RL; we discuss this cost in Appendix~\ref{appendix:limitations_only}.

\section{Experiments}
\label{sec:experiments}

%%SL.4.19: It could be nice to have some discussion explaining the purpose of the experiments. It's perhaps kind of obvious, but the paper has a really nice arc in the theory section explaining the problem and then the solution, and it might be nice to signpost to the user that now that we established it theoretically, let's establish it empirically and show that it really happens in practice.
% --> RESOLVED (2026-04-23): added this signposting paragraph explicitly marking the theory -> empirics transition and enumerating Q1/Q2/Q3.

The theory makes three predictions we test in turn. \textbf{(Q1)} Single-simulator RL should collapse: OOD eval peaks early then declines as the policy locks onto a mode-exploit strategy that fails to transfer. \textbf{(Q2)} The two solutions should each recover the gradient signal at the layer they target: Verbalized Sampling by widening the simulator's per-turn distribution, Co-Training by moving the mode the policy would chase. \textbf{(Q3)} Sampling the simulator from a pool of recent checkpoints should buy a further gain on top of Co-Training, and the size of that gain should depend on how much the pool preserves real variation. We also ask \textbf{(Q4)}: do the LLM-panel gains transfer to real users? Section~\ref{sec:results} answers Q1, Q2, and Q4 (the last on $\tau^2$-bench and P4G via a pre-registered human study); Appendix~\ref{sec:ablations} answers Q3 by isolating pool size and simulator reward.

\subsection{Setup}
\label{sec:setup}

% \begin{wraptable}{r}{0.42\linewidth}
% \centering
% \small
% \setlength{\tabcolsep}{5pt}
% \renewcommand{\arraystretch}{1.1}
% \vspace{-1.2em}
% \caption{Benchmarks covering three quadrants of the role symmetry $\times$ objective alignment taxonomy.}
% \label{tab:taxonomy_benchmarks}
% \begin{tabular}{lcc}
% \toprule
% \textbf{Benchmark} & \textbf{Roles} & \textbf{Objective} \\
% \midrule
% P4G                  & Asym. & Adversarial \\
% $\tau^2$-bench       & Asym. & Cooperative \\
% CooperBench          & Sym.  & Cooperative \\
% \bottomrule
% \end{tabular}
% % \vspace{-1em}
% \end{wraptable}
\paragraph{Tasks.} Three multi-turn benchmarks: \textbf{Persuasion for Good (P4G)}~\citep{wang2019persuasion}, a persuader arguing for a charitable donation against a resistant donor (reward $r = \min(\text{donation}/2, 1)$); \textbf{$\tau^2$-bench}~\citep{barres_2-bench_2025}, customer-service dialogues on retail or airline tasks with binary per-split success; and \textbf{CooperBench}~\citep{khatua_cooperbench_2026}, two coding agents coordinating on a multi-step task with binary success. CooperBench uses Qwen3.5-9B and Qwen3.5-27B since smaller models cannot complete the tasks. P4G uses its donation-based adversarial reward and CooperBench its symmetric task-success reward; on $\tau^2$-bench the simulator is trained with a SPICE-style \emph{curriculum} reward~\citep{liu_spice_2025} that targets within-batch variance $\sigma^2 \approx 0.25$; this choice is essential, since adversarial and cooperative simulator rewards both collapse the simulator onto a new mode and drop eval reward (Appendix~\ref{appendix:reward_ablation}). We evaluate every 16 steps and pick the best-mean-panel-score checkpoint. P4G and $\tau^2$-bench use a 6-simulator panel (3 seen during training plus 3 unseen families); CooperBench uses symmetric cross-play against Claude Haiku 4.5, GPT-5, and Gemini-3-Flash. Selecting on the full mean leaks partial training-simulator signal into selection; per-simulator and seen/unseen breakdowns are in Appendix~\ref{appendix:hyperparameters}.

\paragraph{Training.} All methods share the policy update from \S\ref{sec:background}. For Co-Training, both sides update from the same rollout, the simulator on a task-appropriate objective. The population variant additionally samples the active simulator from a pool of recent checkpoints (full buffer specification in Appendix~\ref{appendix:cotrain_framework}). The framework \textsc{SCOPE} implements all paradigms. Methods differ in per-step training compute: frozen-simulator methods (VS, Ensemble, Persona-Guided) update only the agent, while Co-Training and Population Co-Training update both sides on the same rollout. All comparisons in Tables~\ref{tab:user_sim_results} and~\ref{tab:cooper_results} are at matched optimizer-step count rather than matched compute; per-step multipliers and total GPU-hours are in Appendix~\ref{appendix:hyperparameters}.

\begin{table*}[t]
\centering
\caption{\textbf{User-simulator setting (P4G, $\tau^2$-bench).} Best held-out-panel score across training; subscript = panel-std over six held-out simulators. P4G reward $r=\min(\mathrm{donation}/2, 1)$; $\tau^2$-bench reports per-split success rate (\%). \textbf{Bold} = best per model size; \underline{underline} = runner-up. ``N/A'' on the P4G column for Persona-Guided: the P4G task already conditions on personas, so the baseline does not apply. The frozen ensemble uses $K{=}3$ to match the main-text cost comparison; Population Co-Training uses $K{=}5$ by default, and Appendix~\ref{sec:ablations} sweeps $K \in \{1,3,5,10\}$ showing the population benefit persists at each $K$. Per-method compute and memory breakdown is in Appendix~\ref{appendix:hyperparameters}.}
\label{tab:user_sim_results}
\resizebox{0.92\textwidth}{!}{%
\begin{tabular}{ll ccc}
\toprule
& & \cellcolor{blue!8}\textbf{P4G} & \multicolumn{2}{c}{\cellcolor{green!8}\textbf{$\tau^2$-bench}} \\
\cmidrule(lr){4-5}
\textbf{Model} & \textbf{Method} & Reward $\uparrow$ & Retail $\uparrow$ & Airline $\uparrow$ \\
\midrule
\multirow{8}{*}{Qwen3-4B-Instruct}
 & Base                        & 0.216$_{\pm 0.03}$           & 40.4$_{\pm 3.1}$           & 24.0$_{\pm 2.8}$ \\
 & RL (Single)                 & 0.275$_{\pm 0.12}$           & 46.1$_{\pm 5.2}$           & 29.8$_{\pm 4.3}$ \\
 & + Persona-Guided            & N/A                          & 49.2$_{\pm 3.2}$           & 31.6$_{\pm 3.0}$ \\
 & + Ensemble Models ($K{=}3$) & 0.394$_{\pm 0.04}$           & 57.1$_{\pm 3.4}$           & 40.1$_{\pm 3.6}$ \\
\cmidrule(lr){2-5}
 & + \vs{}                     & \underline{0.484}$_{\pm 0.05}$ & 55.5$_{\pm 3.8}$         & 36.9$_{\pm 3.5}$ \\
 & + \cotrain{}                & 0.438$_{\pm 0.05}$           & \underline{60.5}$_{\pm 3.9}$ & \underline{44.4}$_{\pm 4.0}$ \\
 & + \popcotrain{}             & \textbf{0.508}$_{\pm 0.04}$  & \textbf{62.2}$_{\pm 3.6}$  & \textbf{45.7}$_{\pm 3.7}$ \\
\midrule
\multirow{8}{*}{Qwen3-8B}
 & Base                        & 0.253$_{\pm 0.03}$           & 48.1$_{\pm 3.0}$           & 30.2$_{\pm 2.9}$ \\
 & RL (Single)                 & 0.342$_{\pm 0.11}$           & 52.5$_{\pm 6.1}$           & 35.2$_{\pm 5.9}$ \\
 & + Persona-Guided            & N/A                          & 55.7$_{\pm 3.1}$           & 37.4$_{\pm 3.0}$ \\
 & + Ensemble Models ($K{=}3$) & 0.450$_{\pm 0.04}$           & 62.4$_{\pm 3.3}$           & 43.6$_{\pm 3.5}$ \\
\cmidrule(lr){2-5}
 & + \vs{}                     & \textbf{0.587}$_{\pm 0.05}$  & 60.7$_{\pm 3.5}$           & 40.2$_{\pm 3.4}$ \\
 & + \cotrain{}                & 0.556$_{\pm 0.05}$           & \underline{66.1}$_{\pm 3.7}$ & \underline{48.2}$_{\pm 3.8}$ \\
 & + \popcotrain{}             & \underline{0.568}$_{\pm 0.05}$ & \textbf{67.9}$_{\pm 3.4}$ & \textbf{49.7}$_{\pm 3.6}$ \\
\bottomrule
\end{tabular}%
}
\end{table*}
\paragraph{Baselines.} Six paradigms under matched setup: \emph{Base} (untrained); \emph{RL (Single)} against GPT-5-mini, the standard recipe; \emph{Verbalized Sampling}~\citep{zhang2025verbalizedsamplingmitigatemode}, sampling from the simulator's verbalized response distribution; \emph{Ensemble Models}, cycling $K{=}3$ frozen models from different families; \emph{Co-Training}, pairing the policy with a separately trained simulator; and \emph{Population Co-Training} (ours), sampling the active simulator from a pool of recent checkpoints rather than the latest one alone. Plus a \emph{Persona-Guided} simulator~\citep{abdulhai2026hierarchical}, which conditions a single GPT-5-mini simulator on per-rollout personas. P4G's task already conditions on personas, so Persona-Guided doesn't apply here.

\subsection{Simulator Collapse Reproduces; Both Fixes Recover It}
\label{sec:results}

\begin{figure*}[t]
\centering
\includegraphics[width=\textwidth]{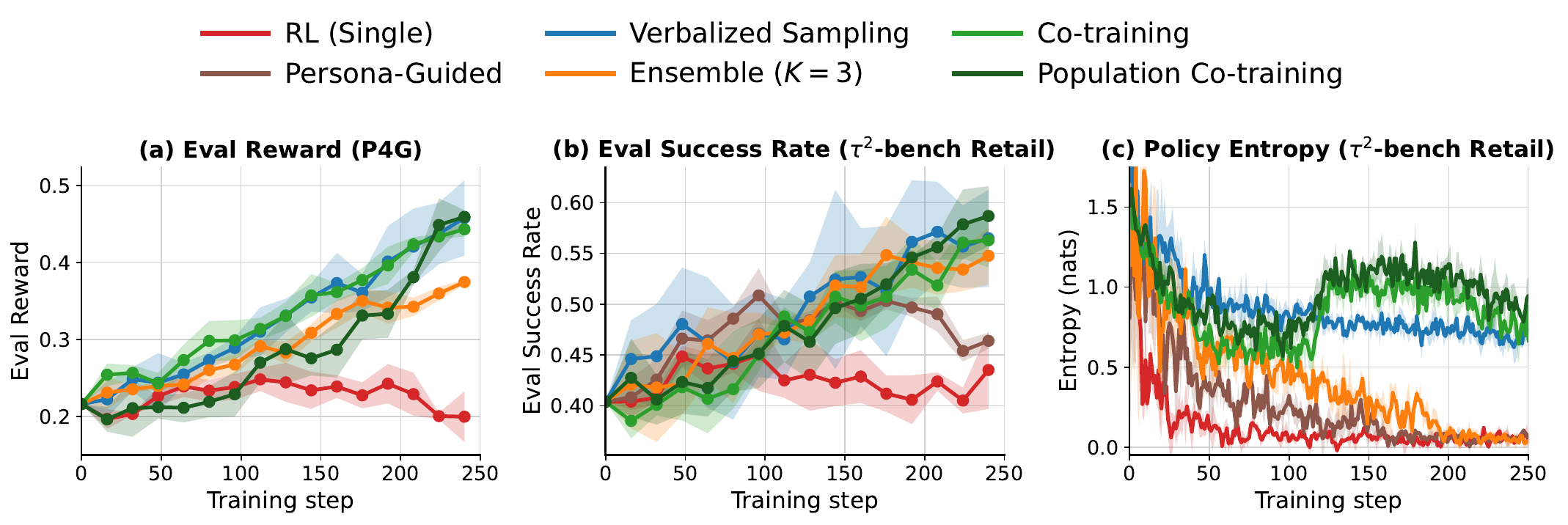}
\caption{\textbf{Both inference- and training-time solutions revive policy entropy.} \textbf{(a)} P4G Eval Reward over training; \textbf{(b)} $\tau^2$-bench Retail Eval Success Rate; \textbf{(c)} $\tau^2$-bench Retail Policy Entropy. RL (Single) rises briefly and collapses below the untrained baseline on both eval panels while its entropy crashes to near zero. \vs{}, Ensemble ($K{=}3$), \cotrain{}, and \popcotrain{} improve eval and preserve entropy; the two \cotrain{} variants additionally exhibit the simulator-update kick pattern in the entropy panel. Persona-Guided (\citealp{abdulhai2026hierarchical}; $\tau^2$-bench only) sits between RL (Single) and the population methods. Curves average three seeds; eval shading is $\pm 1\sigma$ over six held-out evaluator models.} 
% \wyshi{need human study result in the main}
\label{fig:main_curves}
\end{figure*}

\paragraph{Q1: Does single-simulator RL collapse?} Yes, but the collapse is in the curve shape, not the headline number. Table~\ref{tab:user_sim_results} reports the best held-out checkpoint, and RL (Single)'s best is meaningfully above Base on every cell (at Qwen3-4B-Instruct: $46.1$ vs $40.4$ on $\tau^2$-Retail; $29.8$ vs $24.0$ on Airline; $0.275$ vs $0.216$ on P4G). The training curves in Figure~\ref{fig:main_curves} explain why: the best checkpoint is a transient peak that collapses back toward Base on $\tau^2$-Retail before training ends (Appendix Figure~\ref{fig:appendix_curves_tau2} shows the same collapse pattern). At end of training, RL (Single) and Persona-Guided sit within a few points of Base on both $\tau^2$ splits; the Table~\ref{tab:user_sim_results} best-checkpoint values reflect a transient peak, not steady-state behavior. The population methods stay at or above their peak throughout, so their best-checkpoint number is also their steady-state behavior. P4G is milder at both model sizes because the continuous donation reward preserves within-batch variance; the curve still degrades from its peak (Appendix Figure~\ref{fig:appendix_curves_p4g}). The held-out panel is LLM-based; the real-user question is addressed by the human study in Appendix~\ref{appendix:human_study}.

\paragraph{Q2: Do the two proposed solutions recover the signal, and at the layers the theory says?} Both do, and the recovery is most informative when read side-by-side. On Qwen3-4B-Instruct, Verbalized Sampling lifts $\tau^2$-Retail from $46.1$ to $55.5$, Airline from $29.8$ to $36.9$, and P4G reward from $0.275$ to $0.484$. This is most of the available gain, recovered without retraining either side, and is consistent with the mechanism the theory assigns to VS: by querying a verbalized distribution at every simulator turn, VS keeps $\epsilon_\phi$ above zero per turn so the modal-gradient term in Theorem~\ref{thm:modal-gradient} no longer dominates. Persona-Guided, a prompt-level baseline that conditions the simulator on per-task personas, gives a smaller lift ($46.1\!\to\!49.2$ Retail, $29.8\!\to\!31.6$ Airline): prompt-level diversity is a partial fix that does not close the gap to the simulator-level interventions. Co-Training acts on the complementary layer: it lifts Retail and Airline further to $60.5$ and $44.4$ by moving the mode the policy would lock onto across training steps, so Corollary~\ref{cor:entropy-collapse}'s geometric concentration never has a fixed target. Population Co-Training tops both at $62.2$ and $45.7$ and is the best method on every $\tau^2$ split at both model sizes and on P4G at 4B. The one exception is Qwen3-8B P4G, where VS ($0.587$) edges past both Co-Training ($0.556$) and Population Co-Training ($0.568$): the continuous donation reward fits response-level Verbalized Sampling well enough that the within-simulator fix already captures the available signal.

\begin{table}[t]
\centering
\caption{\textbf{Symmetric cooperation (CooperBench).} Held-out success rate (\%) against a 3-model partner panel (Haiku 4.5, GPT-5, Gemini-3-Flash); subscript = panel-std. \textbf{Bold} = best per model size; \underline{underline} = runner-up. $^*$Tinker with LoRA adapters (Qwen3.5-27B only); per-step compute differs from the other rows. CooperBench is symmetric: the asymmetric paradigm labels of Table~\ref{tab:user_sim_results} specialise here to cross-play (frozen partner), Self-play (single model in both roles), and Population self-play (rotating among recent self-play checkpoints).}
\label{tab:cooper_results}
\begin{tabular}{l cc}
\toprule
\textbf{Method} & \cellcolor{orange!8}\textbf{Qwen3.5-9B} & \cellcolor{orange!8}\textbf{Qwen3.5-27B}$^*$ \\
                & Success $\uparrow$ & Success $\uparrow$ \\
\midrule
Base                              & 23.7$_{\pm 5.2}$           & 47.8$_{\pm 4.8}$ \\
Cross-play (Haiku)                & 28.8$_{\pm 5.5}$           & 54.3$_{\pm 5.2}$ \\
+ Cross-play Ensemble ($K{=}3$)   & 29.8$_{\pm 5.3}$           & 56.1$_{\pm 5.0}$ \\
\cmidrule(lr){1-3}
+ \textcolor{CoGreen}{\textbf{Self-play}}                       & \underline{32.8}$_{\pm 6.0}$ & \underline{61.7}$_{\pm 5.5}$ \\
+ \textcolor{PopGreen}{\textbf{Population self-play}} (ours)     & \textbf{33.6}$_{\pm 5.7}$  & \textbf{62.4}$_{\pm 5.3}$ \\
\bottomrule
\end{tabular}
\end{table}

\paragraph{Q3: Symmetric cooperation: cross-play hits a ceiling; co-evolution breaks through.} Beyond the user-simulator setting (P4G, $\tau^2$-bench), CooperBench tests whether the same fixed-partner vs.\ co-evolving-partner mechanism extends to symmetric cooperation, with both sides drawn from the same role. Table~\ref{tab:cooper_results} reports held-out success against a 3-model partner panel (Claude Haiku 4.5, GPT-5, Gemini-3-Flash). Frozen-partner cross-play plateaus at a ceiling set by the partner's capacity; only self-play and population self-play break through, with the pool variant reaching the same peak faster. Detailed training dynamics for Qwen3.5-9B and Qwen3.5-27B are in Appendix Figures~\ref{fig:appendix_curves_cooperbench_9b} and \ref{fig:appendix_curves_cooperbench_27b}.

% \begin{wrapfigure}{r}{0.42\textwidth}
% \centering
% \vspace{-1.7em}
% \includegraphics[width=0.4\textwidth]{figures/training_curves/fig_dynamics_bars.pdf}
% \vspace{-0.8em}
% \caption{Final-step diagnostics, $\tau^2$-bench Retail: zero-variance batch fraction (top) and policy entropy (bottom).}
% \label{fig:dynamics_bars}
% \vspace{-0.8em}
% \end{wrapfigure}

\paragraph{Q4: Do the gains transfer to real users?} We run a human study on $\tau^2$-bench (retail split) and Persuasion for Good with four conditions per task: Base, RL (Single), +\vs{}, and +\cotrain{}, collecting $N{=}40$ Prolific participants per cell. Each session measures the task outcome and a survey is distributed on rating dialogue quality. Co-Training improves $\tau^2$ task outcome over RL (Single), and both Verbalized Sampling and Co-Training improve P4G dialogue naturalness. Full design, sample sizes, and analysis plan are in Appendix~\ref{appendix:human_study}.
\begin{table}[h]
\centering
\scriptsize
\setlength{\tabcolsep}{3.5pt}
\renewcommand{\arraystretch}{1.08}
\vspace{-1em}
\caption{\textbf{Q4: Human study on $\tau^2$-bench and Persuasion for Good.}
We report $\tau^2$-bench task outcome from the objective evaluator ($[0,1]$),
overall satisfaction on a 1--7 Likert scale, P4G donation amount, and P4G
overall satisfaction.
\textbf{Bold} = best per column; \underline{underline} = second best per column.
$^*\,p{<}0.05$, $^{**}\,p{<}0.01$ vs RL (Single).}
\label{tab:human_study_main}
\resizebox{0.75\linewidth}{!}{%
\begin{tabular}{lcccc}
\toprule
& \multicolumn{2}{c}{\cellcolor{green!8}\textbf{$\tau^2$-bench}} & \multicolumn{2}{c}{\cellcolor{blue!8}\textbf{P4G}} \\
\cmidrule(lr){2-3} \cmidrule(lr){4-5}
\textbf{Method}
& Task $\uparrow$
& Natural $\uparrow$
& Donation (\$) $\uparrow$
& Natural $\uparrow$ \\
\midrule
Base
& 0.41$_{\pm 0.22}$
& 4.77$_{\pm 1.89}$
& 0.51$_{\pm 0.60}$
& 3.93$_{\pm 1.91}$ \\

RL (Single)
& 0.43$_{\pm 0.31}$
& 5.11$_{\pm 1.63}$
& 0.46$_{\pm 0.72}$
& 3.21$_{\pm 1.76}$ \\

+\,\vs{}
& \underline{0.63}$_{\pm 0.28}{}^{\mathbf{**}}$
& \underline{5.38}$_{\pm 1.57}$
& \textbf{0.74}$_{\pm 0.59}$
& \underline{4.33}$_{\pm 1.78}{}^{\mathbf{**}}$ \\

+\,\cotrain{}
& \textbf{0.70}$_{\pm 0.43}{}^{\mathbf{**}}$
& \textbf{5.50}$_{\pm 1.81}$
& \underline{0.69}$_{\pm 0.64}$
& \textbf{4.45}$_{\pm 1.67}{}^{\mathbf{**}}$ \\
\bottomrule
\end{tabular}%
}
\vspace{-1em}
\end{table}

% \subsection{Qualitative Study}
% \label{sec:qualitative}

% \todo{finish this part}

% Q3c (response coverage) is left to a follow-up that includes the t-SNE
% projection of held-out P4G responses. The entropy and OOD curves above
% already establish the training-side widening; whether the wider training
% environment translates into a multi-modal agent response distribution at
% deployment is the next question, and we treat it as an empirical claim
% that needs the t-SNE figure to support.

\section{Related Work}
\label{sec:related}

\paragraph{Mode collapse in LLMs.}
RLHF narrows LLM output distributions both empirically~\citep{jiang_artificial_2025, zhang2025noveltybenchevaluatinglanguagemodels, zhang2025verbalizedsamplingmitigatemode, yang2026llmprobabilityconcentrationalignment} and structurally: \citet{gxchen2025klregularized} prove that KL-regularized RL specifies a unimodal optimum by construction. Specialized cases include \emph{reasoning collapse} in agentic RL~\citep{wang2026ragen2reasoningcollapseagentic} and \emph{persona collapse} in role-play~\citep{xiao2026chameleonslimitinvestigatingpersona}. We add \textbf{simulator collapse}: when the mode-collapsed LLM is the training \emph{environment}, it starves the policy gradient.

\paragraph{LLM-based user simulation for RL.}
LLM simulators are the de facto training environment for dialogue and agentic RL~\citep{anthis2025llmsocialsimulationspromising, abdulhai_consistently_2025, qian_userrl_2025, zhao_mua-rl_2025, sun_training_2025, gandhi_learning_2026, abdulhai2026hierarchical}, and increasingly the evaluation side too~\citep{barres_2-bench_2025, zhou_tom-swe_2025}. Their limits are now well-documented: stronger assistants make worse simulators~\citep{naous_flipping_2025}, simulators diverge from real users in preference~\citep{zhou2026sim2real} and behavior distribution~\citep{mehri2026measuring}, and inherit homogeneous cooperative bias from their base models~\citep{chopra2026beyondcooperativesimulators, suh2026quantifyingutilityusersimulators}. Existing mitigations act on the simulator side: behavioral taxonomies~\citep{chen2025ncsim}, theory-of-mind objectives~\citep{zhou_tom-swe_2025}, curiosity rewards~\citep{wan2025enhancingpersonalizedmultiturndialogue}, finer credit assignment~\citep{yu_sotopia-rl_2025, qian_userrl_2025}, evolved persona generators~\citep{chopra2026beyondcooperativesimulators}, or inference-time fixes~\citep{yang2026multiuserlargelanguagemodel}. All optimize against a static simulator distribution. We instead replace it with a co-evolving population and trace the failure to a policy-side mechanism (Theorem~\ref{thm:modal-gradient}, Corollary~\ref{cor:entropy-collapse}).

\paragraph{Multi-agent RL and co-training.}
Self-play has driven gains in games~\citep{silver2016mastering, berner2019dota} and in LLM training across text games~\citep{liu_spiral_2025}, corpus-grounded interaction~\citep{liu_spice_2025}, and reasoning~\citep{zhao_absolute_2025}; \citet{liao2024efficacylanguagemodelselfplay} note its diversity ceiling, motivating dual-model co-training~\citep{ma_cory_2024, feng_drmas_2026, acikgoz_toolr0_2026}. Multi-turn RL has reached collaborative reasoning~\citep{zhou_sweet-rl_2025, hong_natural_2025} and social tasks~\citep{yu_sotopia-rl_2025, tomlin_characterizing_2025} on short horizons (1--3 rounds). We extend co-training to long-horizon dialogue with population-based partner sampling.

\section{Discussion and Conclusion}

We identify \textbf{simulator collapse} as a structural failure for LLMs in multi-agent RL: LLM simulators are mode-collapsed, an RL policy trained against such a simulator inherits that narrowness, the policy's own entropy collapses onto the strategy that wins against the simulator's mode, and the resulting low-entropy policy fails to transfer to unseen simulators or real users.

From this we make three contributions. \textit{First}, we formalize simulator collapse and show it is a structural failure of the training environment. \textit{Second}, we give two complementary solutions: \vs{} at inference and \cotrain{} at training. We release \textbf{SCOPE}, an open framework that unifies multi-model rotation, self-play, and dual-model \cotrain{} behind one interface. \textit{Third}, across Persuasion for Good, $\tau^2$-bench, and CooperBench, single-simulator RL's held-out success peaks early then drops back toward the untrained baseline by end of training; both solutions close most of the gap, and \popcotrain{} takes the strongest held-out task success. All three settings are text-only, two-agent, English, and LLM-panel evaluated; whether the mechanism extends to N-agent populations, multimodal environments, or non-English settings remains open. Both solutions are simple because the bottleneck is in the environment rather than the algorithm. Limitations, broader impact, and future work are in Appendix~\ref{appendix:limitations}.

% \section*{Author Contributions}
% TODO: Fill in before camera-ready.

% \section*{Acknowledgments}
% TODO: Fill in before camera-ready.

% \section*{Ethics Statement}
% TODO: Fill in before camera-ready.

\bibliographystyle{unsrtnat}
\bibliography{references}

\appendix

\clearpage
\section{Limitations, Broader Impact, and Future Work}
\label{appendix:limitations}

\subsection{Limitations}
\label{appendix:limitations_only}
\paragraph{Fixed pool.} The frozen pool's diversity is bounded by whatever models we draw from, and the set stays fixed during training. Adaptive pool curation is left for future work.

\paragraph{LLM evaluation panel.} Our held-out panel is itself a set of aligned LLMs, so it shares RLHF-induced biases with the training simulators~\citep{zhou2026sim2real}. The pre-registered human study on $\tau^2$-bench and Persuasion for Good (Appendix~\ref{appendix:human_study}) is the direct test of real-user transfer.

\paragraph{Task-specific simulator reward.} Co-Training depends on a simulator reward whose curriculum preserves cross-checkpoint variation (Appendix~\ref{appendix:reward_ablation}). We give one such reward that works on the benchmarks we tested; we have not mapped what else would work.

\paragraph{Compute overhead.} Both interventions add compute over single-simulator RL, especially Co-Training where two models update on the same rollouts. This is the cost of escaping a structural failure: a frozen, mode-collapsed simulator is unlikely to produce a policy that transfers to unseen partners, so any fix has to move the simulator's distribution at some point in the loop. Plausible cost reductions: a smaller simulator pool, amortized Verbalized Sampling across turns, or warm-started Co-Training from a single-simulator checkpoint.
% [wyshi] "people may wonder about the cost" --> RESOLVED: added as the fourth caveat in Limitations, with a forward-pointer sentence appended to the end of Section 3.4 (co-training subsection) directing readers to this discussion. Drafted by a fresh agent (Option B, balanced framing) and softened "cannot produce a generalizing policy" -> "is unlikely to produce a policy that generalizes" to match the paper's actual theorem scope (Theorem 1 says "biases the gradient", not "blocks transfer").

\subsection{Broader Impact}
\label{appendix:broader_impact}
We release an open framework for population-based multi-agent RL that unifies heterogeneous simulator rotation, self-play, and Co-Training behind one interface. As RL moves toward multi-agent settings, the bottleneck is shifting from perfecting individual simulators to curating diverse populations, which calls for infrastructure that brings population-based training closer in cost to single-simulator training. The diagnostic chain in \S\ref{sec:theory_analysis} names a failure mode that agentic RL pipelines can now check for, and that informs which simulator and which reward to choose at the start of a training run.

\subsection{Future Work}
\label{appendix:future_work}
Several extensions follow from our results. \textit{(i)}~\emph{Adaptive simulator populations}: the buffer can be replaced with a learned curator that decides which past checkpoints to keep based on training-time signal. \textit{(ii)}~\emph{Learned simulator-reward shaping}: Appendix~\ref{sec:ablations} shows the curriculum reward matters more than the pool itself, so meta-learning a simulator reward that maximizes cross-checkpoint disagreement follows directly. \textit{(iii)}~\emph{Beyond two-agent settings}: extending SCOPE to $N{\geq}3$ multi-agent populations and mixed cooperative/adversarial task mixtures should test how far the simulator-collapse mechanism generalizes. \textit{(iv)}~\emph{Other RLHF regimes}: we conjecture analogous environment-collapse phenomena in reasoning, code, and tool-use RL whenever the verifier or grader is itself a mode-collapsed LLM. The diagnostic chain of \S\ref{sec:theory_analysis} should transfer with minimal change.

\clearpage
\section{Theory: Simulator Collapse}
\label{appendix:proof}

This appendix gives full proofs of the theorems in Section~\ref{sec:theory_analysis}, following the main-text chain. Simulator collapse turns the policy gradient into a mode-user gradient (Appendix~\ref{appendix:proof-modal-gradient}); simulator-side reward variance vanishes under the same coupling (Appendix~\ref{appendix:proof-user-variance}); group-relative updates then concentrate policy mass on a modal-exploit set (Appendix~\ref{appendix:proof-log-odds}); the trained policy fails when held-out users emit response types the simulator rarely produced (Appendix~\ref{appendix:proof-coverage}); the mixture-gradient bound for the population-co-training extension follows by linearity (Appendix~\ref{appendix:proof-population}). Appendix~\ref{appendix:sharpening} restates the $\gamma$-sharpening result of \citet{zhang2025verbalizedsamplingmitigatemode}, which motivates Definition~\ref{def:conditional-collapse} but is not used in any proof below.

\subsection{Notation summary}
\label{appendix:notation}

The most frequently used quantities in \S\ref{sec:theory} and this appendix.

\begin{center}
\begin{tabular}{@{}ll@{}}
\toprule
\textbf{Symbol} & \textbf{Meaning} \\
\midrule
$a_\phi^\star(s, a^\pi)$ & simulator's mode at $(s, a^\pi)$ \\
$\epsilon_\phi(s, a^\pi)$ & per-turn collapse error: $1 - \phi_\psi(a_\phi^\star \mid s, a^\pi)$ \\
$\epsilon^\star$ & collapse threshold (Definition~\ref{def:conditional-collapse}) \\
$\bar{\epsilon}_H(\theta)$ & accumulated collapse error along an $H$-turn rollout \\
$b, \ell_{\mathrm{turn}}, B$ & per-token / per-turn / trajectory score bounds; $B \le b\,\ell_{\mathrm{turn}}\,H$ \\
\midrule
$Y, q_k(y \mid x)$ & strategy abstraction; strategy distribution at update $k$ \\
$A_x, \Delta_x, g_x$ & modal-exploit set, mode-exploit gap, geometric concentration rate \\
\midrule
$p^{\mathrm{VS}}_\phi, P$ & verbalized simulator distribution; pre-RLHF reference distribution \\
$\eta(s, a^\pi), \bar{\eta}_H(\theta)$ & per-turn and accumulated reference-recovery TV error (Proposition~\ref{prop:vs-lower-bound}) \\
$\rho, m, \lambda$ & reference-mass on $B$, modal mass, per-behavior bound on $B$ (Proposition~\ref{prop:tail-suppression}) \\
\midrule
$\Phi, \bar{\phi}, m_k$ & checkpoint buffer, mixture simulator, per-checkpoint peak mass \\
$\bar{\epsilon}_H^{\Phi}$ & accumulated collapse error under population mixing \\
\bottomrule
\end{tabular}
\end{center}

\subsection{Preliminaries}
\label{appendix:prelim}

We work in the POMDP setup of Section~\ref{sec:background}: each trajectory $\tau = (s_0, a_0^\pi, a_0^\phi, \ldots, s_T)$ has terminal reward $R(\tau) \in [0, R_{\max}]$. Write $P_\phi^\theta(\tau)$ for the trajectory distribution under $(\pi_\theta, \phi)$ and $J_\phi(\theta) = \mathbb{E}_{\tau \sim P_\phi^\theta}[R(\tau)]$.

\paragraph{Policy-gradient identity.}
REINFORCE~\citep{Williams2004SimpleSG} writes
\begin{equation}
    \nabla_\theta J_\phi(\theta) \;=\; \mathbb{E}_{\tau \sim P_\phi^\theta}\!\bigl[R(\tau)\, S_\theta(\tau)\bigr], \qquad S_\theta(\tau) \;=\; \sum_{\ell \in \mathcal{I}_\pi(\tau)} \nabla_\theta \log \pi_\theta(a_\ell \mid h_\ell),
\label{eq:pg-identity}
\end{equation}
where $\mathcal{I}_\pi(\tau)$ indexes the agent-decision tokens. With trajectory length $L$ and per-token score bound $\|\nabla_\theta \log \pi_\theta(a_\ell \mid h_\ell)\| \le b$, the trajectory score satisfies $\|S_\theta(\tau)\| \le L b =: B$. Since each turn produces a bounded number of agent-decision tokens $\ell_{\mathrm{turn}}$, the trajectory length satisfies $L \le \ell_{\mathrm{turn}} H$, so the trajectory-score bound $B = Lb \le b\,\ell_{\mathrm{turn}}\, H$ is linear in the horizon $H$. In our training runs the bound is enforced by gradient clipping at norm $1.0$ (Table~\ref{tab:shared_hyperparameters}).

\paragraph{Trajectory TV under maximal coupling.}
For any two trajectory distributions $P, Q$ on $\mathcal{T}$ there exists a coupling $\nu$ on $\mathcal{T} \times \mathcal{T}$ such that $\Pr_{(\tau, \tau') \sim \nu}[\tau \ne \tau'] = D_{\mathrm{TV}}(P, Q)$. We instantiate this turn by turn: at simulator turn $t$, given a matched prefix, the maximal coupling between $\phi_\psi(\cdot \mid s_t, a_t^\pi)$ and $\delta_{a_\phi^\star(s_t, a_t^\pi)}$ disagrees with probability $\epsilon_\phi(s_t, a_t^\pi)$ (Definition~\ref{def:conditional-collapse}).

\paragraph{Bounded-integrand TV lemma.}
For any bounded vector-valued $f$ with $\|f\|_\infty \le M$ and any two probability measures $P, Q$,
\begin{equation}
    \bigl\lVert \mathbb{E}_P[f] - \mathbb{E}_Q[f] \bigr\rVert \;\le\; 2 M\, D_{\mathrm{TV}}(P, Q).
\label{eq:tv-bound-bdd-fn}
\end{equation}
This follows from the variational form of TV applied coordinate-wise.

\paragraph{KL-regularized RL closed form.}
For the objective $\max_\pi \mathbb{E}_\pi[R] - \beta\, \mathrm{KL}(\pi \,\|\, \pi_{\mathrm{ref}})$, the optimum is $\pi_\beta^\star(y) \propto \pi_{\mathrm{ref}}(y) \exp(R(y)/\beta)$~\citep{rafailov2024directpreferenceoptimizationlanguage, gxchen2025klregularized}. We use this only in Appendix~\ref{appendix:sharpening}, to motivate Definition~\ref{def:conditional-collapse}.

\begin{remark}[Behavioral collapse as action-space coarsening]
\label{rem:behavioral-coarsening}
Formally, $\epsilon_\phi$ in Definition~\ref{def:conditional-collapse} is defined over the simulator's token-action distribution $\phi_\psi(\cdot \mid s, a^\pi)$, so the literal quantity is concentration around the modal token sequence $a_\phi^\star$; the behavioral reading corresponds to a coarsening of this action space, such as a projection onto dialogue-act labels or strategy clusters. For any measurable projection $\Pi$ of the action space, the TV distance from $\Pi_*\phi_\psi$ to the corresponding modal point mass is non-increasing, so the gradient-bias bound of Theorem~\ref{thm:modal-gradient} continues to apply on the coarsened distribution. We do not commit to a specific coarsening in this paper; the transcript inspections in Appendix~\ref{appendix:collapse_examples} illustrate the qualitative behavioral patterns we have in mind.
\end{remark}

\subsection{Simulator Mode Collapse via \texorpdfstring{$\gamma$}{gamma}-Sharpening (Motivation)}
\label{appendix:sharpening}

\noindent\textbf{Why are aligned LLMs mode-collapsed in the first place?} \citet{zhang2025verbalizedsamplingmitigatemode} trace this to typicality bias in RLHF: aligned LLMs inherit a preference for high-likelihood responses from their reward models, and KL-regularized RL compounds that bias into an exponential concentration of probability mass. We restate their observation as a direct specialization of the closed-form optimum of KL-regularized RL. The proposition below motivates Definition~\ref{def:conditional-collapse} but is not used in any proof in this appendix.

\begin{proposition}[$\gamma$-Sharpening, following \citealt{zhang2025verbalizedsamplingmitigatemode}]
\label{prop:sharpening}
Let simulator $\phi$ be trained via KL-regularized RLHF with reference $\phi_{\mathrm{ref}}$, KL penalty $\beta > 0$, and reward model
\begin{equation}
r_\phi(s, y) \;=\; r_{\mathrm{true}}(s, y) + \alpha \log \phi_{\mathrm{ref}}(y \mid s) + \epsilon(s)
\label{eq:reward_decomp}
\end{equation}
for some typicality-bias weight $\alpha > 0$, true-quality term $r_{\mathrm{true}}$, and prompt-specific offset $\epsilon(s)$. Then
\begin{equation}
\phi^*(y \mid s) \;\propto\; \phi_{\mathrm{ref}}(y \mid s)^{\gamma} \cdot \exp\!\bigl(r_{\mathrm{true}}(s, y)/\beta\bigr), \qquad \gamma \;:=\; 1 + \alpha/\beta \;>\; 1.
\label{eq:sharpened_sim}
\end{equation}
\end{proposition}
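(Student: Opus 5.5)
The plan is to substitute the reward decomposition \eqref{eq:reward_decomp} into the KL-regularized closed form recalled in Appendix~\ref{appendix:prelim}. For fixed $s$, the simulator objective $\max_{\phi} \mathbb{E}_{y\sim\phi(\cdot\mid s)}[r_\phi(s,y)] - \beta\,\mathrm{KL}(\phi(\cdot\mid s)\,\|\,\phi_{\mathrm{ref}}(\cdot\mid s))$ is maximized by
\[
\phi^*(y\mid s)\;\propto\;\phi_{\mathrm{ref}}(y\mid s)\exp\!\bigl(r_\phi(s,y)/\beta\bigr).
\]
If one wants the result self-contained, this can be re-derived in one line via Gibbs' variational principle. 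Writing $Z(s)=\sum_y \phi_{\mathrm{ref}}(y\mid s)e^{r_\phi(s,y)/\beta}$, the objective equals
\[
\beta\log Z(s)-\beta\,\mathrm{KL}\bigl(\phi\,\|\,\phi_{\mathrm{ref}} e^{r_\phi/\beta}/Z\bigr),
\]
so the optimum is the tilted distribution and is unique.

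Next I expand the exponent, $\exp(r_\phi/\beta)=\exp(r_{\mathrm{true}}/\beta)\cdot\phi_{\mathrm{ref}}(y\mid s)^{\alpha/\beta}\cdot\exp(\epsilon(s)/\beta)$. The factor $\exp(\epsilon(s)/\beta)$ does not depend on $y$, so it is absorbed into the normalizer and drops out of the proportionality. Collecting the powers of $\phi_{\mathrm{ref}}$ gives the exponent $1+\alpha/\beta=\gamma$, which is \eqref{eq:sharpened_sim}. Finally, $\gamma>1$ follows immediately from $\alpha>0$ and $\beta>0$.

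The only step needing care is well-definedness rather than algebra. The normalizer $\sum_y \phi_{\mathrm{ref}}(y\mid s)^{\gamma}e^{r_{\mathrm{true}}/\beta}$ must be finite and positive, and $\log\phi_{\mathrm{ref}}$ equals $-\infty$ off the support of $\phi_{\mathrm{ref}}$.

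\begin{itemize}
\item \emph{Off-support responses.} I handle these by restricting to $\mathrm{supp}\,\phi_{\mathrm{ref}}(\cdot\mid s)$. Any $\phi$ with mass outside that support has infinite KL, so both sides vanish there, with the convention $0^\gamma=0$.
\item \emph{Finiteness of the normalizer.} I assume $r_{\mathrm{true}}$ is bounded above, as in the bounded-reward setting used throughout. Then $\phi_{\mathrm{ref}}^\gamma\le\phi_{\mathrm{ref}}$ (since $\gamma>1$ and $\phi_{\mathrm{ref}}\le1$) gives summability over the countable token-sequence space.
\end{itemize}

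I expect this integrability remark to be the main (mild) obstacle. Once it is settled, the rest is direct substitution.
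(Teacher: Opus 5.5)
Your proposal is correct and matches the paper's proof. Both substitute the reward decomposition into the closed-form KL-regularized optimum $\phi^*\propto\phi_{\mathrm{ref}}\exp(r_\phi/\beta)$, absorb the $y$-independent factor $e^{\epsilon(s)/\beta}$ into the normalizer, and collect the exponent $1+\alpha/\beta=\gamma$. Your Gibbs-variational derivation of the closed form and your support and normalizability remarks are sound additions that the paper leaves implicit; note that the normalizability remark relies on the extra assumption that $r_{\mathrm{true}}$ is bounded above, which the statement itself does not include.
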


\begin{proof}
Substituting~\eqref{eq:reward_decomp} into the closed-form KL-regularized optimum $G_\beta(y) \propto \phi_{\mathrm{ref}}(y \mid s)\exp(r_\phi(s,y)/\beta)$ (Appendix~\ref{appendix:prelim}),
\begin{equation}
\phi^*(y \mid s) = \tfrac{e^{\epsilon(s)/\beta}}{Z(s)} \phi_{\mathrm{ref}}(y \mid s)^{1+\alpha/\beta} \exp\!\bigl(r_{\mathrm{true}}(s, y)/\beta\bigr).
\end{equation}
The prompt-specific factor $e^{\epsilon(s)/\beta}$ absorbs into the partition function, leaving~\eqref{eq:sharpened_sim}. Empirical typicality-bias estimates put $\alpha \approx 0.5$--$0.65$~\citep{zhang2025verbalizedsamplingmitigatemode}; combined with standard $\beta \in [0.01, 0.1]$ this places $\gamma$ in the $6$--$66$ range, where the mode carries essentially all the mass of $\phi^*$.
\end{proof}

Proposition~\ref{prop:sharpening} is the specialization; the broader structural pressure toward unimodal solutions in KL-regularized RL is also documented by \citet{gxchen2025klregularized}, who show this concentration arises even without a typicality bias. Either route leads to the same conclusion. The proofs below only invoke the measurable condition that $\epsilon_\phi(s_t, a_t^\pi)$ is small at the simulator turns visited during training (Definition~\ref{def:conditional-collapse}).

\subsection{Proof of Theorem~\ref{thm:modal-gradient}}
\label{appendix:proof-modal-gradient}

\paragraph{Step 1: Per-turn coupling.}
Couple the trajectory $\tau$ in $M_\phi$ and $\tau'$ in $M_{\mathrm{mode}}$ as follows. Both runs use the same task and the same agent randomness. At each simulator turn $t$, given a matched prefix, sample $(a_t^\phi, a_t^{\phi,\star})$ from the maximal coupling between $\phi_\psi(\cdot \mid s_t, a_t^\pi)$ and $\delta_{a_\phi^\star(s_t, a_t^\pi)}$; under maximal coupling the two samples agree with probability $1 - \epsilon_\phi(s_t, a_t^\pi)$. Once they disagree, the prefixes decouple and the remaining simulator turns are sampled independently.

Let $D = \mathbb{I}[\tau \ne \tau']$. By the union bound over simulator turns,
\begin{equation}
\Pr[D = 1] \;\le\; \mathbb{E}\!\Bigl[\,\sum_{t=1}^{H} \epsilon_\phi(s_t, a_t^\pi)\Bigr] \;=\; \bar{\epsilon}_H(\theta).
\label{eq:union-bound-app}
\end{equation}
By the coupling characterization of TV,
\begin{equation}
D_{\mathrm{TV}}\!\bigl(P_\phi^\theta,\, P_{\mathrm{mode}}^\theta\bigr) \;\le\; \Pr[D = 1] \;\le\; \bar{\epsilon}_H(\theta),
\label{eq:tv-bound-app}
\end{equation}
which proves the trajectory-TV bound stated in Theorem~\ref{thm:modal-gradient}.

\paragraph{Step 2: From trajectory TV to gradient norm.}
Apply~\eqref{eq:pg-identity} to both objectives:
\begin{equation*}
\nabla_\theta J_\phi(\theta) - \nabla_\theta J_{\mathrm{mode}}(\theta) \;=\; \mathbb{E}_{\tau \sim P_\phi^\theta}\!\bigl[R(\tau)\, S_\theta(\tau)\bigr] - \mathbb{E}_{\tau \sim P_{\mathrm{mode}}^\theta}\!\bigl[R(\tau)\, S_\theta(\tau)\bigr].
\end{equation*}
The integrand $f(\tau) = R(\tau)\, S_\theta(\tau)$ satisfies $\|f(\tau)\| \le R_{\max}\, \|S_\theta(\tau)\| \le R_{\max} B$. Applying the bounded-integrand TV bound~\eqref{eq:tv-bound-bdd-fn} with $M = R_{\max} B$ together with the trajectory-TV bound from Step~1,
\begin{equation*}
\bigl\lVert \nabla_\theta J_\phi(\theta) - \nabla_\theta J_{\mathrm{mode}}(\theta) \bigr\rVert \;\le\; 2 R_{\max} B \cdot D_{\mathrm{TV}}\!\bigl(P_\phi^\theta, P_{\mathrm{mode}}^\theta\bigr) \;\le\; 2 B R_{\max}\, \bar{\epsilon}_H(\theta),
\end{equation*}
which is~\eqref{eq:modal-gradient-bound}. The specialization $\bar{\epsilon}_H(\theta) \le H \epsilon$ when $\epsilon_\phi(s_t, a_t^\pi) \le \epsilon$ on all visited turns follows immediately.
\hfill$\square$

\paragraph{Regime where the bound is informative.}
Substituting $B \le b\,\ell_{\mathrm{turn}}\, H$ makes the horizon dependence explicit: the gradient bound is at most $2\,b\,\ell_{\mathrm{turn}}\, R_{\max} \cdot H \cdot \bar{\epsilon}_H(\theta)$. When the per-turn collapse error is roughly constant in $t$, $\bar{\epsilon}_H \approx \epsilon_{\mathrm{avg}} \cdot H$, so the bound scales as $O(H^2)$ in horizon at a fixed per-turn collapse rate. For $\tau^2$-bench at $H=30$ the horizon prefactor multiplying $\bar{\epsilon}_H$ is $30 \cdot b\,\ell_{\mathrm{turn}}\, R_{\max}$, while for CooperBench at $H=50$ it is $50 \cdot b\,\ell_{\mathrm{turn}}\, R_{\max}$, so at the same per-turn collapse rate the CooperBench slack is larger. The bound is informative when $\bar{\epsilon}_H \ll 1$, equivalently when the average per-turn deviation $\bar{\epsilon}_H/H$ falls below $1/H$. Figure~\ref{fig:training_dynamics} measures this regime empirically via the zero-variance batch fraction, which climbs from $60\%$ to over $85\%$ during single-simulator training. Outside the regime the bound loosens. Both Verbalized Sampling and Co-Training push the system out of strong collapse by keeping $\epsilon_\phi$ above zero per turn.

\subsection{Proof of Lemma~\ref{lem:user-variance}}
\label{appendix:proof-user-variance}

Fix $x$ and $\xi_\pi$. Under the same coupling as Appendix~\ref{appendix:proof-modal-gradient} (now with policy randomness fixed at $\xi_\pi$), the simulator trajectory $\tau$ and the modal trajectory $\tau^\star$ agree except on an event of probability at most $\epsilon_H(x, \xi_\pi)$. When they agree, $R(\tau) = R(\tau^\star) =: c$, which is constant given $(x, \xi_\pi)$.

For any random variable $X \in [0, R_{\max}]$ that equals a constant $c$ except on an event $\mathcal{A}^c$ of probability at most $p$,
\begin{equation*}
\mathrm{Var}(X) \;\le\; \mathbb{E}\!\bigl[(X - c)^2\bigr] \;=\; \mathbb{E}\!\bigl[(X - c)^2\, \mathbb{I}[\mathcal{A}^c]\bigr] \;\le\; R_{\max}^2\, \Pr[\mathcal{A}^c] \;\le\; R_{\max}^2\, p.
\end{equation*}
Setting $X = R_x$ and $p = \epsilon_H(x, \xi_\pi)$ yields the bound in Lemma~\ref{lem:user-variance}. Taking expectation over $\xi_\pi$ gives the marginal version $\mathbb{E}_{\xi_\pi}[\mathrm{Var}_{\xi_U}(R_x \mid x, \xi_\pi)] \le R_{\max}^2\,\mathbb{E}_{\xi_\pi}[\epsilon_H]$.
\hfill$\square$

\subsection{Proofs of Proposition~\ref{prop:log-odds} and Corollary~\ref{cor:entropy-collapse}}
\label{appendix:proof-log-odds}

We restate the auxiliary assumption and the policy-gradient step that Proposition~\ref{prop:log-odds} requires, then give the proof.

\begin{assumption}[Mode-exploit gap]
\label{assump:modal-gap}
For task $x$ there is a set $A_x$ and gap $\Delta_x > 0$ such that for every $y \in A_x$ and $y' \notin A_x$, $Q_{\mathrm{mode}}(x, y) \ge Q_{\mathrm{mode}}(x, y') + \Delta_x$.
\end{assumption}

The mode-user value $Q_{\mathrm{mode}}(x, y) = \mathbb{E}[R(\tau) \mid x, Y{=}y, M_{\mathrm{mode}}]$ is the expected return when the simulator always emits $a_\phi^\star$. The assumption says the collapsed simulator rewards a narrow family of agent strategies more than the alternatives (in persuasion, a fixed donation script; in customer service, a shortcut that extracts information from an overly helpful user).

We use the standard log-odds form of a KL-regularized softmax policy-gradient step: positive estimated advantage increases the relative log probability, up to bounded optimization error $\rho_x$:
\begin{equation}
    \log \tfrac{q_{k+1}(y \mid x)}{q_{k+1}(y' \mid x)} \;\ge\; \log \tfrac{q_k(y \mid x)}{q_k(y' \mid x)} + \eta\bigl(\widehat{Q}_\phi(x, y) - \widehat{Q}_\phi(x, y')\bigr) - \rho_x.
\label{eq:softmax-pg-logodds}
\end{equation}
Define $g_x = \eta(\Delta_x - 2 R_{\max} \epsilon_x - 2 \zeta_x) - \rho_x$, where $\epsilon_x$ and $\zeta_x$ bound the simulator-collapse and $Q$-estimation errors at task $x$.

\paragraph{Proof of Proposition~\ref{prop:log-odds}.}
Fix $y \in A_x$ and $y' \notin A_x$. Combining the assumed errors $|Q_\phi - Q_{\mathrm{mode}}| \le R_{\max} \epsilon_x$ and $|\widehat{Q}_\phi - Q_\phi| \le \zeta_x$ with the mode-exploit gap (Assumption~\ref{assump:modal-gap}),
\begin{align*}
\widehat{Q}_\phi(x, y) - \widehat{Q}_\phi(x, y') \;&\ge\; Q_\phi(x, y) - Q_\phi(x, y') - 2 \zeta_x \\
&\ge\; \bigl(Q_{\mathrm{mode}}(x, y) - Q_{\mathrm{mode}}(x, y')\bigr) - 2 R_{\max} \epsilon_x - 2 \zeta_x \\
&\ge\; \Delta_x - 2 R_{\max} \epsilon_x - 2 \zeta_x.
\end{align*}
Plugging into the softmax / KL-constrained log-odds step~\eqref{eq:softmax-pg-logodds},
\begin{equation}
\log \tfrac{q_{k+1}(y \mid x)}{q_{k+1}(y' \mid x)} \;\ge\; \log \tfrac{q_k(y \mid x)}{q_k(y' \mid x)} + g_x,
\qquad g_x := \eta(\Delta_x - 2 R_{\max} \epsilon_x - 2 \zeta_x) - \rho_x.
\label{eq:pairwise-logodds}
\end{equation}
Eq.~\eqref{eq:pairwise-logodds} holds for every pair $(y, y')$ with $y \in A_x$ and $y' \notin A_x$, so the pairwise ratio satisfies $q_{k+1}(y)/q_{k+1}(y') \ge e^{g_x}\, q_k(y)/q_k(y')$. Summing the numerator over $y \in A_x$ and the denominator over $y' \notin A_x$ gives
\begin{equation*}
q_{k+1}(A_x \mid x)\, q_k(A_x^c \mid x) \;\ge\; e^{g_x}\, q_k(A_x \mid x)\, q_{k+1}(A_x^c \mid x),
\end{equation*}
which is the set-level inequality
\begin{equation*}
\log \tfrac{q_{k+1}(A_x \mid x)}{q_{k+1}(A_x^c \mid x)} \;\ge\; \log \tfrac{q_k(A_x \mid x)}{q_k(A_x^c \mid x)} + g_x,
\end{equation*}
matching~\eqref{eq:log-odds-growth}. The update strictly increases $q_k(A_x \mid x)$ whenever $g_x > 0$.
\hfill$\square$

\paragraph{Proof of Corollary~\ref{cor:entropy-collapse}.}
Let $L_k = \log \tfrac{q_k(A_x \mid x)}{q_k(A_x^c \mid x)}$ and $g_x = \eta(\Delta_x - 2 R_{\max} \epsilon_x - 2 \zeta_x) - \rho_x > 0$ by assumption. Proposition~\ref{prop:log-odds} gives $L_{k+1} \ge L_k + g_x$, so by induction $L_k \ge L_0 + k g_x$. Converting log-odds back to mass via the logistic $\sigma(t) = 1/(1 + e^{-t})$,
\begin{equation*}
q_k(A_x \mid x) \;=\; \sigma(L_k) \;\ge\; \sigma(L_0 + k g_x) \;=\; \frac{1}{1 + \tfrac{1 - q_0(A_x \mid x)}{q_0(A_x \mid x)}\, e^{-k g_x}},
\end{equation*}
which is~\eqref{eq:mass-concentration}. The right-hand side approaches $1$ geometrically in $k$ with rate $g_x$.
\hfill$\square$

\begin{remark}[Strategy entropy vs.\ token entropy]
\label{rem:strategy-entropy}
Corollary~\ref{cor:entropy-collapse} is a statement about strategy entropy, while Figure~\ref{fig:training_dynamics} plots token-level entropy. The two are linked by $H(S \mid x) = H(Y \mid x) + H(S \mid Y, x)$, where $S$ is the token sequence and $Y = f_x(S)$ is its strategy cluster. Token-level entropy collapse therefore lower-bounds strategy concentration only conditionally: strategy concentration implies a token-entropy drop when the residual term $H(S \mid Y, x)$ is small, i.e., when the exploit strategies in $A_x$ are themselves low-entropy text patterns. Appendix~\ref{appendix:collapse_examples} reports late-training within-batch rollouts that are nearly word-for-word the same, consistent with this regime; a quantitative within-batch overlap study is left for future work.
\end{remark}

\subsection{Co-Training breaks geometric concentration}
\label{appendix:proof-cotrain}

The geometric concentration in Corollary~\ref{cor:entropy-collapse} assumes the mode-exploit set $A_x$ is fixed across training. Under Co-Training, $A_x^{(k)}$ shifts with each simulator update. This subsection formalizes why the shifting breaks the concentration: the log-odds growth in Proposition~\ref{prop:log-odds} applies to the current exploit set $A_x^{(k)}$, so the net log-odds for any single strategy $y$ depends on how often $y$ has an exclusive membership lead over alternatives.

\paragraph{Exclusive-lead counters.}
For a pair of strategies $(y, y')$, define the exclusive-lead counters after $K$ updates as
\[
N_K^+(y, y') = |\{k < K : y \in A_x^{(k)},\ y' \notin A_x^{(k)}\}|,
\qquad
N_K^-(y, y') = |\{k < K : y \notin A_x^{(k)},\ y' \in A_x^{(k)}\}|.
\]
$N_K^+$ counts updates where $y$ is in the exploit set and $y'$ is not, so $y$'s log-odds grow by $g_x$ via Proposition~\ref{prop:log-odds}. $N_K^-$ counts the reverse.

\begin{lemma}[Net log-odds under shifting exploit set]
\label{lem:cotrain-logodds}
Under the same softmax update as Proposition~\ref{prop:log-odds} with mode-exploit gap $\Delta_x$, the pairwise log-odds after $K$ updates satisfy
\[
\log\!\tfrac{q_K(y \mid x)}{q_K(y' \mid x)} \;\ge\; \log\!\tfrac{q_0(y \mid x)}{q_0(y' \mid x)} \;+\; g_x \cdot \bigl(N_K^+(y, y') - N_K^-(y, y')\bigr).
\]
\end{lemma}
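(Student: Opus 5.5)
The plan is to telescope the pairwise log-odds over the $K$ updates, $\log\tfrac{q_K(y\mid x)}{q_K(y'\mid x)} - \log\tfrac{q_0(y\mid x)}{q_0(y'\mid x)} = \sum_{k<K} \delta_k$, where $\delta_k$ is the one-step change in $\log\tfrac{q_k(y\mid x)}{q_k(y'\mid x)}$. I would then bound each $\delta_k$ according to how the pair $(y,y')$ sits relative to the current exploit set $A_x^{(k)}$. There are four cases:
\begin{itemize}
\item \emph{exclusive lead for $y$}: $y\in A_x^{(k)}$, $y'\notin A_x^{(k)}$;
\item \emph{exclusive lead for $y'$}: $y\notin A_x^{(k)}$, $y'\in A_x^{(k)}$;
\item \emph{both in} $A_x^{(k)}$;
\item \emph{both out} of $A_x^{(k)}$.
\end{itemize}
Summing the per-step bounds over steps gives $g_x N_K^+(y,y')$ from the first case and $-g_x N_K^-(y,y')$ from the second, provided the last two cases contribute nonnegatively. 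That yields the stated inequality.

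For the first case I would apply the pairwise step inequality \eqref{eq:pairwise-logodds} from the proof of Proposition~\ref{prop:log-odds} verbatim. That derivation needs the mode-exploit gap (Assumption~\ref{assump:modal-gap}) only for the current mode simulator $\phi^{(k)}$ and its exploit set $A_x^{(k)}$. It therefore gives $\delta_k \ge g_x$ whenever we assume the gap $\Delta_x$ and the error bounds $\epsilon_x,\zeta_x,\rho_x$ hold uniformly over the co-training iterates. I would state that uniformity explicitly as the standing hypothesis "under the same softmax update."

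The other three cases are the main obstacle. The one-sided log-odds step \eqref{eq:softmax-pg-logodds} only lower-bounds the growth of the favoured strategy, which cannot limit how fast $y$ loses ground. So I would strengthen the update model to its natural two-sided form: the exact KL-regularized softmax step $\delta_k = \eta\bigl(\widehat Q^{(k)}_\phi(x,y)-\widehat Q^{(k)}_\phi(x,y')\bigr) + r_k$, with $|r_k|\le\rho_x$. The three remaining cases then need the following.
\begin{itemize}
\item \emph{Exclusive lead for $y'$}: I need $\delta_k \ge -g_x$. This requires the cross-class value difference to be at most symmetric to the gap, i.e. $\widehat Q^{(k)}(x,y')-\widehat Q^{(k)}(x,y) \le \Delta_x - 2R_{\max}\epsilon_x - 2\zeta_x - 2\rho_x/\eta$. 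One reading that delivers this is that the exploit set is defined with a gap that is exact up to the stated error terms, so the increment and the decrement have equal magnitude.
\item \emph{Both in or both out}: I need $\delta_k\ge 0$. This holds if strategies in the same class are treated as equivalent, so that their estimated values tie and the optimization error is absorbed into the class-level bookkeeping.
\end{itemize}

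Concretely, I would add these two conditions as explicit clauses of the "same softmax update" hypothesis, since the lemma as stated does not follow from \eqref{eq:softmax-pg-logodds} alone. With them in place, the telescoped sum gives the claim. The qualitative conclusion then follows as intended: when membership in $A_x^{(k)}$ alternates so that $N_K^+-N_K^-$ stays bounded, no pair accumulates log-odds linearly in $K$, and the geometric concentration of Corollary~\ref{cor:entropy-collapse} fails.
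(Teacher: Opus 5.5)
Your four-case telescoping is the same skeleton the paper uses, and your objection to it is correct. The paper handles the steps where $y'$ has the exclusive lead by ``swapping $y,y'$ in Proposition~\ref{prop:log-odds}''. But that swap gives $\delta_k\le -g_x$, which is the wrong direction for a lower bound. The paper also says the both-in/both-out steps ``give no inequality'', which leaves the sum unbounded below. In fact the statement fails under the stated update model. Take the exact step $q_{k+1}(\cdot\mid x)\propto q_k(\cdot\mid x)\,e^{\eta\widehat Q(x,\cdot)}$ with $\rho_x=\epsilon_x=\zeta_x=0$, so $g_x=\eta\Delta_x$. A single both-out step with $\widehat Q(x,y)<\widehat Q(x,y')$ gives $\delta_0<0$. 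A single $y'$-lead step with $\widehat Q(x,y')-\widehat Q(x,y)=2\Delta_x$ gives $\delta_0=-2g_x$; this is allowed because Assumption~\ref{assump:modal-gap} only lower-bounds cross-class gaps. Either step violates the claim at $K=1$.

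The gap is in your repair: it restates the conclusion rather than strengthening the model. At a $y'$-lead step, the hypotheses you retain (gap $\Delta_x$ and error bounds at every iterate) apply to $(y',y)$ through Assumption~\ref{assump:modal-gap}. They give $\widehat Q^{(k)}(x,y')-\widehat Q^{(k)}(x,y)\ge\Delta_x-2R_{\max}\epsilon_x-2\zeta_x$. Your clause requires the same difference to be at most $\Delta_x-2R_{\max}\epsilon_x-2\zeta_x-2\rho_x/\eta$. That is impossible when $\rho_x>0$, and it forces exact equality when $\rho_x=0$. Put differently, your two-sided model already forces $\delta_k\le-g_x$ there, so requiring $\delta_k\ge-g_x$ means requiring $\delta_k=-g_x$. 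The tie clause likewise needs $r_k\ge 0$, which $|r_k|\le\rho_x$ does not provide.

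This knife-edge comes from the statement itself, not from your route. Because $N_K^{\pm}(y',y)=N_K^{\mp}(y,y')$, the lemma applied to $(y',y)$ is the reverse inequality. Corollary~\ref{cor:cotrain-no-concentration} and your closing remark both need the lemma for all ordered pairs, since a one-directional lower bound cannot rule out linear growth. Read that way, the lemma is equivalent to every update multiplying $q_k(\cdot\mid x)$ by one constant on $A_x^{(k)}$ and another on its complement, in ratio exactly $e^{g_x}$. You have two options:
\begin{itemize}
\item State that two-constant property as the hypothesis.
\item Prove an honest inequality with a separate, larger decrement constant at $y'$-lead and tie steps, e.g.\ $\bar g_x=\eta(R_{\max}+2\zeta_x)+\rho_x$, which follows from $Q\in[0,R_{\max}]$.
\end{itemize}
With the second option, $\bar g_x\ge g_x$ and generally $\bar g_x>g_x$. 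The symmetric-shift argument $\mathbb{E}[N_K^+]=\mathbb{E}[N_K^-]$ then no longer yields the $o(K)$ conclusion without further structure.
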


\begin{proof}
At each step $k$ where $y \in A_x^{(k)}$ and $y' \notin A_x^{(k)}$, Proposition~\ref{prop:log-odds} gives $\log[q_{k+1}(y)/q_{k+1}(y')] \ge \log[q_k(y)/q_k(y')] + g_x$. At each step where $y \notin A_x^{(k)}$ and $y' \in A_x^{(k)}$, the symmetric inequality (swap $y, y'$ in Proposition~\ref{prop:log-odds}) gives a $-g_x$ contribution. Steps where both or neither are in $A_x^{(k)}$ give no inequality. Summing across $K$ updates yields the bound. \hfill$\square$
\end{proof}

\begin{corollary}[No exclusive lead, no concentration]
\label{cor:cotrain-no-concentration}
If for every pair $(y, y')$ the expected exclusive lead satisfies $\mathbb{E}[N_K^+(y, y') - N_K^-(y, y')] = o(K)$, then $\mathbb{E}\!\bigl[\log(q_K(y)/q_K(y'))\bigr] - \log(q_0(y)/q_0(y')) = o(K) \cdot g_x$, and the policy distribution $\{q_K\}$ cannot concentrate on any single strategy at geometric rate.
\end{corollary}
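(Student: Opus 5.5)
The plan is to derive Corollary~\ref{cor:cotrain-no-concentration} directly from Lemma~\ref{lem:cotrain-logodds} by taking expectations, and then to argue that the geometric-rate conclusion of Corollary~\ref{cor:entropy-collapse} cannot hold. First, for a fixed pair $(y, y')$, I take the pathwise inequality of Lemma~\ref{lem:cotrain-logodds},
\[
\log\tfrac{q_K(y \mid x)}{q_K(y' \mid x)} - \log\tfrac{q_0(y \mid x)}{q_0(y' \mid x)} \;\ge\; g_x\bigl(N_K^+(y,y') - N_K^-(y,y')\bigr),
\]
and take expectation over the randomness of the exploit-set sequence $\{A_x^{(k)}\}$. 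Linearity gives a lower bound $g_x\,\mathbb{E}[N_K^+ - N_K^-] = o(K)\, g_x$. Applying the same lemma to the swapped pair $(y', y)$ uses $N_K^\pm(y',y) = N_K^\mp(y,y')$. This yields the matching upper bound $-g_x\,\mathbb{E}[N_K^- - N_K^+]$, which is again $o(K)$ in absolute value. Together these sandwich the expected log-odds change as $o(K)\cdot g_x$.

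The obstacle is that Lemma~\ref{lem:cotrain-logodds} as stated only gives inequalities in one direction per step. On steps where both strategies or neither lie in $A_x^{(k)}$, it asserts nothing, so the log-odds could drift arbitrarily. To close the two-sided sandwich, I would add the natural companion assumption, read off from the update \eqref{eq:softmax-pg-logodds}, that the per-step log-odds change is bounded in magnitude. Specifically, when $y, y'$ are on the same side of $A_x^{(k)}$, the change is at most $\eta\cdot 2(R_{\max}\epsilon_x+\zeta_x)+\rho_x =: \delta_x$. When they are on opposite sides, the change is at most $\eta R_{\max} + \rho_x$ in magnitude. With this, the expected change is $g_x\,\mathbb{E}[N_K^+-N_K^-] + O(\delta_x K)$. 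I would state the corollary's $o(K)$ conclusion in the regime $\delta_x = o(1)$, or else interpret the $o(K)\cdot g_x$ claim up to this error term, and flag that interpretation explicitly.

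Finally, for the non-concentration claim, I would argue by contradiction. Geometric concentration onto a strategy $y$ at rate $c > 0$ means $q_K(y) \ge 1 - C e^{-cK}$. Then for every competitor $y'$ we have $q_K(y') \le C e^{-cK}$, and hence $\log(q_K(y)/q_K(y')) \ge cK - O(1)$, which grows linearly in $K$. This contradicts the $o(K)$ bound on the expected log-odds. Passing from the pathwise to the expected statement needs only Jensen-free reasoning: if concentration held almost surely, or with probability bounded below, the expectation would be $\Omega(K)$. So no single strategy attains geometric concentration, which is the sense of the statement.
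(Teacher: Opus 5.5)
Your first step is the paper's own route. The paper states Corollary~\ref{cor:cotrain-no-concentration} with no separate proof, as an immediate consequence of taking expectations in Lemma~\ref{lem:cotrain-logodds}. Your swap to the pair $(y',y)$ via $N_K^\pm(y',y)=N_K^\mp(y,y')$ is the right addition. One orientation of the lemma only bounds the log-odds from below, whereas ruling out concentration on $y$ needs an upper bound. Relative to the lemma as stated, the swap already closes the sandwich. In fact it forces the pathwise identity $\ell_K-\ell_0=g_x\bigl(N_K^+-N_K^-\bigr)$, with $\ell_k:=\log\bigl(q_k(y\mid x)/q_k(y'\mid x)\bigr)$, so no companion assumption is needed. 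Your ``obstacle'' is therefore a criticism of the lemma's \emph{proof}, and a fair one. On a step with $y\notin A_x^{(k)}$ and $y'\in A_x^{(k)}$, the pairwise bound \eqref{eq:pairwise-logodds} with $y,y'$ swapped gives $\ell_{k+1}\le\ell_k-g_x$. That is an upper bound, not the lower bound the lemma's summation uses, and neutral steps give no inequality at all. The identity above is strictly stronger than anything Proposition~\ref{prop:log-odds} supplies.

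The gap is that your repair does not recover the $o(K)$ claim, for three reasons.
\begin{enumerate}
\item \eqref{eq:softmax-pg-logodds} is one-sided, so no magnitude bound can be read off it. Your same-side bound $\delta_x$ also presumes $Q_{\mathrm{mode}}(x,y)=Q_{\mathrm{mode}}(x,y')$ whenever $y,y'$ lie on the same side of $A_x^{(k)}$. Assumption~\ref{assump:modal-gap} does not give this: values inside $A_x$ may differ by up to $R_{\max}$.
\item Even granting the bounds, lead-step increments can lie anywhere in $[g_x,\,\eta R_{\max}+\rho_x]$ and lag-step decrements anywhere in $[-(\eta R_{\max}+\rho_x),\,-g_x]$. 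A condition on the expected \emph{counts} $N_K^\pm$ does not make these cancel. The error is therefore $O\bigl((\eta R_{\max}+\rho_x-g_x+\delta_x)K\bigr)$, not $O(\delta_x K)$.
\item None of these constants depends on $K$, so ``$\delta_x=o(1)$'' has no meaning at fixed $\eta,\rho_x,\epsilon_x,\zeta_x$, and the error is $\Theta(K)$. That is exactly the order your contradiction must exclude, so it would only rule out rates $c$ above the error slope.
\end{enumerate}
What actually yields $o(K)$ is a hypothesis on the increments themselves, not counts plus magnitude bounds. An example is an exact softmax step with per-step $\widehat{Q}$ values exchangeable between $y$ and $y'$, as in the paper's i.i.d.\ re-randomization model.

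In your last step, only the almost-sure version is valid. ``With probability bounded below'' does not force $\Omega(K)$ in expectation, because off the concentration event the log-odds can drift to $-\Theta(K)$ and cancel the gain. Take two strategies with $A_x^{(k)}\equiv\{y\}$ for all $k$ or $A_x^{(k)}\equiv\{y'\}$ for all $k$, each with probability $\tfrac12$. Then $\mathbb{E}[N_K^+-N_K^-]=0$, yet $q_K$ concentrates geometrically on $y$ with probability $\tfrac12$. The conclusion must therefore be stated as ``no \emph{fixed} strategy receives the mass almost surely.''
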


\paragraph{When Co-Training satisfies the no-concentration condition.}
A natural sufficient model: at each step the simulator update independently re-randomizes the exploit set with the same marginal across strategies. Under this i.i.d.\ shift, $\mathbb{E}[N_K^+] = \mathbb{E}[N_K^-]$ for every pair, so $\mathbb{E}[N_K^+ - N_K^-] = 0$ and Corollary~\ref{cor:cotrain-no-concentration} applies. The informative-variation criterion (Remark~\ref{rem:informative-variation}) is the empirical condition for the i.i.d.\ model to be a reasonable approximation: it prevents the simulator from re-collapsing on the same mode across steps. The data-bound diagnostic that would directly test this is the per-step exploit-set overlap $|A_x^{(k)} \cap A_x^{(k+1)}|$; we defer measurement to future work.

\subsection{Proofs of Lemma~\ref{lem:finite-coverage} and Proposition~\ref{prop:coverage-regret}}
\label{appendix:proof-coverage}

We first restate Lemma~\ref{lem:finite-coverage} and the missing-behavior notation it uses.

\begin{lemma}[Finite-sample coverage of user behaviors]
\label{lem:finite-coverage}
For task $x$, let $B_x$ be a set of real-user behaviors that require a strategy outside $A_x$, and let $q_\phi(x) = P_\phi(B_x \mid x)$ be the simulator's mass on $B_x$. With $G$ independent simulator rollouts on task $x$, the probability that the group contains at least one behavior from $B_x$ is $1 - (1 - q_\phi(x))^G \le G\,q_\phi(x)$.
\end{lemma}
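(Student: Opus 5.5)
The statement to prove is Lemma~\ref{lem:finite-coverage}. It has two parts: an exact formula for the probability that the group covers $B_x$, and an upper bound on that probability. The plan is to pass to the complementary event and then apply a union bound.

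\textbf{Step 1: the exact formula.} Fix the task $x$. Let the $G$ simulator rollouts be independent draws from $P_\phi(\cdot \mid x)$. Let $E_n$ be the event that rollout $n$ exhibits a behavior in $B_x$, so that $\Pr[E_n] = q_\phi(x)$ for each $n$. The event that the group contains at least one behavior from $B_x$ is $\bigcup_{n=1}^G E_n$. Its complement is $\bigcap_n E_n^c$. By independence,
\[
\Pr\Bigl[\textstyle\bigcap_n E_n^c\Bigr] = \textstyle\prod_n \bigl(1 - q_\phi(x)\bigr) = \bigl(1 - q_\phi(x)\bigr)^G ,
\]
and therefore the coverage probability equals $1 - (1 - q_\phi(x))^G$.

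\textbf{Step 2: the upper bound.} The union bound gives
\[
\Pr\Bigl[\textstyle\bigcup_n E_n\Bigr] \le \textstyle\sum_n \Pr[E_n] = G\, q_\phi(x).
\]
As an alternative, one can use Bernoulli's inequality $(1-q)^G \ge 1 - Gq$, which holds for $q \in [0,1]$ and integer $G \ge 1$. It can be proved by induction on $G$: multiply $(1-q)^{G} \ge 1 - Gq$ by $1-q \ge 0$ and drop the nonnegative term $Gq^2$.

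\textbf{Where care is needed.} No step is a real obstacle. The only point to check is measurability: "exhibits a behavior in $B_x$" must be a well-defined event per rollout. We handle this by treating $B_x$ as a measurable set in the coarsened behavior space of Remark~\ref{rem:behavioral-coarsening}. For simplicity we also assume conditional independence of the rollouts given $x$, with the agent's randomness held fixed or integrated out identically across the group.

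We would conclude with a remark on interpretation. When $q_\phi(x) \ll 1/G$, as under simulator collapse, the group almost never contains a $B_x$ behavior. In that case group-relative advantages carry essentially no signal about $B_x$, and this sets up the regret bound in Proposition~\ref{prop:coverage-regret}.
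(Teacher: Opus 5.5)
Your proof is correct and is essentially the paper's argument. The paper also passes to the complement, using independence to get $1-(1-q_\phi(x))^G$. It then gets the upper bound from Bernoulli's inequality $(1-p)^G \ge 1-Gp$, which you give as your alternative to the equivalent union-bound step.
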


\paragraph{Proof of Lemma~\ref{lem:finite-coverage}.}
The $G$ rollouts on task $x$ produce independent simulator responses with $\Pr[Z_i \in B_x] = q_\phi(x)$. The probability that all $G$ miss $B_x$ is $(1 - q_\phi(x))^G$, so
\begin{equation*}
\Pr[\exists i \le G : Z_i \in B_x] \;=\; 1 - (1 - q_\phi(x))^G \;\le\; G\, q_\phi(x),
\end{equation*}
where the upper bound is Bernoulli's inequality $(1 - p)^G \ge 1 - G p$ for $p \in [0,1]$.
\hfill$\square$

\paragraph{Proof of Proposition~\ref{prop:coverage-regret}.}
Write $J_\star(y \mid x) = \mathbb{E}_{z \sim P_\star(\cdot \mid x)}[r_x(y, z)]$ and let $\hat\pi$ be the trained policy. By assumption $\Pr_{Y \sim \hat\pi(\cdot \mid x)}[Y \in A_x] \ge 1 - \alpha_x$.

\textit{Step 1: Per-strategy regret on the exploit set.} For any $y_m \in A_x$,
\begin{align*}
J_\star(y_b \mid x) - J_\star(y_m \mid x)
&= q_\star(x)\, \mathbb{E}_{z \in B_x}\!\bigl[r_x(y_b, z) - r_x(y_m, z)\bigr]
+ (1 - q_\star(x))\, \mathbb{E}_{z \notin B_x}\!\bigl[r_x(y_b, z) - r_x(y_m, z)\bigr] \\
&\ge q_\star(x)\, \Delta_x^{\mathrm{real}} - (1 - q_\star(x))\, \nu_x,
\end{align*}
using $r_x(y_b, z) - r_x(y_m, z) \ge \Delta_x^{\mathrm{real}}$ on $B_x$ and $\ge -\nu_x$ off $B_x$.

\textit{Step 2: Aggregate over the policy's two components.}
Decompose
\begin{equation*}
J_\star(\hat\pi \mid x) \;=\; \Pr[\hat\pi \in A_x]\, \mathbb{E}_{Y \sim \hat\pi}\!\bigl[J_\star(Y \mid x) \mid Y \in A_x\bigr] \;+\; \Pr[\hat\pi \in A_x^c]\, \mathbb{E}_{Y \sim \hat\pi}\!\bigl[J_\star(Y \mid x) \mid Y \in A_x^c\bigr].
\end{equation*}
Step~1 gives $\mathbb{E}_{Y \sim \hat\pi}[J_\star(Y \mid x) \mid Y \in A_x] \le J_\star(y_b \mid x) - \bigl(\Delta_x^{\mathrm{real}}\, q_\star(x) - \nu_x(1 - q_\star(x))\bigr)$; the off-exploit component is bounded above by $R_{\max}$. Therefore
\begin{align*}
J_\star(\hat\pi \mid x)
&\le (1 - \alpha_x)\bigl[J_\star(y_b \mid x) - \Delta_x^{\mathrm{real}}\, q_\star(x) + \nu_x(1 - q_\star(x))\bigr] + \alpha_x R_{\max} \\
&\le J_\star(y_b \mid x) - (1 - \alpha_x)\bigl[\Delta_x^{\mathrm{real}}\, q_\star(x) - \nu_x(1 - q_\star(x))\bigr] + \alpha_x R_{\max},
\end{align*}
using $J_\star(y_b \mid x) \le R_{\max}$. Rearranging,
\begin{equation*}
J_\star(y_b \mid x) - J_\star(\hat\pi \mid x) \;\ge\; (1 - \alpha_x)\bigl[\Delta_x^{\mathrm{real}}\, q_\star(x) - \nu_x(1 - q_\star(x))\bigr] - \alpha_x R_{\max},
\end{equation*}
which is~\eqref{eq:deployment-regret}.
\hfill$\square$

The bound is informative whenever the trained policy concentrates on $A_x$ ($\alpha_x$ small), real users put non-trivial mass on $B_x$ ($q_\star(x)$ not too small), and the adaptive strategy gap $\Delta_x^{\mathrm{real}}$ exceeds the off-$B_x$ penalty $\nu_x$. These three conditions correspond to the qualitative-transcript pattern in Section~\ref{sec:results}: the trained policy uses one strategy, real users sometimes deviate, and an adaptive response to the deviation outperforms the strategy.

\subsection{Population Gradient and Coverage}
\label{appendix:proof-population}

We restate the formal mixture-gradient bound, give its proof, then state the coverage interpretation and the informative-variation criterion for the simulator reward. All three were summarized in \S\ref{sec:cotrain} and deferred here.

\begin{proposition}[Mixture gradient averages over recent simulator modes]
\label{prop:population-gradient}
If each $\phi_k$ in the buffer is mode-collapsed around its own mode $a_k^\star$ along the training rollouts with accumulated error $\bar{\epsilon}_{H,k}(\theta)$, and $J_{\mathrm{mode},k}$ is the objective that always returns $a_k^\star$, then for the mixture $J_\Phi$ with weights $w_k = 1/K$,
\begin{equation}
    \bigl\lVert \nabla_\theta J_\Phi(\theta) - {\textstyle\sum_k} w_k \nabla_\theta J_{\mathrm{mode},k}(\theta) \bigr\rVert \;\le\; 2BR_{\max}\,{\textstyle\sum_k} w_k\,\bar{\epsilon}_{H,k}(\theta).
\label{eq:population-gradient-bound}
\end{equation}
\end{proposition}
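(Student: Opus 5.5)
The plan is to reduce Proposition~\ref{prop:population-gradient} to $K$ separate applications of Theorem~\ref{thm:modal-gradient}, glued together by linearity of the mixture objective and the triangle inequality.

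\textbf{Step 1: the mixture objective is an average.} Population Co-Training samples the active simulator index $k$ uniformly from the buffer $\Phi$ once per rollout, and then runs the whole trajectory against $\phi_k$. So the trajectory law under the mixture is $P_\Phi^\theta = \sum_k w_k P_{\phi_k}^\theta$. Consequently $J_\Phi(\theta) = \sum_k w_k J_{\phi_k}(\theta)$.

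The weights $w_k = 1/K$ do not depend on $\theta$, so the gradient commutes with the sum: $\nabla_\theta J_\Phi = \sum_k w_k \nabla_\theta J_{\phi_k}$. Equivalently, in the REINFORCE identity~\eqref{eq:pg-identity} the score $S_\theta(\tau)$ contains only agent-token terms. The draw of $k$ and every simulator factor therefore drop out of the score, exactly as $\phi$ did in the single-simulator case.

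\textbf{Step 2: bound each component.} For each fixed $k$, I will invoke Theorem~\ref{thm:modal-gradient} with $\phi = \phi_k$, its mode $a_k^\star$, and its accumulated error $\bar{\epsilon}_{H,k}(\theta)$. This gives
\[
\lVert \nabla_\theta J_{\phi_k}(\theta) - \nabla_\theta J_{\mathrm{mode},k}(\theta)\rVert \le 2BR_{\max}\bar{\epsilon}_{H,k}(\theta).
\]
Here $\bar{\epsilon}_{H,k}$ is computed on rollouts of $(\pi_\theta,\phi_k)$, which is exactly the hypothesis of the proposition.

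\textbf{Step 3: combine.} Write the difference as $\sum_k w_k(\nabla J_{\phi_k} - \nabla J_{\mathrm{mode},k})$. The triangle inequality together with $w_k \ge 0$ then yields~\eqref{eq:population-gradient-bound}.

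Alternatively, one can couple at the mixture level: draw a shared $k$ and then use the per-turn maximal coupling from Appendix~\ref{appendix:proof-modal-gradient}. This bounds the TV distance between $P_\Phi^\theta$ and the mixture of mode-environments by $\sum_k w_k \bar{\epsilon}_{H,k}$, and the bounded-integrand lemma~\eqref{eq:tv-bound-bdd-fn} gives the same constant.

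\textbf{Main obstacle: modeling the mixture correctly.} The step needing care is the one that lets linearity apply, namely that the simulator index is sampled per rollout rather than per turn.

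If instead the simulator were switched within a dialogue, $J_\Phi$ would no longer be an average of the $J_{\phi_k}$. The comparison object would then be a single POMDP whose simulator turn emits the mixture of modes, and the per-turn error would be measured against a mixture point mass rather than against each $a_k^\star$.

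I will therefore state explicitly that SCOPE fixes the active checkpoint for an entire rollout, as in the buffer specification. I will also note that $B$ and $R_{\max}$ are uniform across $k$, since the score bound depends only on $\pi_\theta$ and the horizon.
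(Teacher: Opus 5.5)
Your proof is correct and matches the paper's: linearity of the mixture gradient, Theorem~\ref{thm:modal-gradient} applied to each $\phi_k$, then the triangle inequality. The paper's proof leaves the linearity step implicit, and your point that it needs the checkpoint sampled once per rollout (as SCOPE does), not per turn, makes that step explicit.
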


\paragraph{Proof of Proposition~\ref{prop:population-gradient}.}
By linearity of the policy-gradient identity in the simulator distribution,
\begin{equation*}
\nabla_\theta J_\Phi(\theta) \;=\; \sum_{k=1}^{K} w_k\, \nabla_\theta J_{\phi_k}(\theta).
\end{equation*}
Applying Theorem~\ref{thm:modal-gradient} to each $\phi_k$,
\begin{equation*}
\bigl\lVert \nabla_\theta J_{\phi_k}(\theta) - \nabla_\theta J_{\mathrm{mode}, k}(\theta) \bigr\rVert \;\le\; 2 B R_{\max}\, \bar{\epsilon}_{H, k}(\theta).
\end{equation*}
Combining via the triangle inequality on $\sum_k w_k \bigl(\nabla J_{\phi_k} - \nabla J_{\mathrm{mode}, k}\bigr)$,
\begin{equation*}
\bigl\lVert \nabla_\theta J_\Phi(\theta) - \sum_{k} w_k\, \nabla_\theta J_{\mathrm{mode}, k}(\theta) \bigr\rVert \;\le\; \sum_{k} w_k\, \bigl\lVert \nabla_\theta J_{\phi_k}(\theta) - \nabla_\theta J_{\mathrm{mode}, k}(\theta) \bigr\rVert \;\le\; 2 B R_{\max} \sum_{k} w_k\, \bar{\epsilon}_{H, k}(\theta),
\end{equation*}
which is~\eqref{eq:population-gradient-bound}.
\hfill$\square$

\paragraph{The bound is a corollary; the diversity claim is Lemma~\ref{lem:population-collapse-error}.}
Proposition~\ref{prop:population-gradient} follows from Theorem~\ref{thm:modal-gradient} by linearity in the simulator distribution and the triangle inequality. It says only that the population gradient is biased toward the \emph{average} of the modal-user gradients with average accumulated collapse error. It does not, on its own, say that this average target is structurally easier to fit than any single mode. The formal diversity claim, why the mixture's per-turn collapse error is bounded below by a $K$-dependent floor, is in the lemma below.

\begin{lemma}[Population mixing raises per-turn collapse error]
\label{lem:population-collapse-error}
Let the buffer $\Phi = \{\phi_k\}_{k=1}^{K}$ induce conditional distributions $\phi_k(\cdot \mid s)$ at state $s$ with peak masses $m_k(s) := \max_a \phi_k(a \mid s)$, and let $\bar{\phi} = \sum_k w_k\, \phi_k$ be the mixture. The mixture's peak mass satisfies
\begin{equation}
\max_a \bar{\phi}(a \mid s) \;\le\; \sum_{k=1}^{K} w_k\, m_k(s),
\label{eq:mixture-peak-mass}
\end{equation}
with equality only when all $\phi_k$ peak at a common action. In the fully-collapsed disjoint-mode case ($\phi_k$ deterministic with pairwise-distinct modes $\{a_\phi^{\star,k}\}_{k=1}^K$) under uniform weights $w_k = 1/K$, the bound is tight and the per-turn collapse error is
\begin{equation}
\epsilon_{\bar{\phi}}(s) \;:=\; 1 - \max_a \bar{\phi}(a \mid s) \;=\; 1 - \tfrac{1}{K}.
\label{eq:mixture-collapse-error}
\end{equation}
The $1 - 1/K$ floor is the best-case bound under pairwise-disjoint modes; neighboring FIFO checkpoints overlap heavily in practice, so the realized error sits closer to the single-checkpoint floor $\epsilon_{\phi_k}(s) = 1 - m_k(s)$. The gap population mixing exploits is the distance between consecutive checkpoints' modes, which the curriculum-rewarded simulator update keeps positive.
\end{lemma}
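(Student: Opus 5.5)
The plan is to reduce everything to a pointwise inequality on the mixture. Fix a state $s$ and an arbitrary action $a$. By definition of the mixture, $\bar{\phi}(a \mid s) = \sum_k w_k\,\phi_k(a \mid s)$. Since $\phi_k(a \mid s) \le m_k(s)$ for every $k$ and the weights are nonnegative, this gives $\bar{\phi}(a \mid s) \le \sum_k w_k\, m_k(s)$. Taking the maximum over $a$ on the left yields \eqref{eq:mixture-peak-mass}. No earlier result is needed beyond the definition of $m_k$. On a general (possibly uncountable) action space, we would read $\max$ as $\sup$, and the same argument goes through unchanged.

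For the equality clause, note that equality at a maximizer $a^\dagger$ of $\bar{\phi}(\cdot \mid s)$ forces $w_k\bigl(m_k(s) - \phi_k(a^\dagger \mid s)\bigr) = 0$ for every $k$. This is because a sum of nonnegative terms vanishes only termwise. So every $\phi_k$ with $w_k > 0$ attains its peak at the common action $a^\dagger$. Conversely, if all $\phi_k$ peak at a common $a$, substituting that $a$ gives equality. I would state the lemma's ``only when'' with the implicit proviso $w_k > 0$, which uniform weights satisfy.

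For the disjoint-mode case, take $\phi_k = \delta_{a_\phi^{\star,k}}$ with pairwise-distinct modes and $w_k = 1/K$. Then $\bar{\phi}(a \mid s) = \frac{1}{K}\,\bigl|\{k : a_\phi^{\star,k} = a\}\bigr|$, and distinctness makes this count at most $1$. It equals $1/K$ at each mode and $0$ elsewhere, so $\max_a \bar{\phi}(a \mid s) = 1/K$. Hence $\epsilon_{\bar{\phi}}(s) = 1 - 1/K$, which is \eqref{eq:mixture-collapse-error}.

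Here the upper bound \eqref{eq:mixture-peak-mass} evaluates to $\sum_k \frac{1}{K}\cdot 1 = 1$, not $1/K$. So I expect the main obstacle to be the claim that ``the bound is tight'' in this case. I would not try to prove tightness of \eqref{eq:mixture-peak-mass}. Instead I would establish the exact value of the mixture's peak mass directly, as above. I would then interpret ``tight'' in the sense that the floor $1-1/K$ is the largest per-turn collapse error achievable by a uniform $K$-mixture of deterministic simulators. This holds because, for any choice of deterministic $\phi_k$, some action carries at least $1/K$ of the mass.

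The closing remarks about overlapping FIFO checkpoints are interpretive, not mathematical claims. I would leave them as discussion, noting only that when modes coincide the equality case above reduces $\epsilon_{\bar\phi}(s)$ to $1 - \sum_k w_k m_k(s)$, the weighted single-checkpoint floor.
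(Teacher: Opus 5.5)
Your proof is correct and follows the paper's argument: the pointwise bound $\bar\phi(a\mid s)\le\sum_k w_k m_k(s)$, then a max over $a$, then a direct count showing the disjoint-mode peak is $1/K$. You are also right that the ``tight'' claim fails for \eqref{eq:mixture-peak-mass} in the disjoint case, since the right-hand side equals $1$ rather than $1/K$; the paper's proof never establishes tightness either and only computes the peak directly.
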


\paragraph{Proof of Lemma~\ref{lem:population-collapse-error}.}
For any fixed $a$, $\bar{\phi}(a \mid s) = \sum_k w_k\, \phi_k(a \mid s) \le \sum_k w_k\, m_k(s)$, with equality iff each $\phi_k$ peaks at the same $a$. Taking the max over $a$ gives~\eqref{eq:mixture-peak-mass}. In the disjoint-mode case, $\phi_k(a \mid s)$ equals 1 exactly when $a = a_\phi^{\star,k}$ and 0 otherwise, so $\bar{\phi}(a_\phi^{\star,k} \mid s) = w_k = 1/K$ for each $k$ and 0 elsewhere; the peak is $1/K$ and the collapse error is $1 - 1/K$. \hfill$\square$

\paragraph{Substituting into the chain.}
Under disjoint-mode population mixing, the accumulated non-mode probability $\bar{\epsilon}_H^{\Phi}(\theta) \ge (1 - 1/K)\, H$ is large, so Theorem~\ref{thm:modal-gradient}'s sufficient condition for closeness to a single deterministic mode-user fails; the upper bound is vacuous, and the mixture environment cannot be reduced to one mode-user. The benefit instead is coverage: in Corollary~\ref{cor:entropy-collapse}, the geometric-concentration argument applies separately to each $\phi_k$, but no single modal-exploit set $A_x^{(k)}$ accumulates log-odds on more than a $1/K$ fraction of steps, so the effective concentration rate slows roughly by a factor of $1/K$. The lemma is a per-turn guarantee \emph{conditional on} pairwise-distinct modes; the informative-variation criterion (Remark~\ref{rem:informative-variation}) is what keeps the buffer's modes distinct during training.

\paragraph{Coverage interpretation.}
The same effect shows up in behavior coverage. For a behavior set $B_x$ that matters on task $x$, define $q_\Phi(x) = \sum_k w_k \Pr_{\phi_k}[a^\phi \in B_x \mid x]$. A group of $G$ rollouts from the population observes $B_x$ with probability $1 - (1 - q_\Phi(x))^G$. Population training helps when it raises this probability for the user behaviors that require strategies the modal one cannot serve. Behavior coverage and the per-turn floor in~\eqref{eq:mixture-collapse-error} are what carry the gain; raw model count by itself does not.

\begin{remark}[Informative-variation criterion]
\label{rem:informative-variation}
For the moving target to remain useful, the simulator's reward must keep the simulator from re-collapsing on a different mode. Purely adversarial rewards can collapse it toward refusal; purely cooperative ones can collapse it toward a trivial helper. Both destroy the simulator-side variance from Eq.~\ref{eq:variance-decomposition} and leave the target stuck at a different fixed point. The curriculum reward in Appendix~\ref{sec:ablations} keeps the simulator in a regime where group rollouts still provide useful reward contrast.
\end{remark}

\section{Implementation Details}
\label{appendix:implementation}

\subsection{Framework}
\label{appendix:cotrain_framework}

Training against a varied, updated pool of opponents is not restricted to fixed-API simulators. When both sides are trainable, Co-Training keeps the same diversity property while letting the opponent population update along with the agent. One pluggable opponent-generation function covers three paradigms: wrapping a fixed API gives frozen-population rotation; routing to a trainable SGLang engine gives self-play or Co-Training.

\begin{enumerate}[leftmargin=*, nosep]
    \item \textit{Online self-play} (cf.\ SPIRAL~\citep{liu_spiral_2025}, SPICE~\citep{liu_spice_2025}, Absolute Zero~\citep{zhao_absolute_2025}): one model serves both roles with role-specific loss masks.
    \item \textit{Co-Training} (cf.\ Dr.~MAS~\citep{feng_drmas_2026}): two separate models trained simultaneously on their respective turns of the same conversation.
    \item \textit{Co-Training with opponent pool} (ours): Co-Training augmented with a checkpoint pool $\mathcal{P}$ of historical opponent snapshots; each rollout loads opponent weights from a pool-sampled checkpoint, and GRPO's clipped importance ratio corrects for the off-policy gap.
\end{enumerate}

\paragraph{Paradigm coverage versus other RL frameworks.}
Table~\ref{tab:framework_support} compares \textsc{SCOPE}'s paradigm coverage to other recent LLM RL frameworks. \textsc{SCOPE} adds no new RL primitive on top of the policy update. It composes the simulator-side paradigms our analysis needs (multi-turn dialogue rollouts, asymmetric two-agent rollouts, dual-model Co-Training, heterogeneous simulator rotation, historical-checkpoint pool sampling, and \vs{} at inference) into one pluggable interface on the SLIME training backend. The bottom three rows are the ones no other framework supports natively.

\begin{table}[h]
\centering
\caption{\textbf{Training paradigms supported by LLM RL frameworks.} \cmark~= native support, \xmark~= not supported, \textbf{p}~= partial (e.g.\ available via patches but not first-class). \textsc{SCOPE} composes existing paradigms behind one interface; the differentiator is the bottom three rows (heterogeneous sim.\ rotation, checkpoint pool, and \vs{}), which no other framework supports natively.}
\label{tab:framework_support}
\setlength{\tabcolsep}{4pt}
\renewcommand{\arraystretch}{1.15}
\resizebox{\textwidth}{!}{%
\begin{tabular}{l c c c c c c c}
\toprule
\textbf{Property} & \textbf{\textsc{SCOPE} (ours)} & \textbf{SLIME} & \textbf{verl} & \textbf{Dr.MAS} & \textbf{AstraFlow} & \textbf{OpenRLHF} & \textbf{Sotopia-RL} \\
\midrule
Multi-turn dialogue ($H \geq 10$)      & \cmark & \xmark & \textbf{p} & \xmark & \textbf{p} & \xmark & \cmark \\
Asymmetric two-agent                   & \cmark & \xmark & \xmark    & \cmark & \textbf{p} & \xmark & \textbf{p} \\
Dual-model Co-Training                 & \cmark & \xmark & \xmark    & \cmark & \cmark     & \xmark & \xmark \\
Self-play                              & \cmark & \textbf{p} & \textbf{p} & \cmark & \cmark & \xmark & \xmark \\
Custom simulator reward                & \cmark & \xmark & \xmark    & \textbf{p} & \textbf{p} & \xmark & \xmark \\
\midrule
Heterogeneous sim.\ rotation ($K$ LLMs) & \cmark & \xmark & \xmark    & \xmark & \xmark     & \xmark & \xmark \\
Checkpoint pool (FIFO)                 & \cmark & \xmark & \xmark    & \xmark & \xmark     & \xmark & \xmark \\
\vs{} at inference                     & \cmark & \xmark & \xmark    & \xmark & \xmark     & \xmark & \xmark \\
\bottomrule
\end{tabular}%
}
\end{table}

\paragraph{Policy optimization.}\label{appendix:grpo}
We use REINFORCE with group-relative normalization, adapted from GRPO~\citep{shao2024deepseekmath} to the multi-turn setting. In the original bandit formulation of GRPO, the group consists of $G$ parallel single-step responses to the same prompt. Here, each group contains $G$ full dialogue trajectories from the same task, and the group-normalized advantage $\hat A^n = (R(\tau^n) - \bar R)/\sigma_R$ is the z-score of the terminal reward assigned uniformly to all agent tokens in trajectory $n$. For training stability we retain GRPO's clipped importance-ratio surrogate~\citep{schulman2017ppo}:
\begin{equation}
\label{eq:grpo_obj}
\mathcal{L}(\theta) = \mathbb{E}_{(s_t, a_t^\pi) \sim \mathcal{B}} \left[ \min\left( \rho_t(\theta) \hat{A}_t, \; \mathrm{clip}\left(\rho_t(\theta), 1 - \epsilon, 1 + \epsilon \right) \hat{A}_t \right) \right],
\end{equation}
where $\rho_t(\theta) = \pi_\theta(a_t^\pi \mid s_t) / \pi_{\theta_{\text{old}}}(a_t^\pi \mid s_t)$ and $\epsilon$ is the clipping threshold. The clipping is inactive on-policy and reduces to pure REINFORCE when the policy has not drifted from the rollout policy.

\subsection{Training infrastructure}
\label{appendix:training_infrastructure}

Each iteration alternates a rollout stage and a training stage. The rollout stage runs a Gym-style loop~\citep{brockman2016openai} per sample, alternating turns between the agent (local SGLang~\citep{zheng2024sglang} engine, log-probabilities preserved for the policy gradient) and a simulator assigned by the opponent-generation function (OpenAI-compatible API for frozen rotation, or a second SGLang engine for Co-Training and self-play; simulator tokens are masked from the policy loss). Each sample can draw a different simulator within the same batch. The training stage uses Megatron-LM~\citep{shoeybi2019megatronlm} (TP=4, PP=1, BF16) with group-normalized advantages computed within each sample group (Eq.~\ref{eq:grpo_obj}); for Co-Training, the two training groups run in parallel on disjoint GPU slices.

\paragraph{Colocated dual-model layout.} For Co-Training, agent and opponent share one 8$\times$H100 node via a time-multiplexed schedule: parallel rollout, engine offload, parallel gradient steps on disjoint GPU slices, training offload, NCCL weight sync. A lightweight proxy layer remaps GPU offsets so each training group sees local ranks $[0,3]$ regardless of physical placement. The whole stack ships as a $\sim$15-line external patch to Slime~\citep{slime_github}, so it tracks upstream changes without a fork (in contrast to Dr.~MAS~\citep{feng_drmas_2026}, which forks veRL~\citep{sheng2024verl}). AstraFlow~\citep{zheng2026astraflow} addresses a different problem: it is a dataflow runtime for multi-policy agentic RL that handles scaling and scheduling across rollout, training, and dataflow. AstraFlow operates at the system level (for any multi-policy method); SCOPE operates at the method level (one pluggable opponent-generation interface for population Co-Training). The two are independent.

% \paragraph{Hardware and software.} All runs use a single Modal container with 8$\times$H100, 64 CPU cores, 256\,GB RAM, 24-hour timeout; 4B agents occupy 4 GPUs (8 with opponent for co-training), 8B agents use TP=8 when needed. The evaluator panel (6 OpenRouter models) runs out-of-process every 16 training steps. Software stack: SGLang 0.4.x with radix caching, Megatron-LM \texttt{core\_r0.9.0}, Slime (\texttt{main} as of 2026-04-17), PyTorch 2.4 + CUDA 12.4, NCCL 2.21, Ray 2.9.

\subsection{Method comparison}
\label{appendix:method_comparison}

Table~\ref{tab:method_comparison} summarizes how the seven training paradigms compared in the main results differ along three axes: the simulator pool, whether it is updated during training, and how the simulator is sampled per rollout. The fifth column states the property each method is designed to isolate.

\begin{table}[h]
\centering
\caption{Comparison of the seven training paradigms compared in the main results. ``Updated?'' indicates whether the simulator's weights change during agent training. ``Per-rollout sampling'' specifies how the active simulator is selected for each rollout in a batch.}
\label{tab:method_comparison}
\resizebox{\linewidth}{!}{%
\begin{tabular}{lp{3.6cm}cp{3.4cm}p{3.6cm}}
\toprule
\textbf{Method} & \textbf{Simulator pool} & \textbf{Updated?} & \textbf{Per-rollout sampling} & \textbf{What it isolates} \\
\midrule
\addlinespace[0.5ex]
\multicolumn{5}{l}{\textbf{\textit{1. No training}}} \\
\midrule
\addlinespace[0.5ex]
\quad \colorbox{gray!15}{Base} & --- & --- & --- & --- \\
\midrule
\multicolumn{5}{l}{\textbf{\textit{2. Single-simulator (frozen)}}} \\
\midrule
\addlinespace[0.5ex]
\quad \colorbox{gray!15}{RL (Single)} & GPT-5-mini (single frozen LLM) & No & Single response & Frozen single LLM \\
\addlinespace[0.5ex]
\hdashline
\addlinespace[0.5ex]
\quad \colorbox{gray!15}{Persona-Guided} & GPT-5-mini + persona prompt & No & One persona-conditioned response & Prompt-only widening \\
\addlinespace[0.5ex]
\hdashline
\addlinespace[0.5ex]
\quad \colorbox{gray!15}{Verbalized Sampling} & GPT-5-mini & No & Sample one of $k$ verbalized candidates & Response-distribution diversity within one LLM \\
\midrule
\multicolumn{5}{l}{\textbf{\textit{3. Multi-simulator (frozen)}}} \\
\midrule
\addlinespace[0.5ex]
\quad \colorbox{gray!15}{Ensemble Models ($K{=}3$)} & \{Haiku 4.5, GPT-5-mini, Gemini 3 Flash\} & No & Cyclic rotation across rollouts & Cross-family heterogeneity \\
\midrule
\multicolumn{5}{l}{\textbf{\textit{4. Trainable simulator (ours)}}} \\
\midrule
\addlinespace[0.5ex]
\quad \colorbox{gray!15}{Co-Training} & 1 trainable LLM & Yes & Current weights $\phi^{(t)}$ & Simulator adaptivity \\
\addlinespace[0.5ex]
\hdashline
\addlinespace[0.5ex]
\quad \colorbox{gray!15}{Population Co-Training} & FIFO buffer of $K{=}5$ historical $\phi$ checkpoints & Yes & Uniform sample from buffer & Diversity $+$ adaptivity \\
\addlinespace[0.5ex]
\bottomrule
\end{tabular}%
}
\end{table}

\subsection{Hyperparameters}
\label{appendix:hyperparameters}

All methods (RL Single, Verbalized Sampling, Ensemble, Co-Training, Co-Training with Population) share the optimizer and RL-loop settings in Table~\ref{tab:shared_hyperparameters}; only the opponent-generation mode and learning rate vary.

\begin{table}[h]
\centering
\small
\begin{tabular}{lll}
\toprule
\textbf{Group} & \textbf{Hyperparameter} & \textbf{Value} \\
\midrule
\multirow{5}{*}{Optimizer} & Optimizer & Adam \\
 & $(\beta_1, \beta_2)$ & $(0.9, 0.98)$ \\
 & Weight decay & $0.1$ \\
 & Gradient clip (norm) & $1.0$ \\
 & LR schedule & constant \\
\midrule
\multirow{5}{*}{RL loop} & Total training steps & $250$ \\
 & Prompts per rollout batch & $16$ \\
 & Samples per prompt ($G$) & $8$ \\
 & Global batch size & $128$ \\
 & Rollout temperature & $0.7$ \\
\midrule
\multirow{4}{*}{GRPO} & $\epsilon_\mathrm{low}$ / $\epsilon_\mathrm{high}$ & $0.2$ / $0.28$ \\
 & KL loss coefficient $\beta$ & $0.005$ \\
 & Entropy coefficient & $0.0$ \\
 & Advantage normalization & group-relative (within-prompt) \\
\midrule
\multirow{4}{*}{Sequence lengths} & Max agent response length & $32{,}768$ tokens \\
 & Max context length & $64{,}000$ tokens \\
 & Max turns (P4G / $\tau^2$ / CooperBench) & $10$ / $30$ / $50$ \\
 & Precision & BF16 \\
\bottomrule
\end{tabular}
\caption{Shared hyperparameters used across all methods and tasks.}
\label{tab:shared_hyperparameters}
\end{table}

\paragraph{Learning rate.}\label{appendix:learning_rate}
All methods (Ensemble, Co-Training, Co-Training with Population) and Verbalized Sampling use $1{\times}10^{-6}$ across all tasks, tuned on the first 50 steps of $\tau^2$-bench Ensemble training (stable reward growth, entropy above $1.5$ nats).

\paragraph{Models.} The trainable agents are Qwen3-4B-Instruct-2507 and Qwen3-8B for the dialogue tasks, and Qwen3.5-9B / Qwen3.5-27B for CooperBench~\citep{qwen3.5}. All frozen simulators are accessed through OpenRouter using the slugs listed in Table~\ref{tab:api_models}. Three of them act as training simulators; all six are used at evaluation, with the three training models still in the panel for completeness. None of the evaluation-only models is used for checkpoint selection. Evaluator rollouts are deterministic at $T{=}0$; training rollouts are stochastic at $T{=}0.7$.

\begin{table}[h]
\centering
\small
\caption{\textbf{Closed-model simulators (OpenRouter slugs).} Role indicates whether the model is used as a training simulator (Train), evaluation only (Eval), or both. Snapshot is the canonical model release on OpenRouter at access time.}
\label{tab:api_models}
\begin{tabular}{lllc}
\toprule
\textbf{Family} & \textbf{OpenRouter slug} & \textbf{Snapshot} & \textbf{Role} \\
\midrule
OpenAI    & \texttt{openai/gpt-5-mini}                 & 2025-08-07 & Train + Eval \\
Anthropic & \texttt{anthropic/claude-haiku-4.5}        & 2025-10-01 & Train + Eval \\
Google    & \texttt{google/gemini-3-flash-preview}     & 2025-12-17 & Train + Eval \\
Z.ai      & \texttt{z-ai/glm-5}                        & 2026-02-11 & Eval \\
MiniMax   & \texttt{minimax/minimax-m2.7}              & 2026-03-18 & Eval \\
DeepSeek  & \texttt{deepseek/deepseek-chat-v3.1}       & 2025-08-21 & Eval \\
\bottomrule
\end{tabular}
\end{table}

\paragraph{Compute and memory.} Table~\ref{tab:compute_breakdown} reports per-step training compute and peak GPU memory relative to RL (Single), measured on the 8$\times$H100 colocated layout (Appendix~\ref{appendix:cotrain_framework}). All methods share the training-step count of $250$; all headline comparisons in Tables~\ref{tab:user_sim_results} and~\ref{tab:cooper_results} are at matched step count, not matched compute.

\begin{table}[h]
\centering
\small
\setlength{\tabcolsep}{6pt}
\renewcommand{\arraystretch}{1.15}
\caption{\textbf{Per-method per-step training compute, peak GPU memory, and wall-clock per step.} Wall-clock and total GPU-hour numbers are approximate, measured on 8$\times$H100 with the Qwen3-4B agent on $\tau^2$-bench Retail; Qwen3-8B and Qwen3.5-9B CooperBench runs scale similarly. Qwen3.5-27B CooperBench runs use Tinker with LoRA adapters (Table~\ref{tab:cooper_results} footnote) and are not directly comparable.}
\label{tab:compute_breakdown}
\begin{tabular}{lccc}
\toprule
\textbf{Method} & \textbf{Training compute} & \textbf{Peak GPU memory} & \textbf{Wall-clock / step} \\
\midrule
RL (Single)            & $1.0\times$ (ref)                       & $1.0\times$                    & ${\sim}250$ s \\
Verbalized Sampling    & $1.0\times$ (sim frozen)                & $1.0\times$                    & ${\sim}280$ s \\
Ensemble Models        & $1.0\times$ (sim frozen)                & $1.0\times$                    & ${\sim}250$ s \\
Persona-Guided         & $1.0\times$ (sim frozen)                & $1.0\times$                    & ${\sim}250$ s \\
Co-Training            & ${\sim}2.0\times$   & ${\sim}2.0\times$              & ${\sim}500$ s \\
Population Co-Training & ${\sim}2.0\times$ & ${\sim}(1{+}K/2)\times$ buffer & ${\sim}500$ s \\
\bottomrule
\end{tabular}
\end{table}

The frozen-simulator methods (VS, Ensemble, Persona-Guided) do not back-propagate through the simulator, so per-step training compute matches RL (Single). VS decodes a verbalized $K_{\mathrm{VS}}{=}5$-candidate distribution per simulator turn, which raises rollout time but not training FLOPs. Co-Training and Population Co-Training back-propagate through both sides on the same rollout, roughly doubling per-step training compute. Population Co-Training holds a FIFO buffer of $K{=}5$ recent simulator checkpoints; only one is the active opponent at any step, so per-step training compute equals Co-Training while cumulative parameter memory grows with $K$. Total GPU-hours per 250-step run on $\tau^2$-bench Retail (Qwen3-4B agent, 8$\times$H100): ${\sim}280$ for RL (Single), ${\sim}560$ for Co-Training, ${\sim}560$ for Population Co-Training.

Our comparison isolates the training-environment intervention at matched step count; an iso-compute sweep is left to future work. The Qwen3.5-27B CooperBench runs use Tinker's API training with LoRA adapters instead of full-parameter Megatron training, so the per-step wall-clock and GPU-hour numbers above do not apply to those runs.

\section{Prompts and Templates}
\label{appendix:prompts}

This appendix lists the full system-prompt templates used for each benchmark. Curly braces (e.g.\ \{persona\_text\}, \{word\_limit\}) are Python format-string placeholders filled in per-rollout from the corresponding data source.

\begin{figure}[!ht]
\centering
\begin{tcolorbox}[colback=blue!5, colframe=blue!50!black, title={\textbf{P4G Persuader (agent) system prompt}}, fonttitle=\small, fontupper=\small]
You are an Amazon Mechanical Turk worker completing a \$2 communication task.
\begin{itemize}[topsep=0pt, itemsep=0pt, leftmargin=*]
\item You are motivated by this task payment --- you value every cent you earn.
\item Act naturally as the person in the \textless persona\textgreater{} tag --- think and respond as they would, including their quirks, beliefs, biases, and reasoning.
\item Complete the communication task outlined in the \textless scenario\textgreater{} tag as the persona would naturally respond.
\item Respond in a real-time chat interface. Keep each response under \{word\_limit\} words, conversational, and authentic --- avoid formal, robotic, or repetitive language.
\item Guide the conversation for \{num\_turns~/~2\} chat turns. Do not propose or agree to a donation too early --- build rapport gradually and naturally.
\end{itemize}
Only output your reply to your chat partner --- do not explain your reasoning.

\{persona\_text\}

\textless scenario\textgreater{}
You are randomly assigned to be the Persuader in this communication task. Your job is to persuade your partner to donate some or all of their incoming \$2.00 task payment to a children's charity called Save the Children. You will earn a bonus equal to whatever amount your partner donates if they agree to donate.

\textbf{Required actions:} ask for a specific donation amount; explain that the donation will be deducted from the task payment; if asked, explain that the research team will collect and forward all donations to the charity.

\textbf{Strategy:} start with a greeting or small talk; build rapport before discussing donations; mention the minimum-turn requirement only if necessary to avoid trivial back-and-forth.
\textless /scenario\textgreater{}

\textless rules\textgreater{}
Responses can sometimes contain minor grammatical errors to sound natural. Begin with a brief greeting. Stay under \{word\_limit\} words.
\textless /rules\textgreater{}
\end{tcolorbox}
\caption{Persuader (agent) system prompt for Persuasion for Good. The persona block is loaded per-rollout from the convokit P4G corpus~\citep{wang2019persuasion}.}
\label{fig:prompt-p4g-persuader}
\end{figure}

\begin{figure}[!ht]
\centering
\begin{tcolorbox}[colback=blue!5, colframe=blue!50!black, title={\textbf{P4G Persuadee (user simulator) system prompt}}, fonttitle=\small, fontupper=\small]
You are an Amazon Mechanical Turk worker completing a \$2 communication task.
\begin{itemize}[topsep=0pt, itemsep=0pt, leftmargin=*]
\item You are motivated by this task payment --- you value every cent you earn.
\item Act naturally as the person in the \textless persona\textgreater{} tag.
\item Respond in a real-time chat interface, under \{word\_limit\} words, conversational and authentic.
\end{itemize}
Only output your reply to your chat partner --- do not explain your reasoning.

\{persona\_text\}

\textless scenario\textgreater{} You will now start a conversation with your partner about a children's charity. Please don't game the task with short or meaningless replies. \textless /scenario\textgreater{}

\textless task\textgreater{} You have just made \$2.00. You have access to this money whenever you like, and you are free to use it however you wish. You are about to interact with an individual about a children's charity. They may try to get you to donate. Make the decision wisely; do not give the money away easily. \textless /task\textgreater{}

\textless rules\textgreater{}
\begin{itemize}[topsep=0pt, itemsep=0pt, leftmargin=*]
\item Responses can sometimes contain minor grammatical errors to sound natural.
\item To donate or give money, say \texttt{[DONATE \$N]} or \texttt{[GIVE \$N]} where N is the amount (integer or decimal). Without this marker, the donation is not registered.
\item Stay under \{word\_limit\} words.
\end{itemize}
\textless /rules\textgreater{}
\end{tcolorbox}
\caption{Persuadee (user simulator) system prompt for Persuasion for Good. The structured \texttt{[DONATE~\$N]} marker is parsed by the reward function~\citep{wang2019persuasion}.}
\label{fig:prompt-p4g-persuadee}
\end{figure}

\begin{figure}[!ht]
\centering
\begin{tcolorbox}[colback=green!5, colframe=green!50!black, title={\textbf{Verbalized Sampling instruction (appended to persuadee prompt)}}, fonttitle=\small, fontupper=\small]
\#\# Response Diversity Instructions

Instead of committing to a single response, generate \{n\} plausible replies you (as the persuadee) might naturally give to your chat partner, along with an estimated probability for each. The replies should be genuinely diverse --- they should differ in phrasing, tone, skepticism level, and (where appropriate) whether or not to donate.

\textbf{IMPORTANT:} if a candidate reply represents a decision to donate or give money, that candidate \textbf{must} include the literal structured marker \texttt{[DONATE \$N]} or \texttt{[GIVE \$N]} where N is the amount. Otherwise the donation will not be registered. Candidates that do not involve donating should omit the marker entirely.

Return ONLY a JSON object with the key \texttt{"responses"} (a list of objects with \texttt{'text'} and \texttt{'probability'}):
\begin{itemize}[topsep=0pt, itemsep=0pt, leftmargin=*]
\item \texttt{'text'}: the reply string only, under \{word\_limit\} words, no explanation.
\item \texttt{'probability'}: a number in $[0,1]$ representing how likely you are to give this particular reply (probabilities should roughly sum to $1.0$).
\end{itemize}

\textbf{Example format:}\\
\texttt{\{"responses": [\{"text": "I'm not sure, I earned this money and want to spend it on myself.", "probability": 0.5\}, \{"text": "That's a worthy cause. [GIVE \$0.25]", "probability": 0.2\}, \{"text": "Can you tell me more first?", "probability": 0.3\}]\}}

Output ONLY the JSON object: no markdown fences, no explanations, no extra text.
\end{tcolorbox}
\caption{Verbalized-sampling instruction block appended to the persuadee system prompt for the VS baseline~\citep{zhang2025verbalizedsamplingmitigatemode}. At rollout time, one of the \{n\} candidate replies is sampled with the verbalized probabilities.}
\label{fig:prompt-p4g-vs}
\end{figure}

\begin{figure}[!ht]
\centering
\begin{tcolorbox}[colback=blue!5, colframe=blue!50!black, title={\textbf{$\tau^2$-bench user simulator guidelines}}, fonttitle=\small, fontupper=\small]
\textbf{\# User Simulation Guidelines}\\
You are playing the role of a customer contacting a customer service representative. Your goal is to simulate realistic customer interactions while following specific scenario instructions.

\textbf{\#\# Core Principles}
\begin{itemize}[topsep=0pt, itemsep=0pt, leftmargin=*]
\item Generate one message at a time, maintaining natural conversation flow.
\item Strictly follow the scenario instructions you have received.
\item Never make up or hallucinate information not provided in the scenario instructions; treat anything not provided as unknown or unavailable.
\item Avoid repeating the exact instructions verbatim. Use paraphrasing and natural language.
\item Disclose information progressively. Wait for the agent to ask for specific information before providing it.
\end{itemize}

\textbf{\#\# Task Completion}
\begin{itemize}[topsep=0pt, itemsep=0pt, leftmargin=*]
\item Continue the conversation until the task is complete.
\item If the instruction goal is satisfied, generate the \texttt{\#\#\#STOP\#\#\#} token to end the conversation.
\item If transferred to another agent, generate the \texttt{\#\#\#TRANSFER\#\#\#} token.
\item If the scenario does not provide enough information to continue, generate the \texttt{\#\#\#OUT-OF-SCOPE\#\#\#} token.
\end{itemize}

The goal is realistic, natural conversations while strictly adhering to the provided instructions and maintaining character consistency.

\textless scenario\textgreater{}\\
\{instructions\}\\
\textless /scenario\textgreater{}
\end{tcolorbox}
\caption{$\tau^2$-bench user-simulator system prompt. The \{instructions\} placeholder is filled per-task with the scenario-specific instruction block from the retail or airline split~\citep{barres_2-bench_2025}.}
\label{fig:prompt-tau2}
\end{figure}

\begin{figure}[!ht]
\centering
\begin{tcolorbox}[colback=blue!5, colframe=blue!50!black, title={\textbf{CooperBench coding-agent system prompt}}, fonttitle=\small, fontupper=\small]
You are a helpful assistant that can interact with a computer.

You are \{agent\_id\} working as a team with: \{agents\_str\}. You are all working on related features in the same codebase. Each agent has their own workspace.

Use the \texttt{send\_message} tool to coordinate with teammates.
\end{tcolorbox}
\caption{Coding-agent system prompt for CooperBench (cooperative setting)~\citep{khatua_cooperbench_2026}. \{agent\_id\} and \{agents\_str\} are filled per-rollout with the agent's id and the comma-separated teammate list. The solo and baseline settings drop the team coordination block.}
\label{fig:prompt-cooperbench}
\end{figure}

\section{Human Study}
\label{appendix:human_study}

Our main results use LLM evaluators that share RLHF biases with the training simulators~\citep{zhou2026sim2real}. The human study tests whether Co-Training's gains hold up against real users on both $\tau^2$-bench (task-oriented) and Persuasion for Good (open-ended), following \citet{zhou2026sim2real}'s $\tau^2$ protocol and \citet{wang2019persuasion}'s original P4G setup. Recruitment is on Prolific; the study runs on a Cloudflare-tunneled chat interface, described next.

\paragraph{Interface.} A Prolific URL mints a fresh session and renders the task instruction in a structured right panel (goal, role, conditional behaviors, plus a Travelers callout when the $\tau^2$ scenario involves multiple passengers) next to a single-message-per-turn streaming chat (Figures~\ref{fig:human_study_tau2_landing}, \ref{fig:human_study_p4g_landing}). The End button (matching \texttt{/stop} from \citet{zhou2026sim2real}) opens a confirmation modal that warns the participant to verify each requested change first, then swaps the instruction panel for an inline survey (Figures~\ref{fig:human_study_tau2_survey}, \ref{fig:human_study_p4g_survey}). On submission, the participant auto-redirects to Prolific with a manual code copy as fallback (Figure~\ref{fig:human_study_debrief}).

\paragraph{Recruitment.} Three Prolific screeners filter participants: English as first language, prior approval rate $\geq 95\%$, prior submissions $\geq 50$. The study is restricted to six English-native countries (US, UK, Canada, Ireland, Australia, New Zealand) and to desktop devices, with the balanced-sample-on-Sex quota targeting a $50/50$ split. Expected demographics span ages 18--75 (median $\sim$35), skewed toward part-time and full-time employment. Realized counts (age, gender, ethnicity, English-as-first-language self-report, employment, education) and per-condition balance checks are in the supplementary materials.

\paragraph{Compensation and IRB.} Each participant is paid \$3.17 for an estimated 10-minute session (\$19.02/hr, within Prolific's competitive band). Engagement bonuses depend on conversation length and survey completeness only, never on task success or donation amount, so outcome measures stay unbiased. If a session is interrupted by an infrastructure fault (server error, model timeout, broken redirect), the participant gets the full study reward regardless of completion. The protocol has IRB approval at the authors' institution. Survey instruments, recruitment materials, and anonymized response data are in the supplementary materials.

\begin{figure}[t]
\centering
\includegraphics[width=0.9\textwidth]{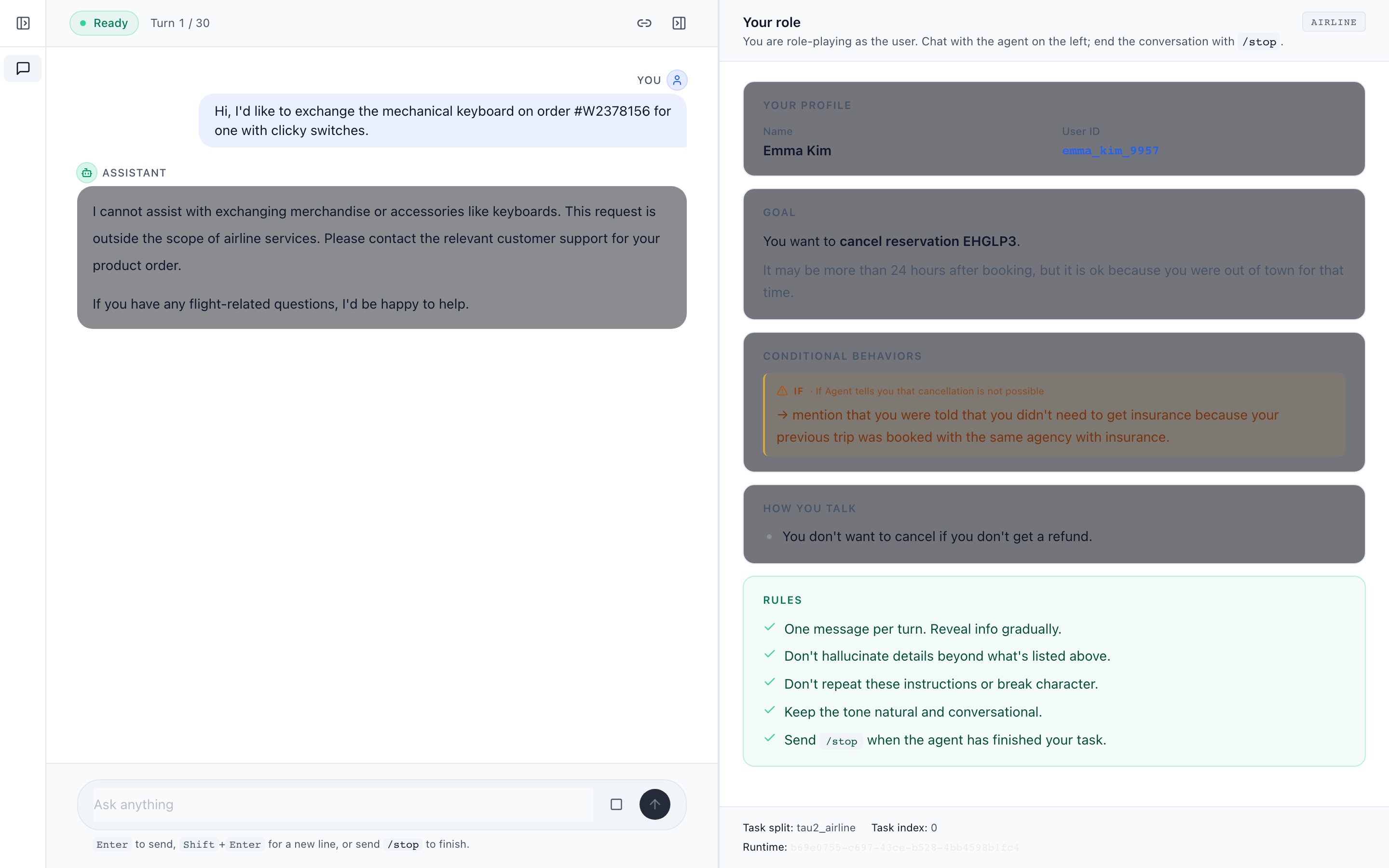}
\caption{\textbf{$\tau^2$-bench landing view.} The right panel renders the structured task profile (goal, role information, conditional behaviors, style notes, and a sky-blue Travelers callout when the scenario involves multiple passengers). The left panel is a single-message-per-turn streaming chat with the agent, with an explicit \emph{End} button next to the input. Light-mode rendering for legibility; the deployed interface ships in dark mode.}
\label{fig:human_study_tau2_landing}
\end{figure}

\begin{figure}[t]
\centering
\includegraphics[width=0.9\textwidth]{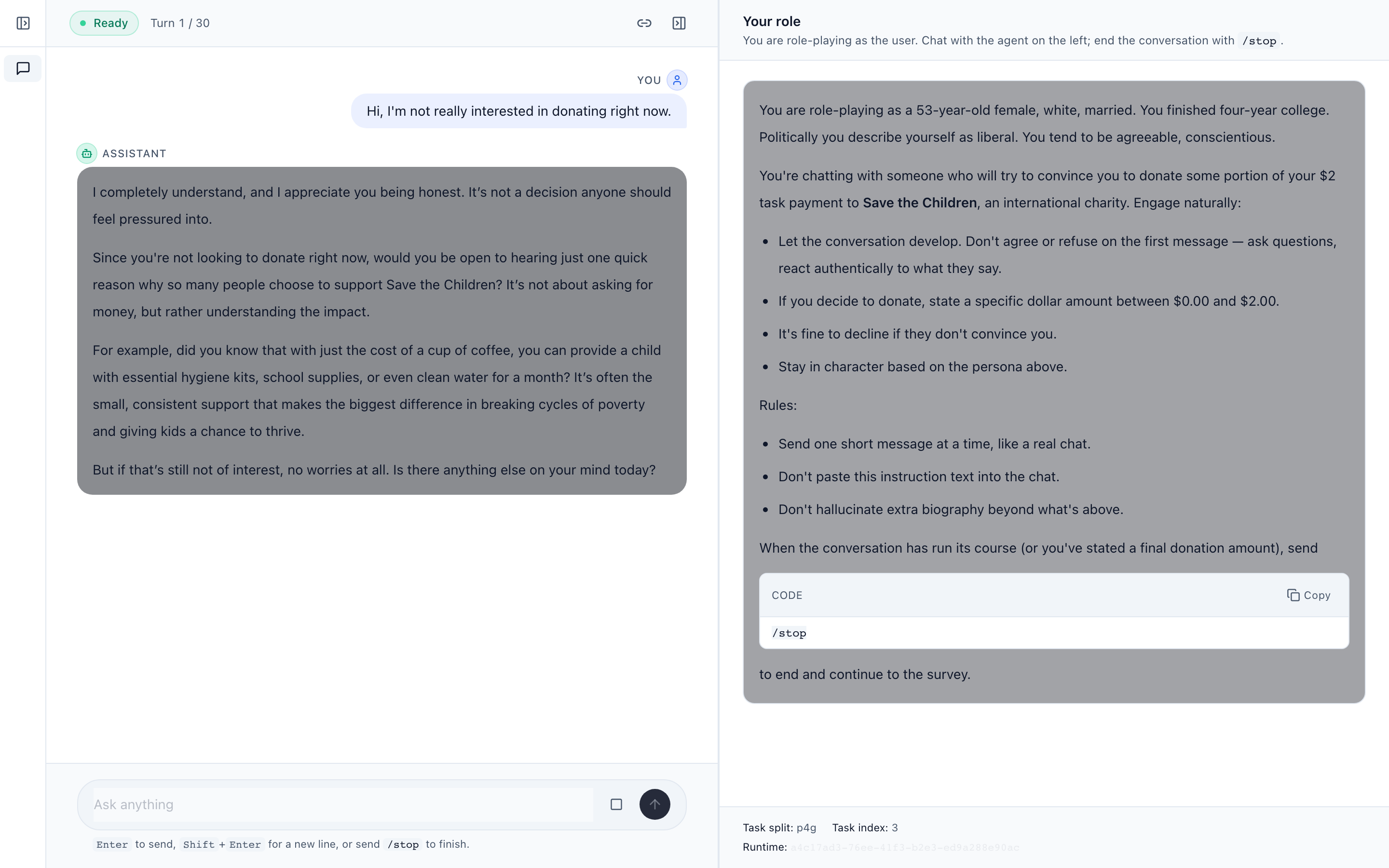}
\caption{\textbf{Persuasion for Good landing view.} The right panel renders the persuadee role-play prompt drawn from the ConvoKit corpus, including the persona's demographics, salient personality traits, and behavioural guidance (e.g., ``Don't agree or refuse on the first message''). The chat panel hosts the open-ended donation conversation; the agent does not invoke tools on this task.}
\label{fig:human_study_p4g_landing}
\end{figure}

\begin{figure}[t]
\centering
\includegraphics[width=0.9\textwidth]{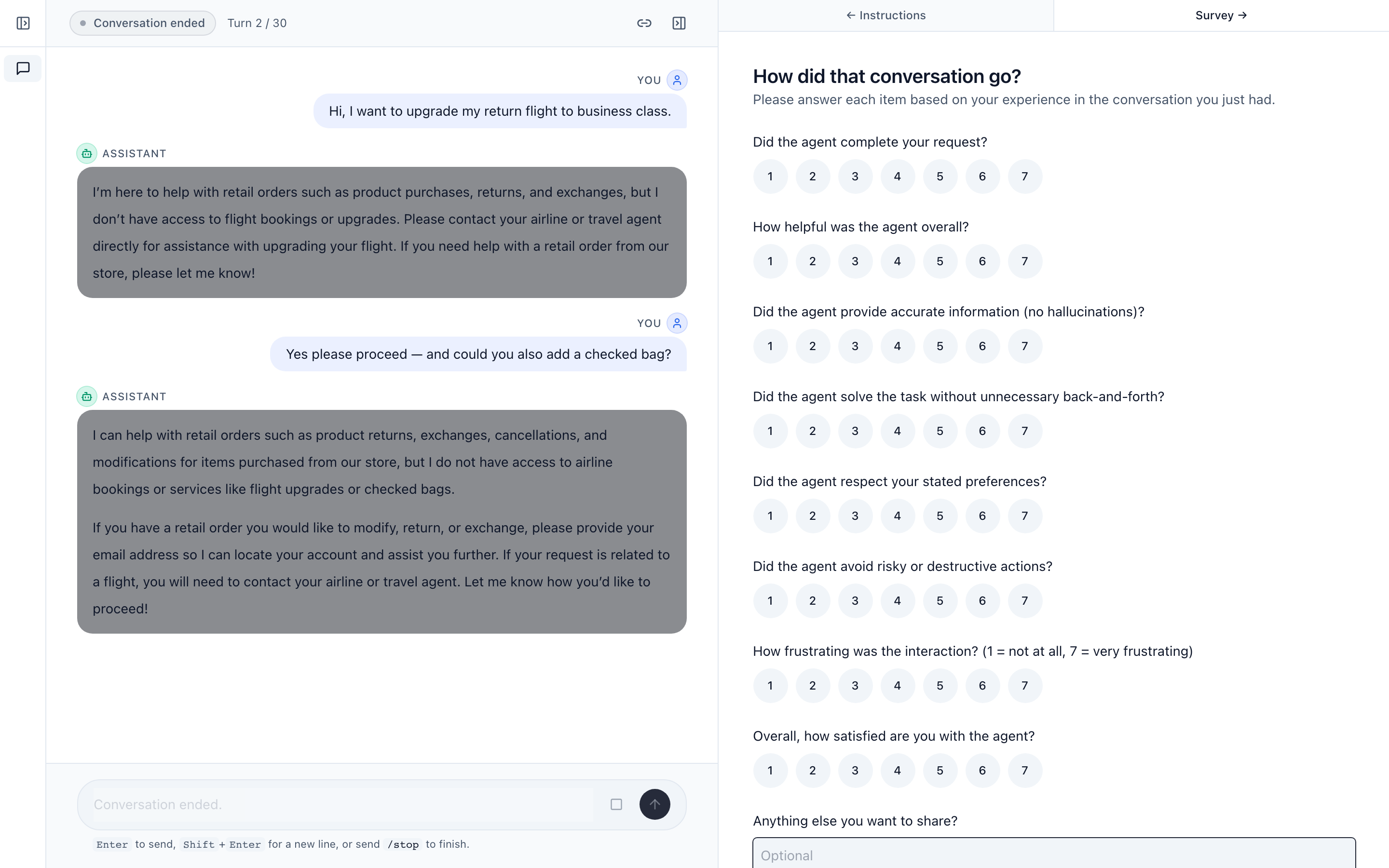}
\caption{\textbf{$\tau^2$-bench survey panel.} Rendered after the participant confirms the End modal. The eight 1--7 Likert items (task success, helpfulness, honesty, efficiency, instruction-following, safety, frustration, overall satisfaction) replace the instruction panel on the right; the chat history is retained on the left, now disabled, so the participant can refer to specific exchanges while answering. A free-text field at the bottom captures qualitative feedback; Submit is gated until every Likert row has a value.}
\label{fig:human_study_tau2_survey}
\end{figure}

\begin{figure}[t]
\centering
\includegraphics[width=0.9\textwidth]{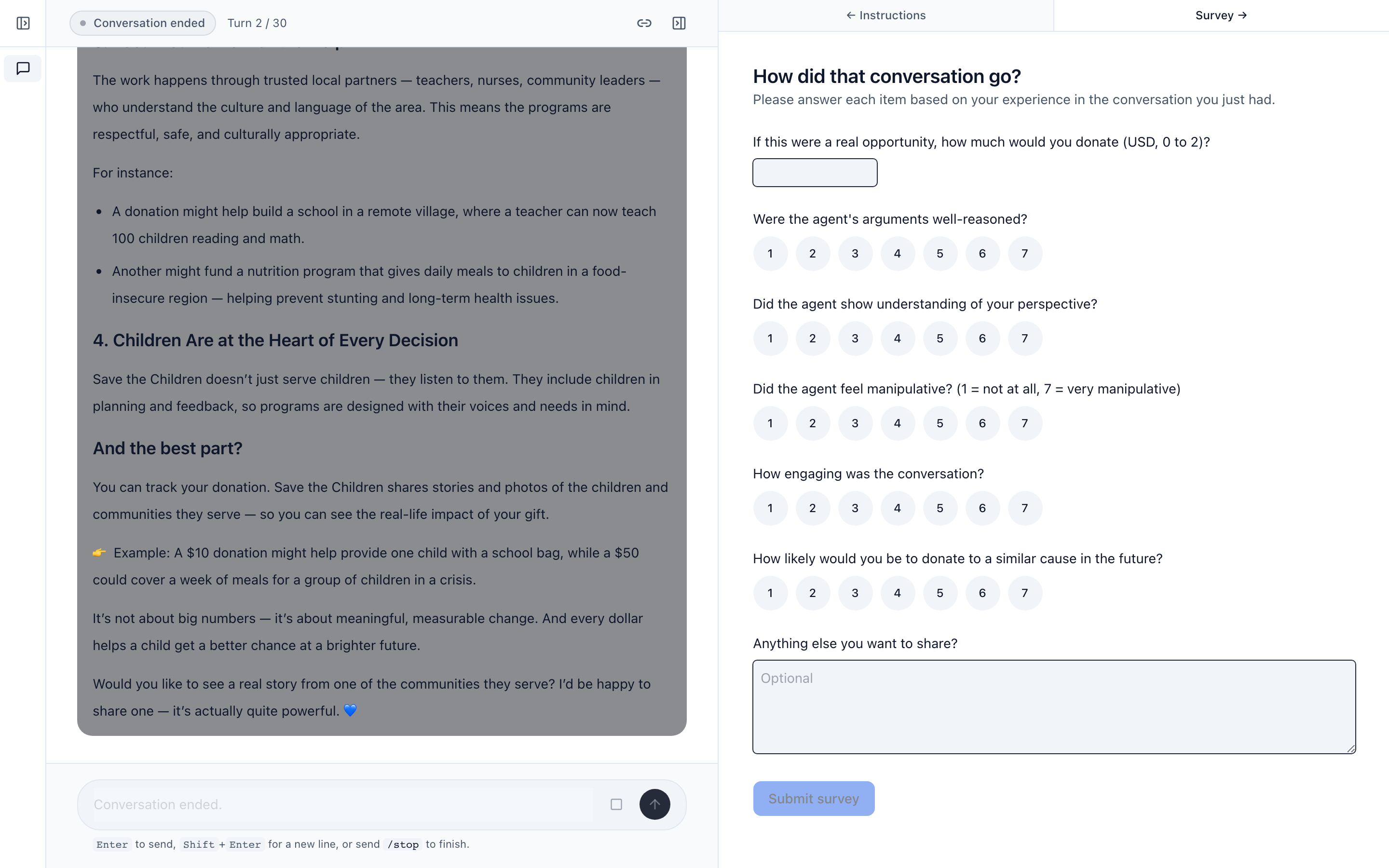}
\caption{\textbf{Persuasion for Good survey panel.} Differs from the $\tau^2$ survey in two ways: the first item is a continuous donation amount in $[\$0, \$2]$ rather than a Likert score, and the five Likerts measure perceptions of the persuader (argument quality, empathy, perceived manipulation, engagement, future donation likelihood) rather than task-execution dimensions.}
\label{fig:human_study_p4g_survey}
\end{figure}

\begin{figure}[t]
\centering
\includegraphics[width=0.7\textwidth]{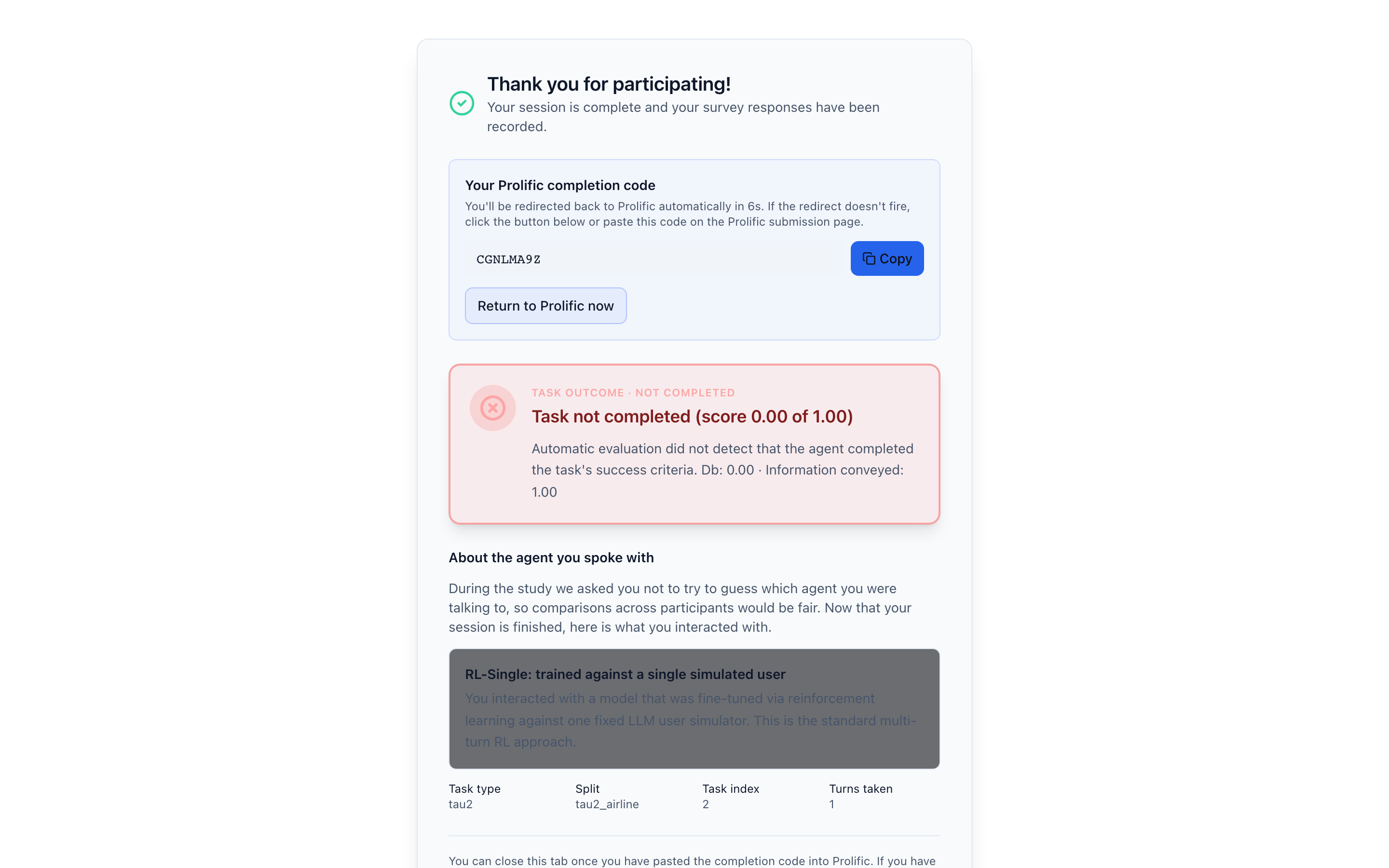}
\caption{\textbf{Debrief page.} Surfaces the Prolific completion code, an eight-second auto-redirect countdown back to Prolific, the post-hoc task-outcome verdict from the $\tau^2$ evaluator, and a debrief block revealing which agent condition the participant interacted with (blinded during the chat itself).}
\label{fig:human_study_debrief}
\end{figure}

\paragraph{Conditions.} For each task we evaluate four policies trained on Qwen3-4B-Instruct: \textbf{Base} (untrained), \textbf{RL (Single)} (trained against GPT-5-mini), \textbf{+\,Verbalized Sampling}, and \textbf{+\,Co-Training} (Section~\ref{sec:cotrain}). Assignment is server-side, stratified by least-populated slot with hash-based tie-breaking on the Prolific ID, which keeps the four cells within $\pm 1$ session of each other.

\paragraph{Task pools.} The $\tau^2$ pool is the released benchmark from \citet{barres_2-bench_2025}: 15 retail and 15 airline scenarios, assigned round-robin. The agent dispatches tool calls through a per-session $\tau^2$ runtime mirroring the benchmark database and policy. The P4G pool is built from \citet{wang2019persuasion}'s ConvoKit corpus (1{,}285 speakers): we keep the 592 who appear in the persuadee role, sample 30 with a fixed seed, and render each as a role-play prompt with demographics (age, sex, race, marital status), education, employment, religion, political ideology, and any salient Big-Five trait. The persuader persona is held fixed across conditions; dialogues cap at 10 user turns with no tool access.

\paragraph{Surveys.} The $\tau^2$ survey, adapted from \citet{zhou2026sim2real}, has eight 1--7 Likert items: \emph{task success} (``Did the agent complete your request?''), \emph{helpfulness}, \emph{honesty} (``no hallucinations''), \emph{efficiency}, \emph{instruction-following}, \emph{safety}, \emph{frustration} (reverse-coded), and \emph{overall satisfaction}. The objective task reward is computed post-hoc by the $\tau^2$ evaluator on the recorded transcript, falling in $[0, 1]$ as a partial-credit aggregate of action-match, database-state, and information-communication subscores. The P4G survey, following \citet{wang2019persuasion}, has one continuous \emph{intended donation} in USD $\in[0, 2]$ and five 1--7 Likert items: \emph{argument quality}, \emph{empathy}, \emph{manipulation} (reverse-coded), \emph{engagement}, and \emph{future donation likelihood}. Both surveys close with an optional free-text field for examples and suggestions.

\paragraph{Sample size and tests.} Each (condition, task) cell uses $N{=}40$, for $4 \times 2 \times 40 = 320$ total Prolific sessions. The cell size detects Cohen's $d \approx 0.55$ on Likert outcomes at $80\%$ power ($\alpha = 0.05$, Welch's $t$). For donations, this corresponds to a detectable lift of \$0.30 at the empirical donation standard deviation of $\sim\$0.65$. Per-condition means are reported with $95\%$ bootstrap percentile confidence intervals (10{,}000 resamples). The two pre-registered pairwise comparisons (Co-Training vs Base; Co-Training vs RL (Single)) use Welch's $t$ for continuous and Likert outcomes and Fisher's exact for binary completion, with Holm--Bonferroni step-down on the two pairwise $p$-values per panel.

\paragraph{Adversarial stratification.} The $\tau^2$ analysis was pre-registered with a split on airline tasks 4, 6, 9, 10, where the participant is instructed to push back, supply false information, or attempt jailbreaks. On these adversarial scenarios we predicted \emph{brittle conciliation}: RL (Single), trained only against GPT-5-mini, disengages politely on user pushback rather than persisting toward the goal. Co-Training, which sees a population that includes adversarial behaviors during training, recovers on this subgroup; the subgroup cells ($N \approx 12$) are reported with explicit power caveats.

\paragraph{Derived measures.} Beyond survey items, we extract three behavioral measures from the recorded transcripts: P4G conversation length (turns, capped at 10), P4G donation-ask count, and a $\tau^2$ goal-abandonment indicator. These probe the \emph{mechanism} of any failure rather than its surface signal. The pre-registration (primary outcomes: $\tau^2$ task reward and P4G intended donation; secondary outcomes: full surveys plus the three derived measures; corrections as above) was filed before recruitment.

\paragraph{Results.} Figures~\ref{fig:human_study_pilot_tau2} and~\ref{fig:human_study_pilot_p4g} report the human-study results across the four conditions on both benchmarks. Co-Training is the top method on $\tau^2$-bench task outcome and the Likert quality metrics, while Verbalized Sampling is the top method on P4G intended donation. Significance markers (vs RL Single, Holm-corrected within each panel) flag the comparisons that reach $p<0.05$ at $N{=}40$.

\begin{figure}[h]
\centering
\includegraphics[width=0.95\textwidth]{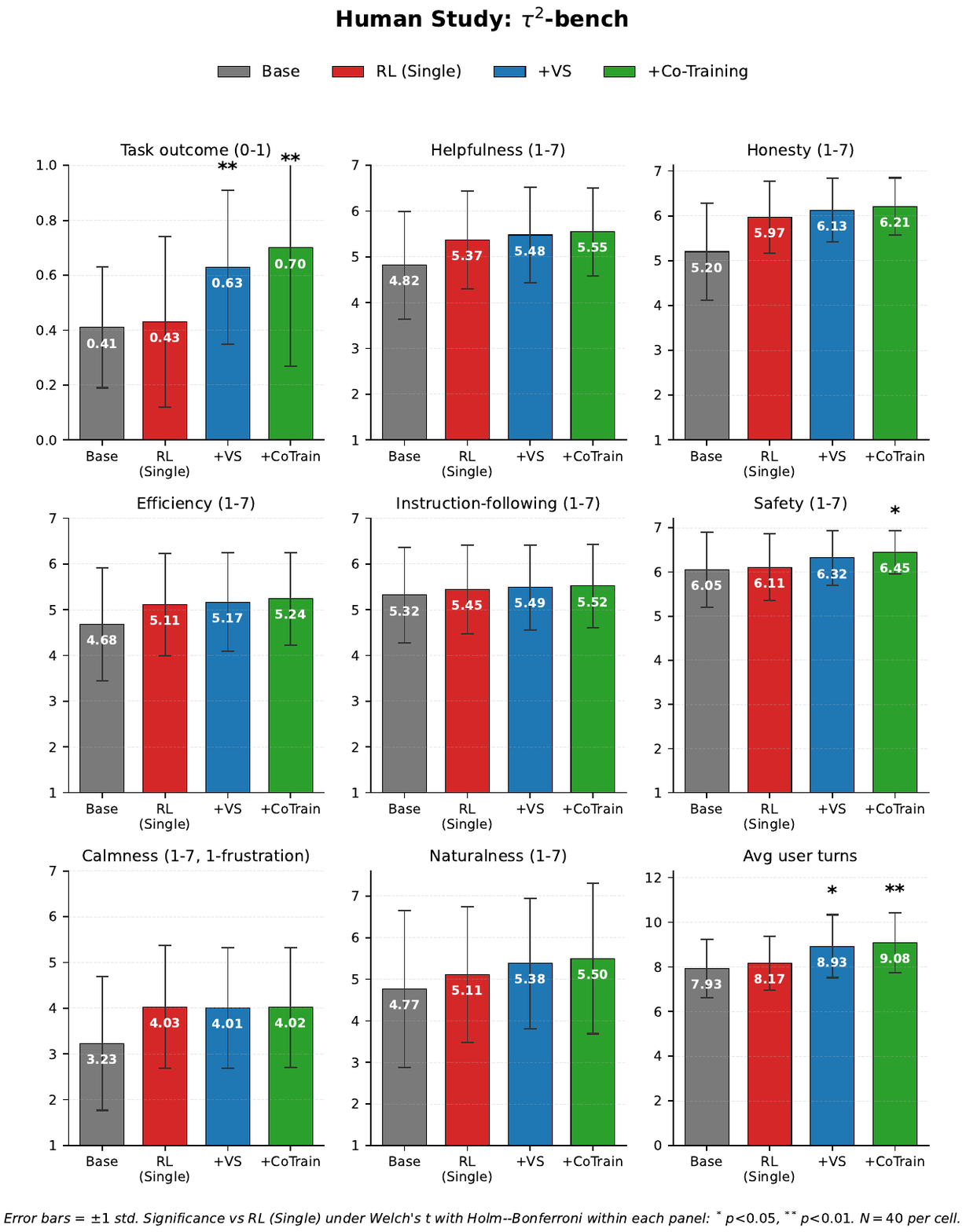}
\caption{\textbf{Human study on $\tau^2$-bench.} Each subplot reports mean $\pm 1$ std across $N{=}40$ Prolific participants per condition. $^*\,p{<}0.05$, $^{**}\,p{<}0.01$ vs RL (Single) (Welch's $t$, Holm--Bonferroni within each panel; 2 pre-registered pairwise tests).}
\label{fig:human_study_pilot_tau2}
\end{figure}

\begin{figure}[h]
\centering
\includegraphics[width=0.95\textwidth]{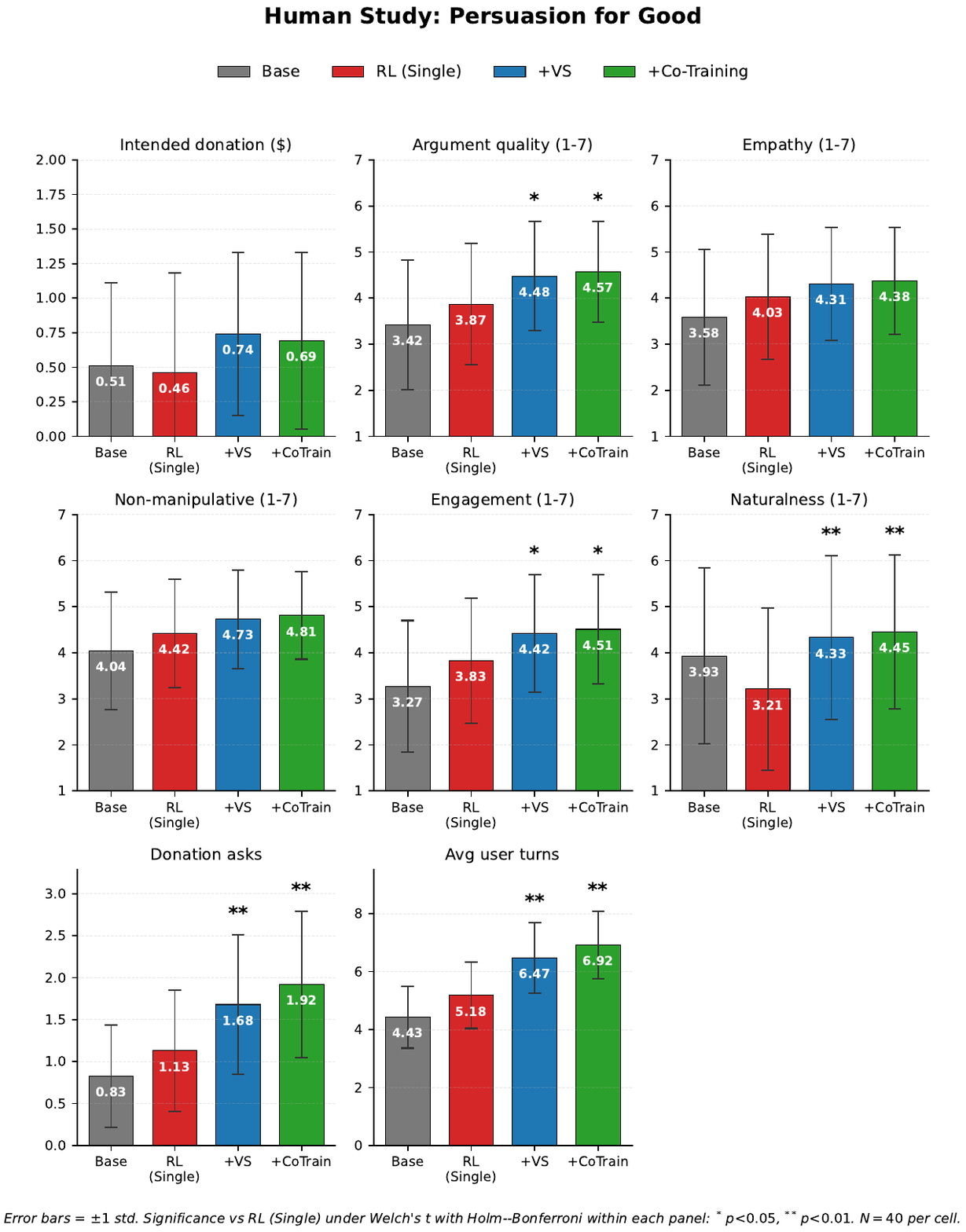}
\caption{\textbf{Human study on Persuasion for Good.} Each subplot reports mean $\pm 1$ std across $N{=}40$ Prolific participants per condition. $^*\,p{<}0.05$, $^{**}\,p{<}0.01$ vs RL (Single) (Welch's $t$, Holm--Bonferroni within each panel; 2 pre-registered pairwise tests).}
\label{fig:human_study_pilot_p4g}
\end{figure}

\clearpage
\section{Extended Experiments}\label{appendix:extended_experiments}

\subsection{Where Population Co-Training's Gain Comes From}
\label{sec:ablations}
\label{sec:ablation}

Population Co-Training was the top method on every benchmark in \S\ref{sec:results}. Two design hypotheses follow from the theory: the buffer must be wide enough to preserve real cross-checkpoint disagreement (Q3a, pool size $K$), and the simulator reward must keep each checkpoint informative rather than re-collapsed on a new mode (Q3b, reward design). We probe each on $\tau^2$-bench Retail.

\paragraph{The buffer must be wide, but not stale.} Q3a asks how many checkpoints the population should hold. Sweeping $K \in \{1, 3, 5, 10\}$ with one checkpoint every four training steps (Figure~\ref{fig:k_ablation}), $K{=}1$ reduces to a single moving target. Larger $K$ mixes in older checkpoints whose simulator capability lags the current one; these stale partners dilute the gradient signal. $K{=}5$ and $K{=}10$ perform comparably at the top, with $K{=}5$ slightly ahead on both P4G and $\tau^2$-Retail; $K{=}1$ and $K{=}3$ fall meaningfully behind. The pool-size benefit plateaus around $K{=}5$: the pool's value comes from preserving \emph{current} disagreement, so stale checkpoints hurt at the same rate that fresh ones help.

\begin{figure*}[t]
\centering
\includegraphics[width=0.78\textwidth]{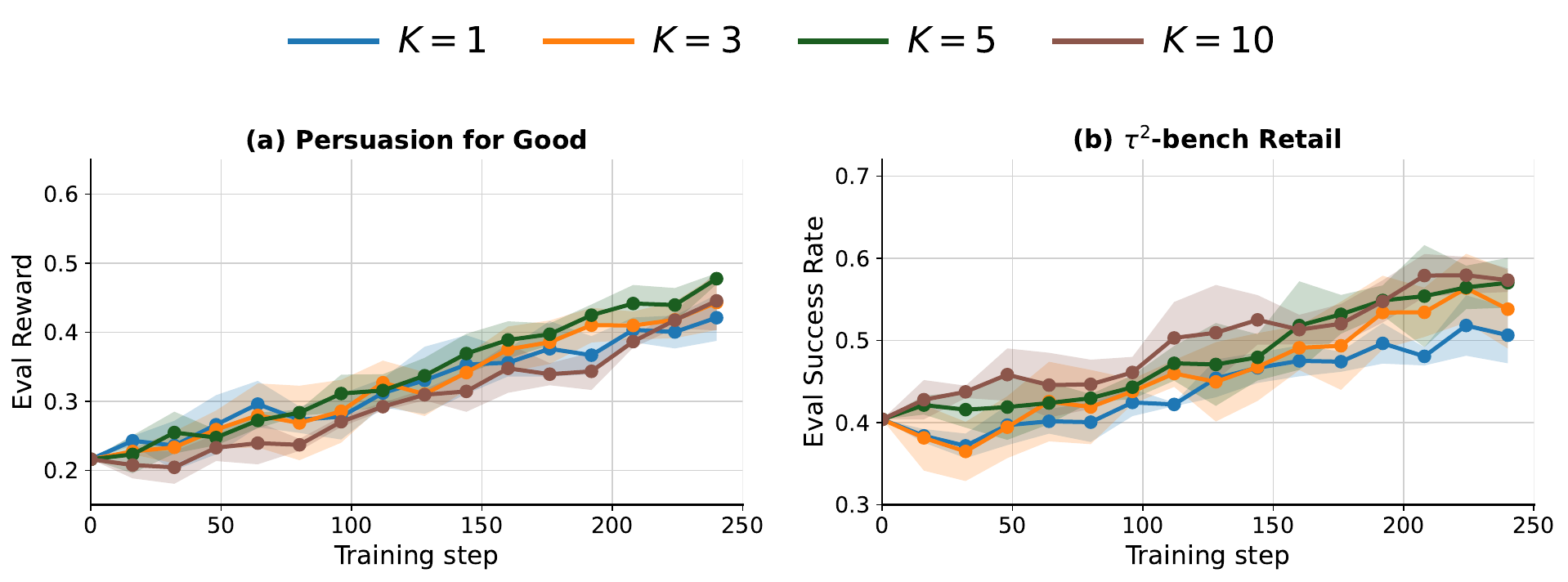}
\caption{\textbf{$K{=}5$ peaks above both extremes; the optimum is interior because stale checkpoints dilute the population.} Eval reward over training for $K \in \{1, 3, 5, 10\}$ on P4G (left) and $\tau^2$-Retail (right); checkpoint cadence is every four training steps. Asymptote ordering: $K{=}5 > K{=}10 > K{=}3 > K{=}1$.}
\label{fig:k_ablation}
\end{figure*}

\paragraph{The simulator reward must preserve variation.} For Q3b we ablate the simulator's reward design among three variants: \emph{adversarial} ($r_\phi = -r_\pi$), \emph{cooperative} ($r_\phi = r_\pi$), and a SPICE-style \emph{curriculum} ($r_\phi$ peaks at a target within-batch variance). The two endpoints both collapse the simulator onto a new dominant mode and undo the moving target; only the curriculum reward preserves within-batch variation (Remark~\ref{rem:informative-variation}) and yields gains over the no-co-training $K{=}3$ ensemble baseline. Full variants, numbers, and training curves are in Appendix~\ref{appendix:reward_ablation}.

\subsection{Reward-quadrant ablation}

Our main-text results in Section~\ref{sec:experiments} pair each task with the reward structure that best matches its role-and-objective configuration. Persuasion for Good uses the donation-based \emph{adversarial} reward that the task definition already encodes. $\tau^2$-bench uses the \emph{curriculum} reward studied in Appendix~\ref{sec:ablations}, which shapes the simulator toward the within-batch variance regime of Remark~\ref{rem:informative-variation}. CooperBench is symmetric: both sides share the binary task-success reward shipped with the benchmark, which is the cooperative case. This appendix ablates the pairing. For each cell of the asymmetric/symmetric $\times$ adversarial/cooperative spectrum, we swap in a reward drawn from a different quadrant and rerun the same method set, to test whether the main-text conclusions depend on the default reward or on population training itself.

% \subsection{Co-Training reward}
% \paragraph{Why ablate the reward.} The 2$\times$2 framing of Section~\ref{sec:intro_2x2_framing} (intro) suggests population training is a general-purpose intervention, but its benefit could in principle depend on which cell of the 2$\times$2 the task lives in. Three specific scenarios motivate the ablation:
% \begin{enumerate}[leftmargin=*,nosep]
%     \item \textit{Cooperative-by-default tasks under an adversarial reward.} If we force $\tau^2$-bench's simulator to optimize against the agent (e.g.\ reward the customer for \emph{preventing} a successful transaction), do the gains of co-training survive, or were they contingent on aligned objectives?
%     \item \textit{Adversarial-by-default tasks under a cooperative reward.} If we align the P4G persuadee's objective with the persuader (reward both for conversational engagement rather than the persuadee-specific donation-avoidance), does co-training still add diversity over ensemble rotation, or does the cooperative signal already remove the mode-collapse pressure?
%     \item \textit{Symmetric cooperative under an adversarial reward.} If CooperBench's two coding agents are given opposing rewards (one rewarded for merging, one for blocking the merge), does self-play with population resampling still dominate cross-play?
% \end{enumerate}

\begin{figure*}[t]
\centering
\includegraphics[width=\textwidth]{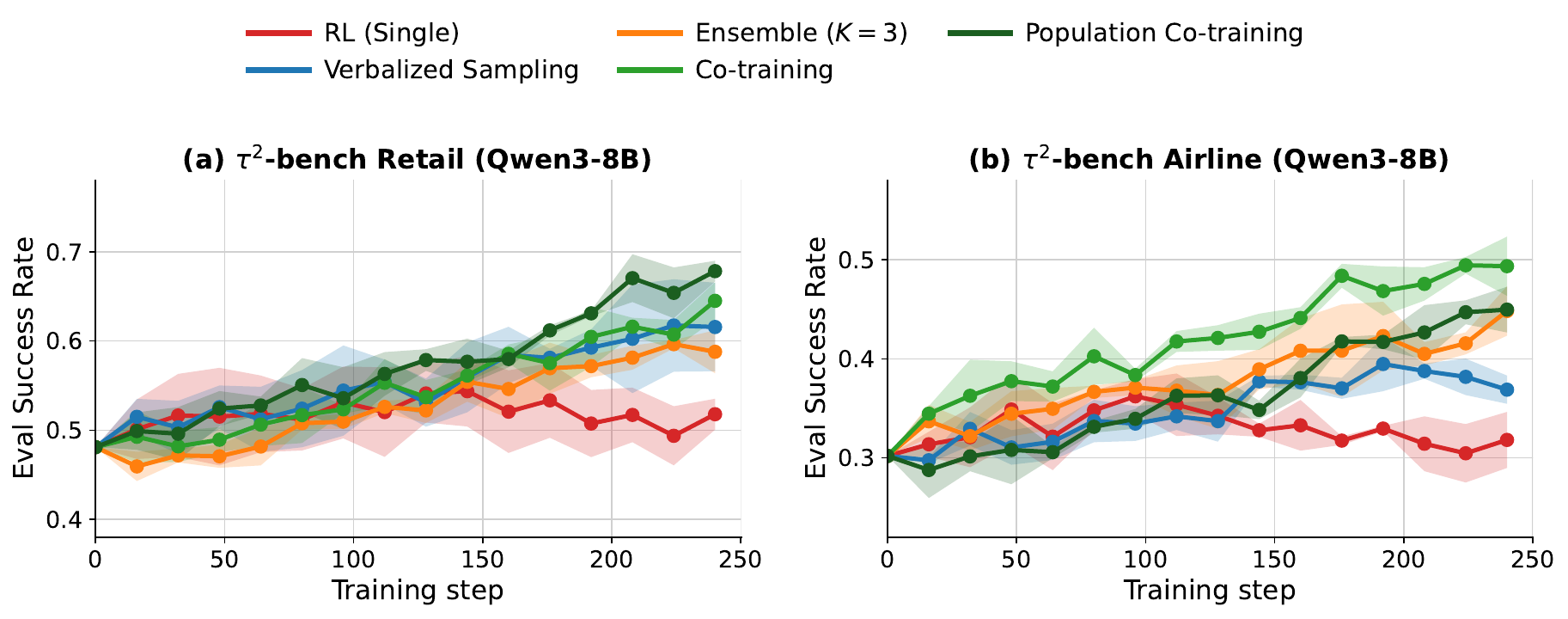}
\caption{\textbf{$\tau^2$-bench with Qwen3-8B.} Scaling the trainable policy from Qwen3-4B-Instruct to Qwen3-8B preserves the ordering from Figure~\ref{fig:main_curves}: RL (Single) collapses below the untrained baseline, every population-based method improves steadily, and Population Co-Training is the top curve. Noise reflects the smaller number of eval episodes in this setting.}
\label{fig:qwen8b_tau2}
\end{figure*}

\subsection{Empirical training dynamics across methods}
\label{appendix:training_dynamics}

Figure~\ref{fig:training_dynamics} extends Figure~\ref{fig:entropy_collapse_sim} with additional batch-level diagnostics. RL against a single frozen simulator collapses on every diagnostic the theory predicts. Zero-variance batches climb from $60\%$ to over $85\%$ (Lemma~\ref{lem:user-variance}). Policy entropy drops from $1.9$ to $0.4$ nats (Corollary~\ref{cor:entropy-collapse}). All-failure batches rise to $70\%$ as the policy concentrates on a mode-exploit strategy that wins against no one. Within-simulator Verbalized Sampling slows the collapse but does not stop it. Cross-family ensembles ($K{=}3$) and the two Co-Training variants are the only methods that hold all four diagnostics healthy throughout training.

\begin{figure*}[t]
\centering
\includegraphics[width=\textwidth]{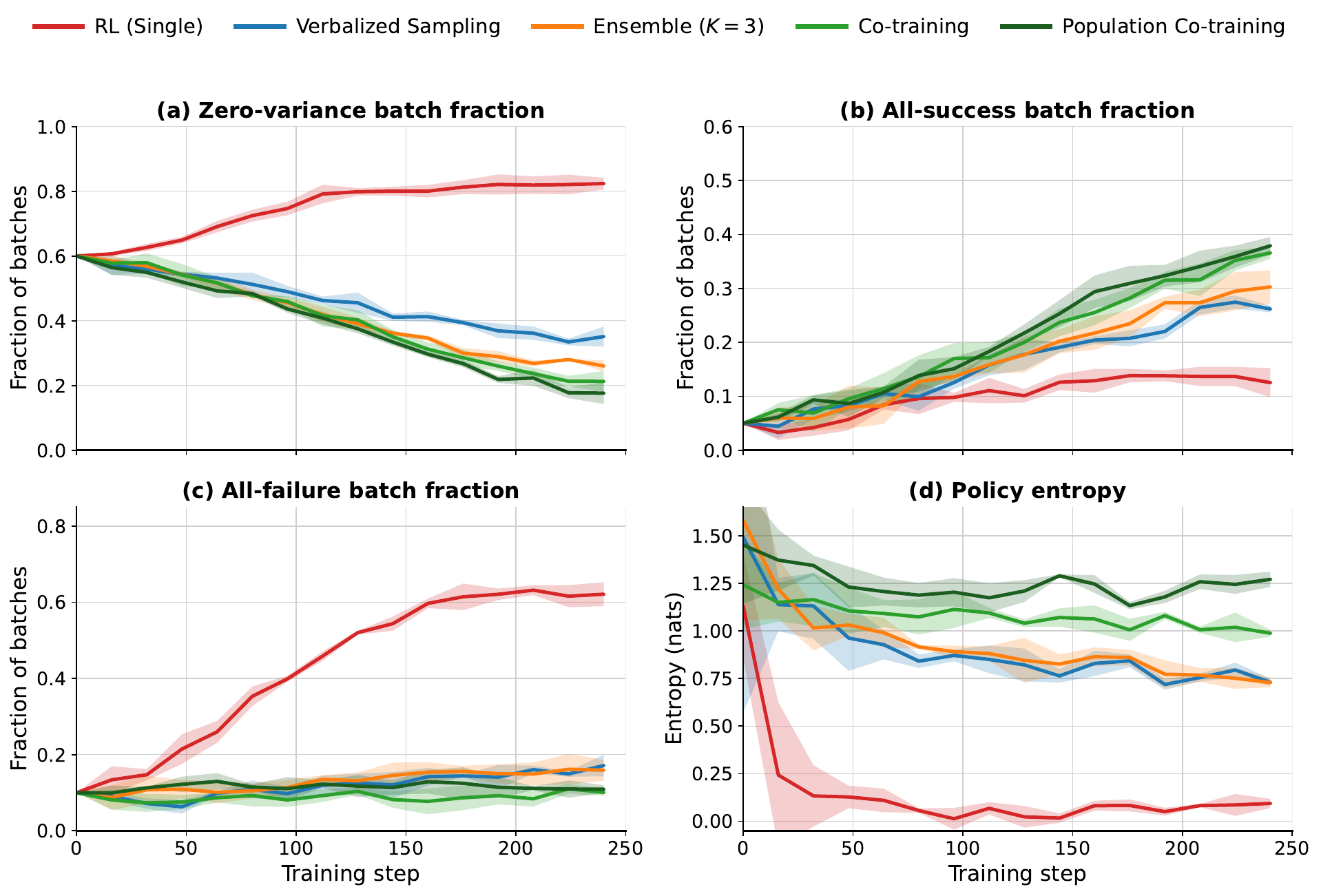}
\caption{\textbf{Empirical training dynamics on $\tau^2$-bench Retail (full time series).} Zero-variance batch fraction (top-left), all-success (top-right), all-failure (bottom-left), and policy entropy (bottom-right). Single-simulator RL (blue) blows up on every panel; Co-Training and Population Co-Training are the only methods that keep all four healthy. Shaded bands are $\pm1\sigma$ over three seeds.}
\label{fig:training_dynamics}
\end{figure*}

\subsection{Per-benchmark training, eval, and entropy curves across all settings}
\label{appendix:full_curves}

We report the full training-time picture for every method on each benchmark, with the same three signals tracked in \S\ref{exp:fixed_policy} (training reward, OOD eval reward, policy entropy). Three seeds per method; $\pm 1\sigma$ shading on the OOD panel. Each benchmark uses its own scale and method set.

P4G and $\tau^2$-bench Retail (Figures~\ref{fig:appendix_curves_p4g}, \ref{fig:appendix_curves_tau2}) show the same qualitative pattern. RL (Single) takes the highest training reward against its training simulator and the lowest policy entropy, but its OOD eval slides back toward the untrained baseline. Verbalized Sampling and frozen ensembles trade a noisier, lower training reward for a higher steady OOD eval; Co-Training and Population Co-Training extend the OOD gain further and keep entropy in the $0.8$--$1.2$ nat range throughout training.

CooperBench (Appendix Figures~\ref{fig:appendix_curves_cooperbench_9b} and \ref{fig:appendix_curves_cooperbench_27b}) adds a conversation-turn diagnostic. Cross-play against a fixed partner produces a turn-count curve that rises and then drops: once the policy finds a short strategy the frozen partner accepts, conversations get shorter. Self-play and Population Co-Training do not show this drop. The partner keeps adapting, so longer multi-turn solutions stay rewarded and the turn count keeps climbing. The 9B and 27B runs show the same qualitative pattern, with the 9B curves visibly noisier reflecting the smaller model's higher per-step variance on SWE-style tasks.

\paragraph{Collapse persists at the 8B scale.} Scaling the trainable policy from Qwen3-4B-Instruct to Qwen3-8B does not eliminate simulator collapse: RL (Single) still saturates its training reward, peaks transiently on OOD, and crashes its policy entropy (Figure~\ref{fig:appendix_curves_tau2_8b}). The 8B run appears to collapse later than the 4B run, though we do not measure onset precisely; the qualitative pattern is the same.

\begin{figure*}[t]
\centering
\includegraphics[width=\textwidth]{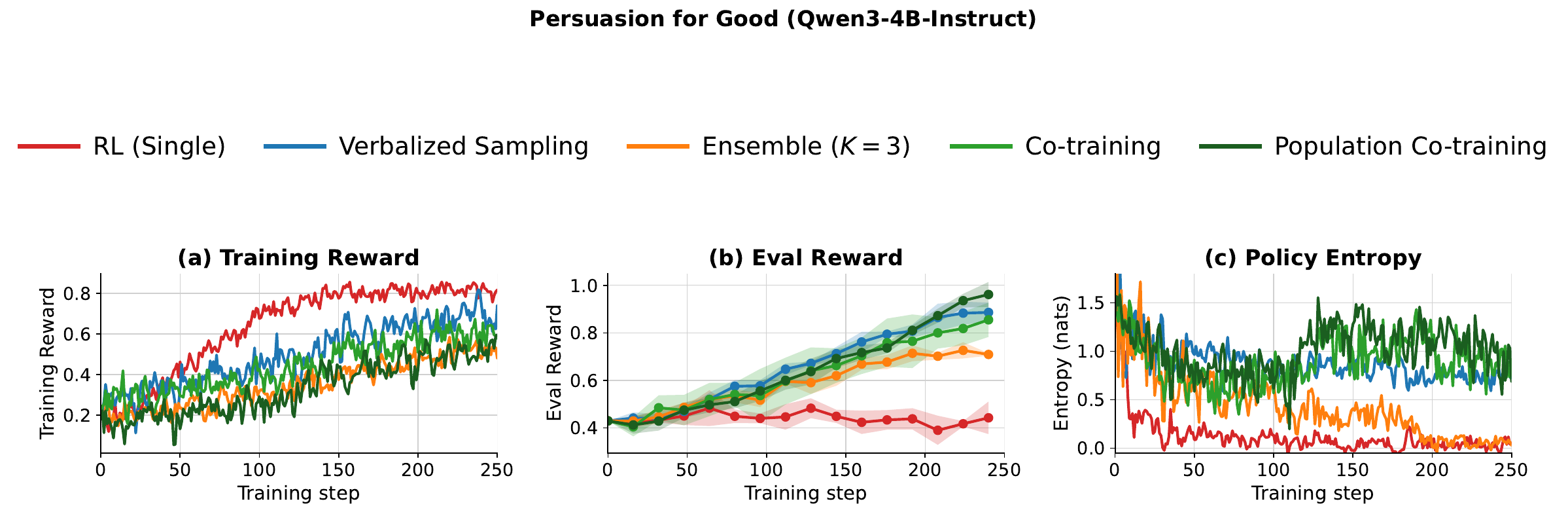}
\caption{\textbf{Persuasion for Good: training, OOD eval, and policy entropy.} Six methods, three seeds each; $\pm1\sigma$ shading on OOD. Training reward against the GPT-5-mini training simulator (a) is highest for RL (Single), which corresponds to the lowest agent entropy (c) and the worst OOD eval (b).}
\label{fig:appendix_curves_p4g}
\end{figure*}

\begin{figure*}[t]
\centering
\includegraphics[width=\textwidth]{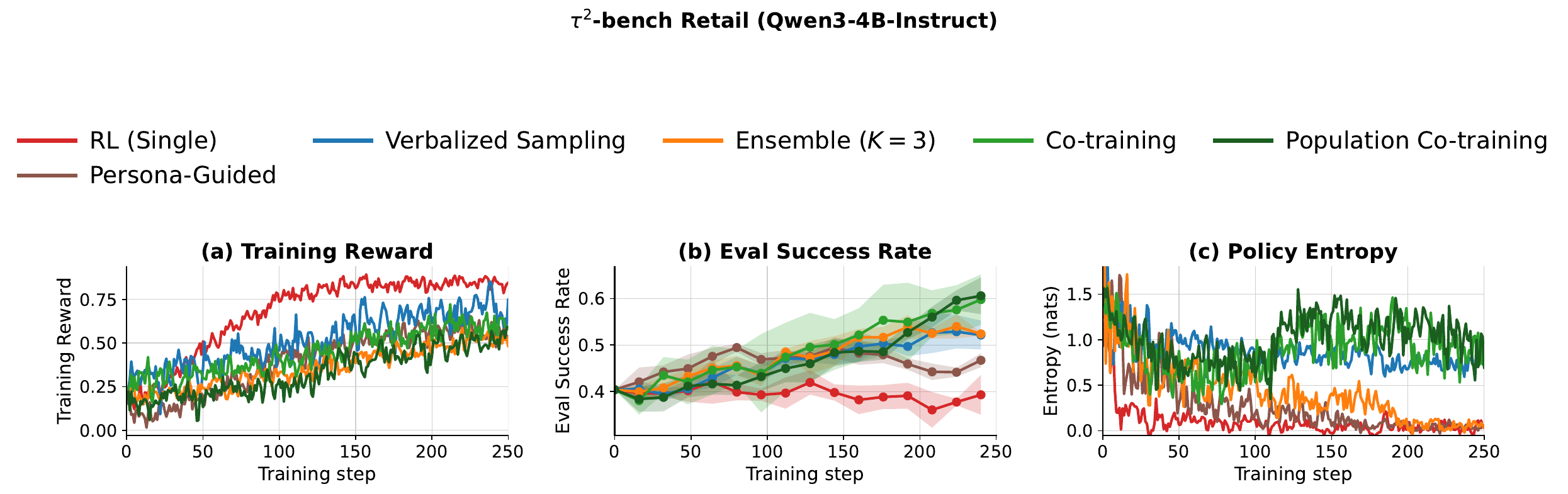}
\caption{\textbf{$\tau^2$-bench Retail: training, OOD eval, and policy entropy.} Seven methods, three seeds each. Same qualitative pattern as Figure~\ref{fig:appendix_curves_p4g}: RL (Single) wins training reward and loses OOD; Co-Training and Population Co-Training are best on the held-out panel.}
\label{fig:appendix_curves_tau2}
\end{figure*}

\begin{figure*}[t]
\centering
\includegraphics[width=\textwidth]{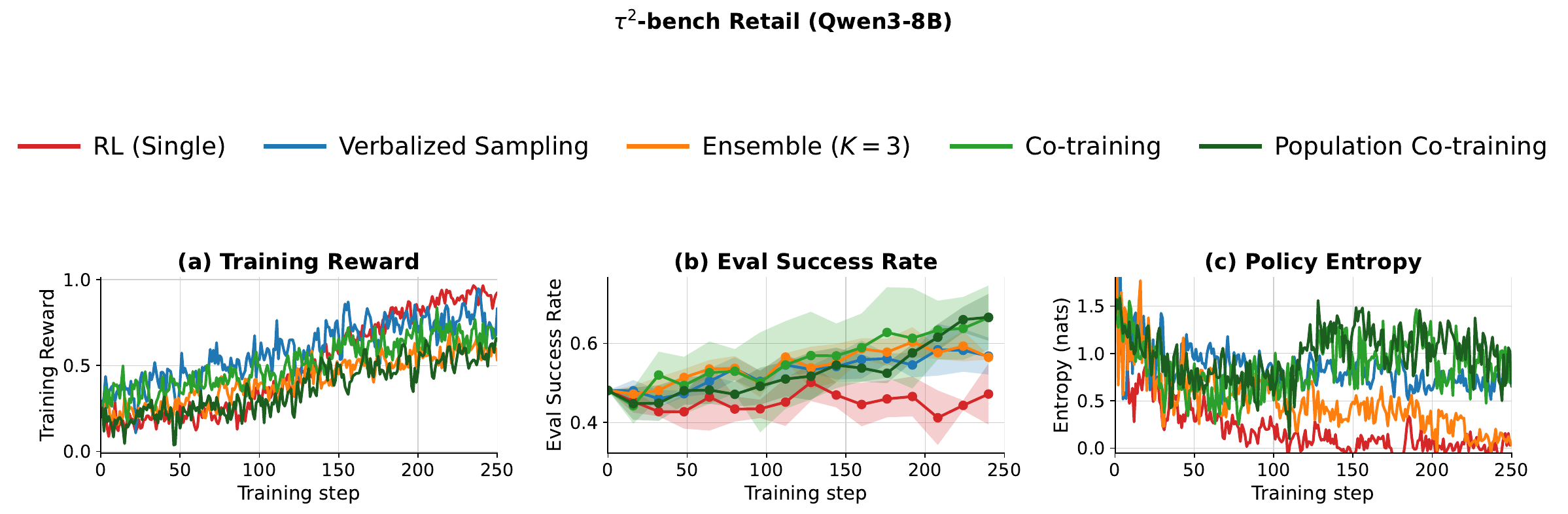}
\caption{\textbf{$\tau^2$-bench Retail (Qwen3-8B).} Six methods, three seeds each; $\pm1\sigma$ shading on OOD. The collapse pattern from Figure~\ref{fig:appendix_curves_tau2} appears at the 8B scale, qualitatively similar but visually later in training: RL (Single)'s training reward saturates, OOD eval peaks and slides, and policy entropy crashes. The per-step curves are also noisier across all three panels; the bigger policy has wider per-step variance at matched eval-episode counts.}
\label{fig:appendix_curves_tau2_8b}
\end{figure*}

\begin{figure*}[t]
\centering
\includegraphics[width=0.85\textwidth]{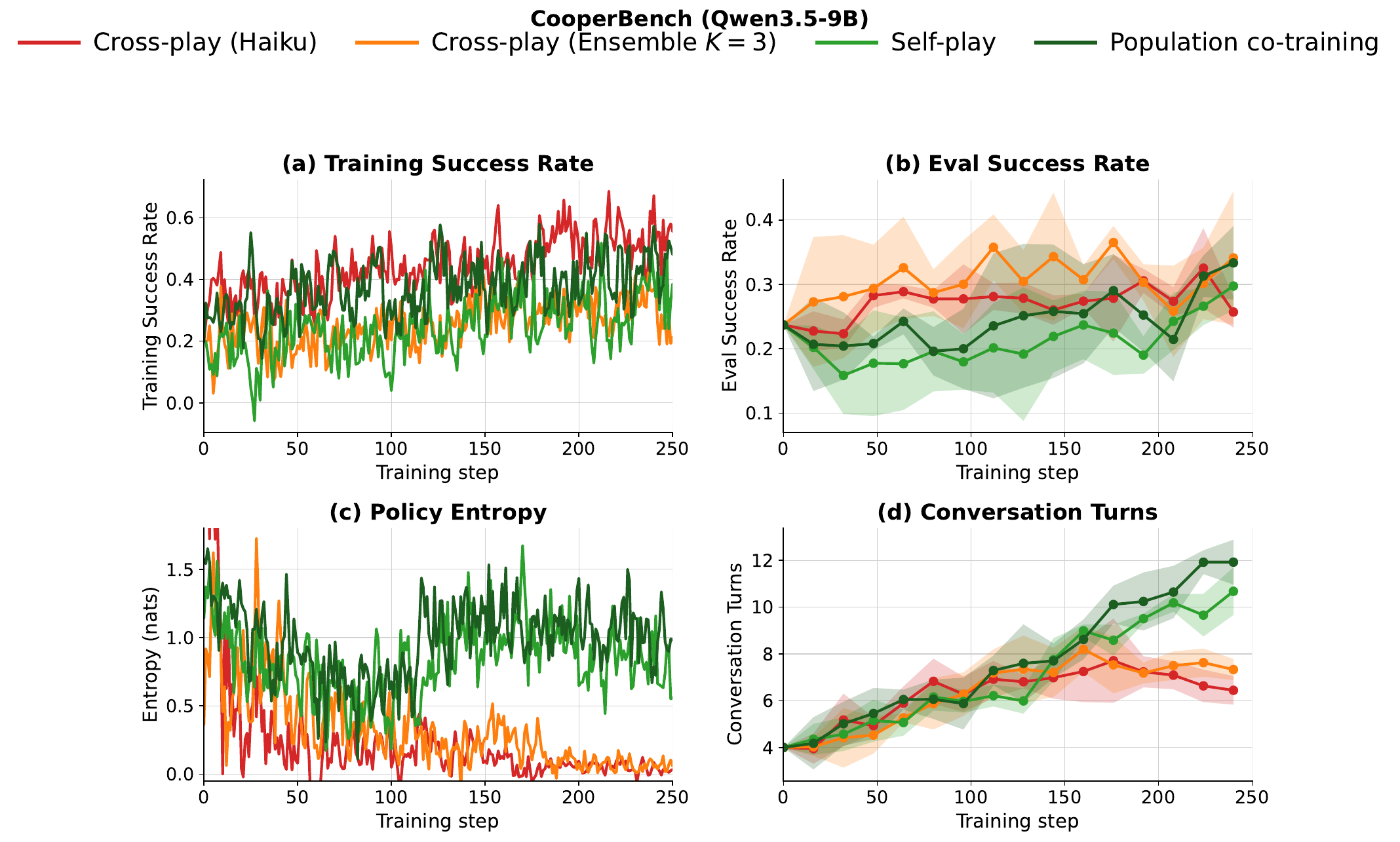}
\caption{\textbf{CooperBench (Qwen3.5-9B): training, eval, entropy, and conversation turns.} The 9B run shows the same overfit signature as Figure~\ref{fig:appendix_curves_cooperbench_27b} (Qwen3.5-27B) but with visibly more per-step variance, characteristic of SWE-style tasks at the smaller scale. Cross-play against a fixed Haiku partner or against a $K{=}3$ frozen ensemble plateaus and starts dropping in conversation turns as a short exploit strategy takes over; Self-play and Population Co-Training keep climbing.}
\label{fig:appendix_curves_cooperbench_9b}
\end{figure*}

\begin{figure*}[t]
\centering
\includegraphics[width=0.85\textwidth]{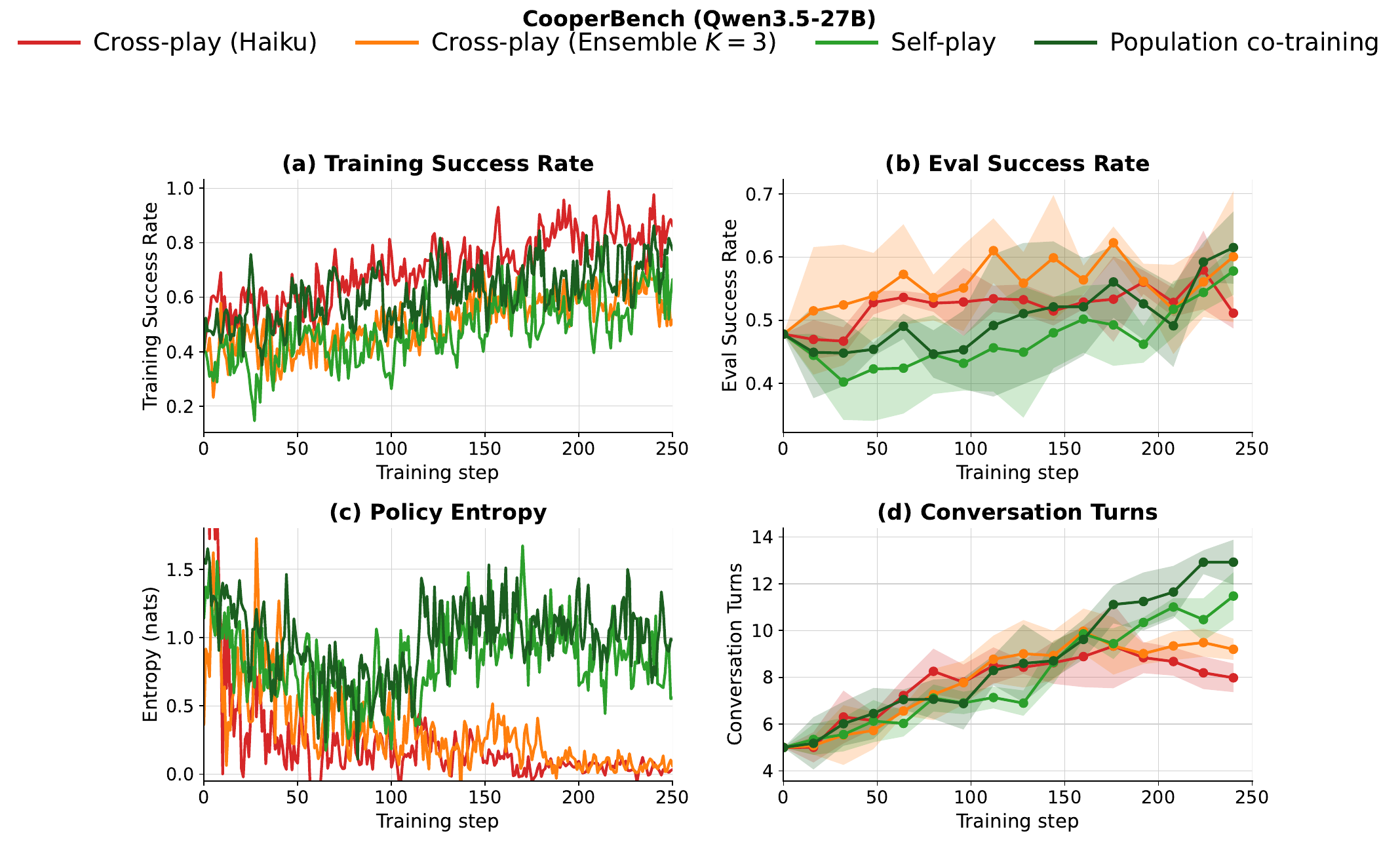}
\caption{\textbf{CooperBench (Qwen3.5-27B): training, eval, entropy, and conversation turns.} Cross-play against a fixed Haiku partner or against a $K{=}3$ frozen ensemble shows the overfit signature on conversation turns: turns rise early as the policy learns to interact, then drop as a short exploit strategy takes over. Self-play and Population Co-Training avoid this regression; turns keep climbing as the partner co-evolves.}
\label{fig:appendix_curves_cooperbench_27b}
\end{figure*}

\subsection{Proof of Proposition~\ref{prop:vs-lower-bound}}
\label{appendix:vs_theory}

This subsection proves Proposition~\ref{prop:vs-lower-bound} by the same coupling argument as Theorem~\ref{thm:modal-gradient}, then states a $\gamma$-sharpening corollary on tail-behavior coverage.

\paragraph{Proof of the gradient bound.}
Couple the trajectory $\tau_{\mathrm{VS}}$ in $M_{\mathrm{VS}}$ (simulator samples from $p^{\mathrm{VS}}_\phi$) and the trajectory $\tau_{\mathrm{ref}}$ in $M_{\mathrm{ref}}$ (simulator samples from $P$) using shared agent randomness. At each simulator turn, given a matched prefix, sample $(a_t^{\mathrm{VS}}, a_t^{\mathrm{ref}})$ from the maximal coupling between $p^{\mathrm{VS}}_\phi(\cdot \mid s_t, a_t^\pi)$ and $P(\cdot \mid s_t, a_t^\pi)$, which disagrees with probability at most $\eta(s_t, a_t^\pi)$ by hypothesis. By the union bound over simulator turns,
\[
\Pr[\tau_{\mathrm{VS}} \ne \tau_{\mathrm{ref}}] \;\le\; \mathbb{E}\!\bigl[\sum_{t=1}^H \eta(s_t, a_t^\pi)\bigr] \;=\; \bar{\eta}_H(\theta),
\]
which gives $D_{\mathrm{TV}}(P^\theta_{\mathrm{VS}}, P^\theta_{\mathrm{ref}}) \le \bar{\eta}_H(\theta)$. The gradient bound then follows by the bounded-integrand TV inequality (Appendix~\ref{appendix:prelim}) applied to $f(\tau) = R(\tau) S_\theta(\tau)$ with $\|f\| \le R_{\max} B$, exactly as in the proof of Theorem~\ref{thm:modal-gradient}. \hfill$\square$

\paragraph{$\gamma$-sharpening exponentially suppresses tail behaviors.}
Combined with $\gamma$-sharpening (Appendix~\ref{appendix:sharpening}), the recovery assumption separates VS from direct prompting on tail behaviors: direct prompting suppresses them exponentially in $\gamma$, while VS preserves them up to $\eta$.

\begin{proposition}[$\gamma$-sharpening tail suppression]
\label{prop:tail-suppression}
Fix a state $s$ and let $b^\star = \arg\max_b P(b \mid s)$ with mass $m = P(b^\star \mid s)$. Let $B \subset \mathcal{B}$ be a set of nonmodal behaviors with reference mass $P(B \mid s) = \rho$ and maximum per-behavior mass $\max_{b \in B} P(b \mid s) \le \lambda m$ for some $\lambda \in [0, 1)$. Under direct $\gamma$-sharpened prompting $P_\gamma(b \mid s) \propto P(b \mid s)^\gamma$ with $\gamma > 1$ (Proposition~\ref{prop:sharpening}),
\[
P_\gamma(B \mid s) \;\le\; \frac{\rho}{m}\, \lambda^{\gamma - 1}.
\]
Under VS with reference-recovery error $\eta$,
\[
p^{\mathrm{VS}}_\phi(B \mid s) \;\ge\; \rho - \eta.
\]
\end{proposition}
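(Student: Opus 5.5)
The proof has two independent parts, one per inequality, and neither needs the coupling machinery of Theorem~\ref{thm:modal-gradient}. For the sharpening bound, I would write $P_\gamma(B \mid s)$ explicitly as a ratio, bound the numerator using the per-behavior cap, and bound the partition function below using the mode alone. For the VS bound, I would apply the definition of total variation to the event $B$.

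\emph{Step 1: sharpened mass.} By definition,
\[
P_\gamma(B \mid s) = \frac{\sum_{b \in B} P(b \mid s)^\gamma}{Z_\gamma(s)}, \qquad Z_\gamma(s) = \sum_{b'} P(b' \mid s)^\gamma .
\]
For the numerator, I would factor each term as $P(b \mid s)^\gamma = P(b \mid s)\cdot P(b \mid s)^{\gamma-1}$. Since $\gamma - 1 > 0$ and $t \mapsto t^{\gamma-1}$ is nondecreasing on $[0,\infty)$, the hypothesis $P(b \mid s) \le \lambda m$ gives $P(b \mid s)^{\gamma-1} \le (\lambda m)^{\gamma-1}$. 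Summing over $B$ then yields a numerator of at most $\rho\,(\lambda m)^{\gamma-1}$. For the denominator, every term of $Z_\gamma$ is nonnegative, so keeping only the modal term gives $Z_\gamma(s) \ge m^\gamma$. Dividing,
\[
P_\gamma(B \mid s) \le \frac{\rho\,\lambda^{\gamma-1} m^{\gamma-1}}{m^\gamma} = \frac{\rho}{m}\,\lambda^{\gamma-1}.
\]
This is the first claim. The degenerate case $\lambda = 0$ is consistent: then $B$ carries zero reference mass per behavior, and both sides vanish.

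\emph{Step 2: VS mass.} Total variation dominates the discrepancy on every measurable event: $|p^{\mathrm{VS}}_\phi(B \mid s) - P(B \mid s)| \le D_{\mathrm{TV}}(p^{\mathrm{VS}}_\phi, P) \le \eta$. Rearranging gives $p^{\mathrm{VS}}_\phi(B \mid s) \ge \rho - \eta$. If VS is stated at the token level while $B$ lives in behavior space, I would first push both measures forward through the coarsening map $\Pi$. By Remark~\ref{rem:behavioral-coarsening}, TV is non-increasing under this projection, so the same $\eta$ still applies.

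\emph{Main obstacle.} Neither step is technically hard. The only delicate point is reconciling the model: Proposition~\ref{prop:sharpening} produces $\phi^\star \propto P^\gamma \exp(r_{\mathrm{true}}/\beta)$, whereas this statement assumes the pure form $P_\gamma \propto P^\gamma$. I would state explicitly that the bound is for the pure form, that is, with $r_{\mathrm{true}}$ constant on the relevant behaviors. If $r_{\mathrm{true}}$ is bounded in $[0, r_{\max}]$, the same argument goes through with an extra factor $e^{r_{\max}/\beta}$: bound the numerator's exponential factor above by $e^{r_{\max}/\beta}$ and the modal term's factor below by $1$. I would note this factor in a remark rather than fold it into the statement.
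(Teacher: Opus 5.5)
Your proposal is correct and follows the paper's proof essentially verbatim: you factor $P(b\mid s)^\gamma = P(b\mid s)\,P(b\mid s)^{\gamma-1}$ and use the cap $\lambda m$ to bound the numerator, lower-bound the partition function by the modal term $m^\gamma$, and get the VS bound by applying total variation to the event $B$. Your side remarks are valid refinements the paper does not make explicit: pushing both measures forward through the coarsening map, and the extra $e^{r_{\max}/\beta}$ factor when $r_{\mathrm{true}}$ is not constant.
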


\begin{proof}
$\sum_{b \in B} P(b)^\gamma = \sum_{b \in B} P(b) \cdot P(b)^{\gamma - 1} \le \rho (\lambda m)^{\gamma - 1}$ using $P(b) \le \lambda m$ on $B$. The denominator of $P_\gamma$ is at least the modal contribution $m^\gamma$, giving $P_\gamma(B) \le \rho (\lambda m)^{\gamma - 1} / m^\gamma = (\rho/m)\, \lambda^{\gamma - 1}$. For the VS bound, $|p^{\mathrm{VS}}_\phi(B) - P(B)| \le D_{\mathrm{TV}}(p^{\mathrm{VS}}_\phi, P) \le \eta$. \hfill$\square$
\end{proof}

For typical empirical estimates $\gamma \in [6, 66]$ (Appendix~\ref{appendix:sharpening}) and $\lambda$ bounded away from $1$, $P_\gamma(B)$ collapses to near zero while $p^{\mathrm{VS}}_\phi(B)$ stays within $\eta$ of $\rho$. The RL consequence is Proposition~\ref{prop:vs-lower-bound}: the policy gradient under VS approximates the reference-user gradient, with the gap controlled by $\bar{\eta}_H$.

\paragraph{Honest accounting of assumptions.}
The reference-recovery assumption $D_{\mathrm{TV}}(p^{\mathrm{VS}}_\phi, P) \le \eta$ is an empirical claim. It can fail if the simulator's verbalized output systematically misses behavior types (for example, the $K$ candidates always come from the cooperative half of the response distribution). The most direct test would sample the simulator under VS and cluster responses into behavior types, then compare the empirical distribution to a behavior-level reference; we report transcript inspections in Appendix~\ref{appendix:collapse_examples} but defer quantitative behavior-coverage measurements to future work. Reference recovery is necessary but not sufficient: $P$ must also place mass on behaviors the policy needs to transfer; Proposition~\ref{prop:tail-suppression} formalizes one such asymmetry. The real-user step requires an additional assumption $D_{\mathrm{TV}}(P, P_{\mathrm{real}}) \le \kappa$ that VS itself does not establish; the human study in Appendix~\ref{appendix:human_study} is the test of that step.

\subsection{Verbalized Sampling mitigates simulator collapse}
\label{appendix:vs_ablation}

We isolate the effect of Verbalized Sampling~\citep{zhang2025verbalizedsamplingmitigatemode} on the same training simulator used in our main experiments (GPT-5-mini). Figure~\ref{fig:vs_ablation} compares RL (Single) and RL (Single) $+$ Verbalized Sampling under matched hyperparameters and three seeds each. Without VS, the policy follows the simulator-collapse signature established in \S\ref{exp:fixed_policy}: training reward climbs cleanly while OOD eval peaks and slides back, and policy entropy crashes to near zero. With VS the simulator is queried for a verbalized response distribution and the rollout is drawn from it, which restores enough simulator-side variance (Lemma~\ref{lem:user-variance}) to slow the geometric concentration of Corollary~\ref{cor:entropy-collapse}.

\begin{figure*}[t]
\centering
\includegraphics[width=\textwidth]{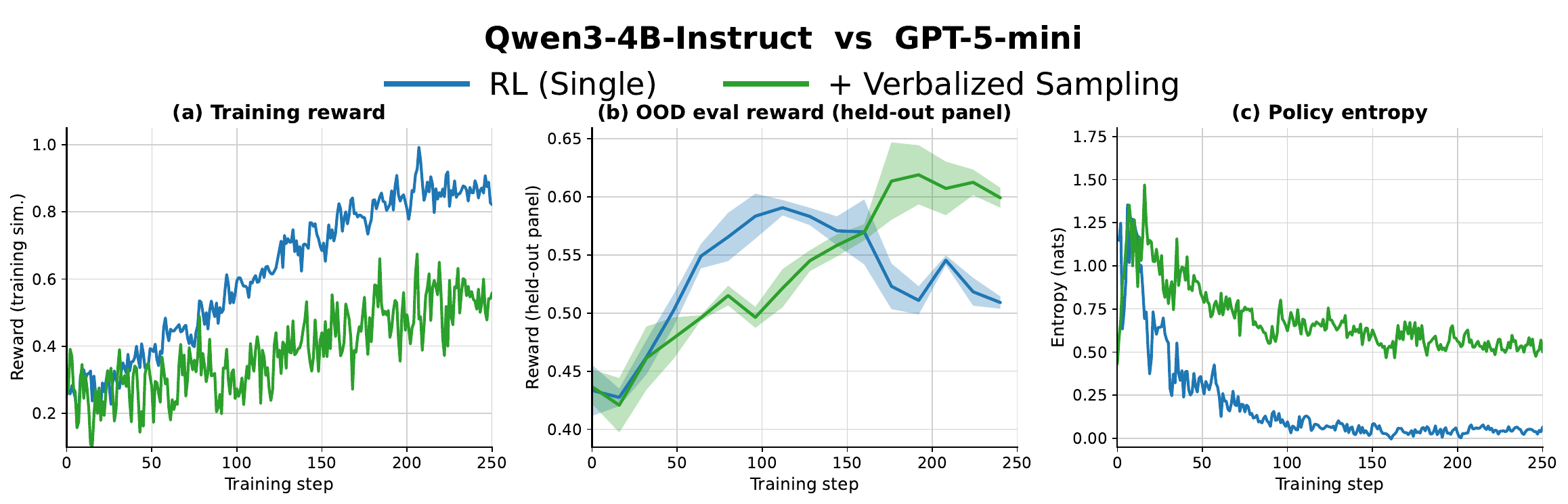}
\caption{\textbf{Verbalized Sampling mitigates simulator collapse against GPT-5-mini.} Three seeds per setting. \textbf{(a)} Training reward: both runs climb, with $+$VS reaching a slightly higher plateau because the gradient signal is preserved on more batches. \textbf{(b)} OOD eval reward on the held-out 6-model panel ($\pm 1\sigma$ shaded): $+$VS peaks much higher and degrades far less than RL (Single), narrowing the gap to the untrained baseline. \textbf{(c)} Policy entropy: RL (Single) collapses to near zero, while $+$VS holds entropy near $0.7$--$0.8$ nats throughout training. VS reduces but does not eliminate simulator collapse; the residual gap motivates the Co-Training experiments in \S\ref{sec:results}.}
\label{fig:vs_ablation}
\end{figure*}

\subsection{Verbalized Sampling on larger models}
\label{appendix:vs_gpt5}

In this part we show training dynamics against the larger GPT-5 simulator. Given the limited budget constraint and fair comparison in Table~\ref{tab:user_sim_results}, we use GPT-5-mini as the simulator across all settings in the main experiments. The original VS paper notes that smaller models suffer from ``cognitive overload'' when asked to verbalize a distribution while solving the task; against GPT-5 this is less of a concern, so VS's mitigation effect is more pronounced.

Figure~\ref{fig:vs_ablation_gpt5} shows the same three diagnostics against GPT-5. The collapse signature persists but is delayed: GPT-5 is less modal than GPT-5-mini, so the training reward climbs more slowly and tops out around $0.6$--$0.8$, and the entropy collapse for RL (Single) sets in much later than in the GPT-5-mini setting. OOD eval for RL (Single) still peaks at $\approx 0.65$--$0.67$ and then degrades. With Verbalized Sampling the training reward is noisier and lower, but OOD climbs steadily and ends near $0.70$, and entropy stays well above zero with substantial variance throughout. The qualitative story matches the GPT-5-mini ablation in \S\ref{appendix:vs_ablation}: VS reduces, but does not eliminate, simulator collapse.

\begin{figure*}[t]
\centering
\includegraphics[width=\textwidth]{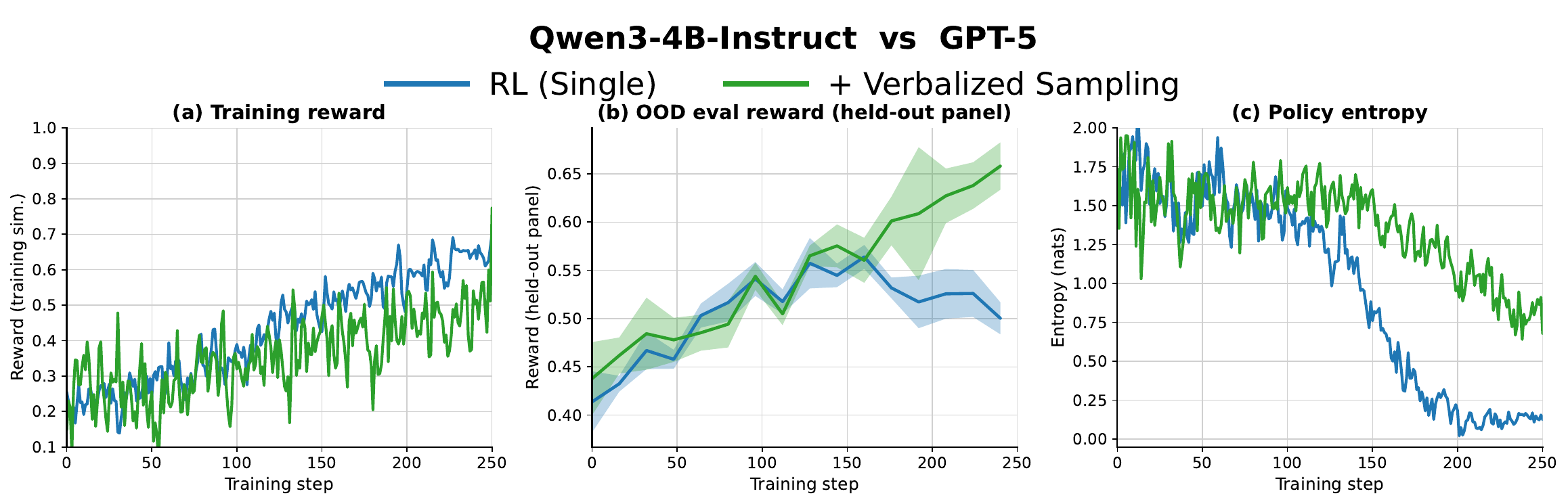}
\caption{\textbf{Verbalized Sampling against GPT-5.} Three seeds per setting. \textbf{(a)} Training reward: RL (Single) climbs to a plateau in the $0.6$--$0.8$ range; $+$VS is noisier and lower because the K-modal verbalized simulator gives less consistent reward signal. \textbf{(b)} OOD eval (held-out 6-model panel, $\pm 1\sigma$ shaded): RL (Single) peaks near $0.66$ and degrades; $+$VS climbs more slowly but ends near $0.70$. \textbf{(c)} Policy entropy: RL (Single) stays high for most of training and then crashes; $+$VS holds entropy above $1$ nat throughout with substantial fluctuation.}
\label{fig:vs_ablation_gpt5}
\end{figure*}

\subsection{Simulator-reward ablation}
\label{appendix:reward_ablation}

Q3b asks whether any choice of simulator reward yields the moving target, or whether the reward must be carefully shaped. We compare three simulator-reward variants (Table~\ref{tab:cotrain_reward}) against the no-co-training $K{=}3$ ensemble baseline (Figure~\ref{fig:cotrain_reward_ablation}). The two endpoints break the moving target, in mirrored ways. An \emph{adversarial} reward ($r_\phi = -r_\pi$) collapses the simulator to ${\sim}98\%$ refusal and drops eval reward to $0.07$. A \emph{cooperative} reward ($r_\phi = r_\pi$) pushes pushback to ${\sim}2\%$, letting the policy reward-hack a trivial helper while eval reward drops from $0.27$ to $0.17$. Only the \emph{curriculum} reward keeps opponent reward near $0.45$, the regime of maximum within-batch variance (Remark~\ref{rem:informative-variation}), and reaches ${\sim}0.40$ eval. Both extremes collapse the simulator onto a new dominant mode, and once the mode is fixed again the simulator-collapse chain rebinds: the moving target stops moving. Population Co-Training helps \emph{only when the simulator reward preserves variation across checkpoints}.

\begin{table}[h]
\centering
\small
\setlength{\tabcolsep}{6pt}
\renewcommand{\arraystretch}{1.1}
\caption{Simulator-reward variants. $r_\pi$ is the policy's per-rollout reward; $\sigma^2_\pi$ its within-group variance. Curriculum follows SPICE-style variance shaping~\citep{liu_spice_2025}.}
\label{tab:cotrain_reward}
\begin{tabular}{ll}
\toprule
\textbf{Variant} & \textbf{Simulator reward $r_\phi$} \\
\midrule
\colorbox{gray!15}{Adversarial} & $-r_\pi$ \\
\addlinespace[0.4ex]
\hdashline
\addlinespace[0.4ex]
\colorbox{gray!15}{Cooperative} & $r_\pi$ \\
\addlinespace[0.4ex]
\hdashline
\addlinespace[0.4ex]
\colorbox{gray!15}{Curriculum} & $\exp\!\left(-\dfrac{(\sigma^2_\pi - 0.25)^2}{0.02}\right)$ \\
\bottomrule
\end{tabular}
\end{table}

\begin{figure*}[t]
\centering
\includegraphics[width=0.78\textwidth]{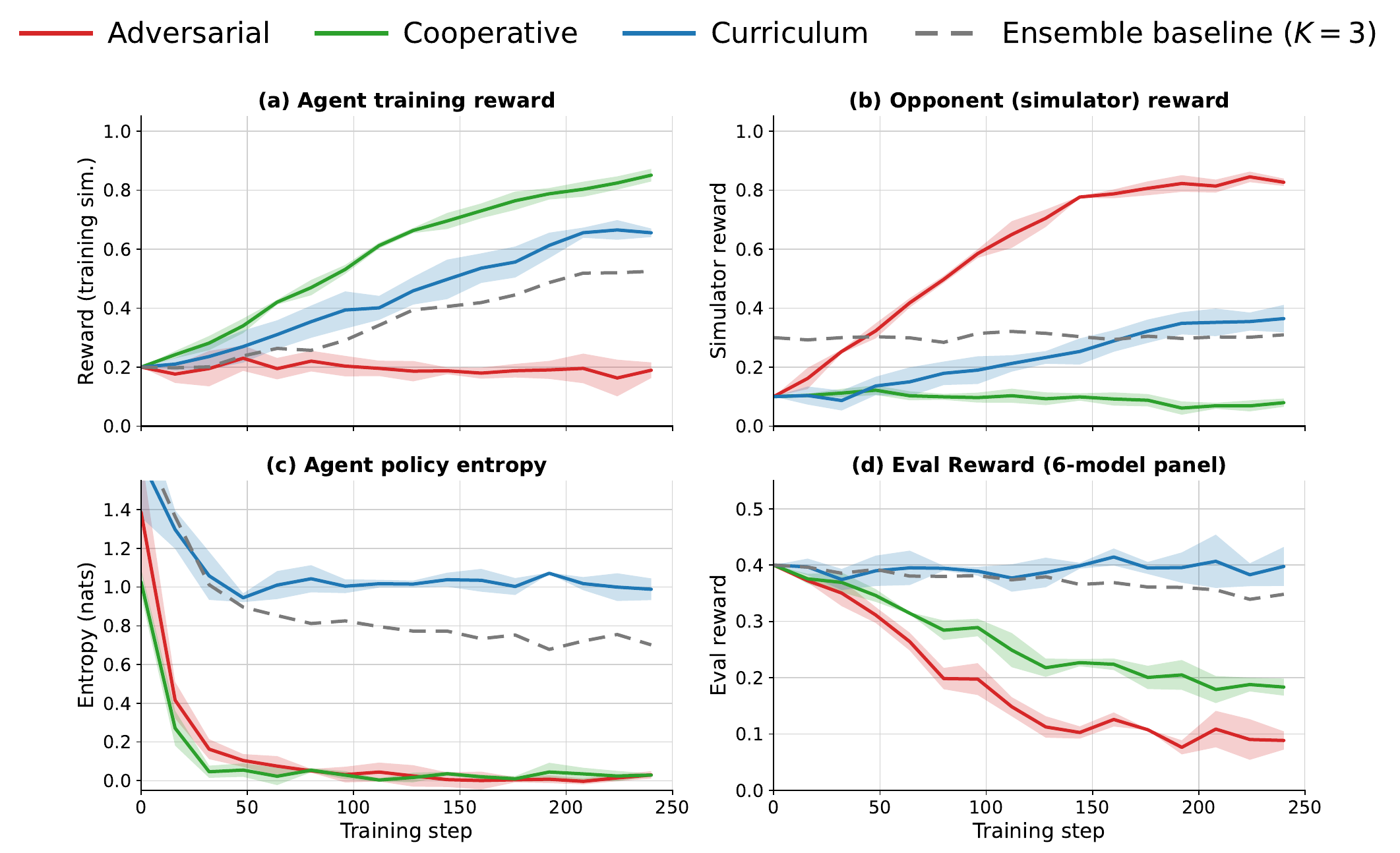}
\caption{\textbf{Co-Training requires a carefully chosen simulator reward ($\tau^2$-bench Retail).} \textit{Top-left:} agent training reward; cooperative reward (green) reaches the highest training reward by reward-hacking a trivially helpful simulator. \textit{Top-right:} opponent reward; only the curriculum reward (blue) stays balanced near $0.45$. \textit{Bottom-left:} agent policy entropy; both extremes collapse to $0.01$--$0.04$. \textit{Bottom-right:} held-out eval reward; only the curriculum reward beats the no-co-training ensemble baseline.}
\label{fig:cotrain_reward_ablation}
\end{figure*}

\subsection{Within-batch rollouts: from diverse openings to a single strategy}
\label{appendix:collapse_examples}

We illustrate the policy-side signature of simulator collapse by inspecting three within-batch rollouts from the same starting context at three training stages (early / mid / late). Early in training the agent samples varied strategies and the simulator responds with varied content; by mid-training the rollouts begin to share scaffolding; by late training all three rollouts within a batch are nearly word-for-word the same, which is what the geometric strategy-mass concentration of Corollary~\ref{cor:entropy-collapse} looks like in transcript space.

\subsection{Training on more models}
\label{appendix:training_olmo}

We replicate the main P4G and $\tau^2$-bench Retail experiments with Olmo-3-7B-Instruct~\citep{olmo2026olmo3} as the trainable agent, keeping every other element of the setup unchanged. Two differences relative to the Qwen3-4B-Instruct runs are notable. First, Olmo-3-7B-Instruct starts with higher base policy entropy than Qwen3-4B-Instruct ($\approx 1.85$ vs $\approx 1.55$ nats); its RLHF profile is less sharpened. Second, its Base task performance is lower ($0.34$ vs $0.40$ on $\tau^2$-Retail, $0.35$ vs $0.43$ on P4G), consistent with its smaller post-training budget. Despite these starting-point differences, the qualitative training dynamics match the Qwen3-4B-Instruct results: RL (Single) saturates training reward, peaks transiently on OOD eval, and crashes policy entropy from the higher initial point. Both Verbalized Sampling and Co-Training recover most of the OOD gap and preserve entropy, and Population Co-Training is the strongest method on both benchmarks. Figures~\ref{fig:appendix_curves_olmo_p4g} and~\ref{fig:appendix_curves_olmo_tau2} report the full curves.

\begin{figure*}[t]
\centering
\includegraphics[width=\textwidth]{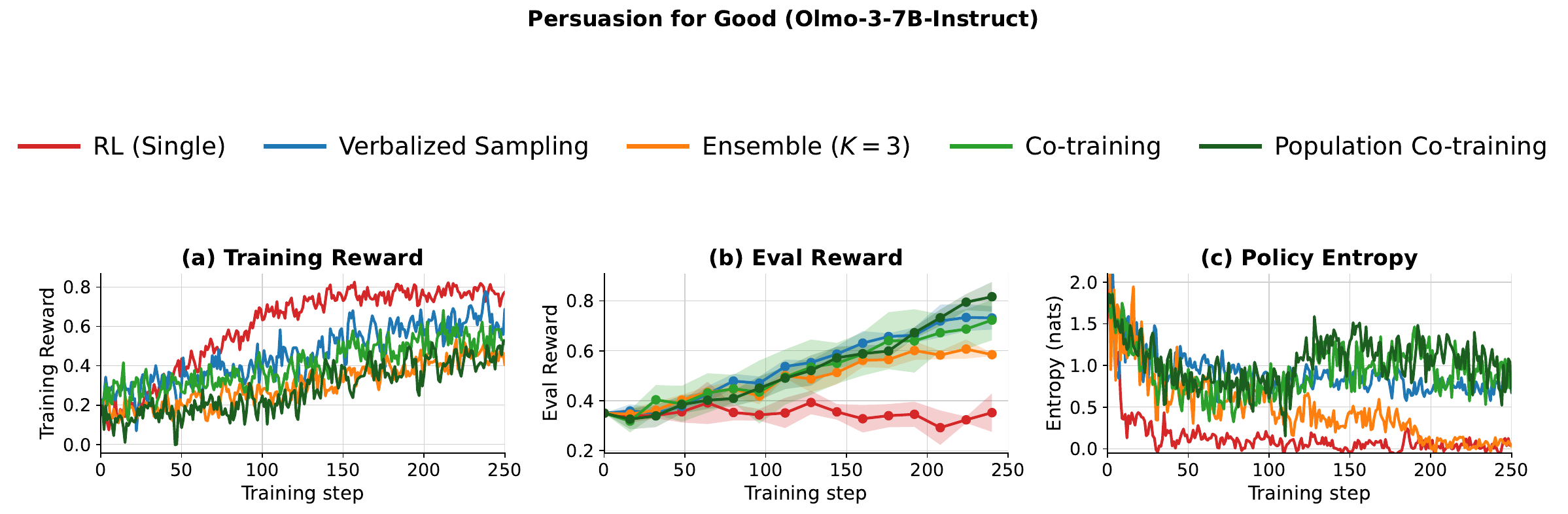}
\caption{\textbf{Persuasion for Good (Olmo-3-7B-Instruct).} Three panels: training reward, OOD eval reward, policy entropy. The qualitative simulator-collapse signature reproduces: RL (Single) saturates training reward, peaks transiently on OOD, and crashes its (higher initial) entropy; Verbalized Sampling, Ensemble, Co-Training, and Population Co-Training preserve entropy and close most of the held-out gap.}
\label{fig:appendix_curves_olmo_p4g}
\end{figure*}

\begin{figure*}[t]
\centering
\includegraphics[width=\textwidth]{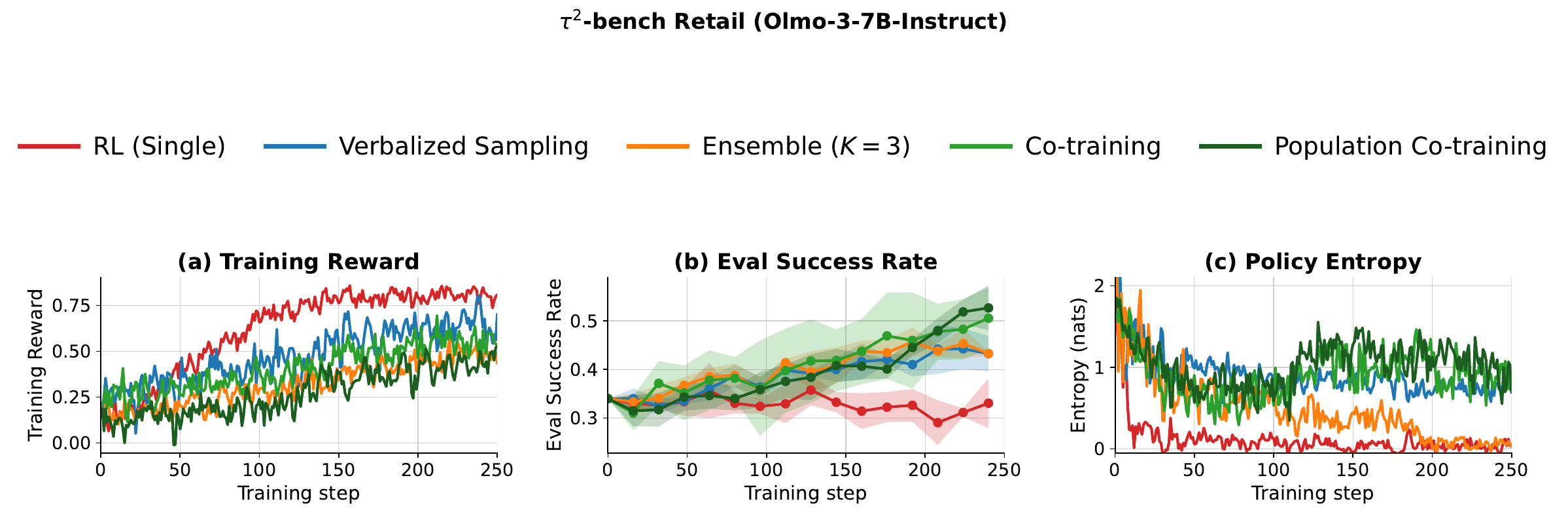}
\caption{\textbf{$\tau^2$-bench Retail (Olmo-3-7B-Instruct).} Same three panels as Figure~\ref{fig:appendix_curves_olmo_p4g}. Olmo-3-7B-Instruct's Base success rate is lower than Qwen3-4B-Instruct's ($0.34$ vs $0.40$), but the collapse-and-recovery pattern is the same.}
\label{fig:appendix_curves_olmo_tau2}
\end{figure*}

%%%%%%%%%%%%%%%%%%%%%%%%%%%%%%%%%%%%%%%%%%%%%%%%%%%%%%%%%%%%
% \input{checklist.tex}
%%%%%%%%%%%%%%%%%%%%%%%%%%%%%%%%%%%%%%%%%%%%%%%%%%%%%%%%%%%%

\end{document}